\documentclass[nohyperref]{article}

\RequirePackage{silence}
\WarningFilter{remreset}{The remreset package}

\usepackage{microtype}
\usepackage{graphicx}
\usepackage{subfigure}
\usepackage{booktabs} 

\usepackage{hyperref}
\usepackage{amsthm}
\usepackage{thmtools}


\usepackage[accepted]{icml2022}

\usepackage{amsmath}
\usepackage{amssymb}
\usepackage{mathtools}

\usepackage[capitalize,noabbrev]{cleveref}

\theoremstyle{plain}
\newtheorem{theorem}{Theorem}[section]
\newtheorem{proposition}[theorem]{Proposition}
\newtheorem{lemma}[theorem]{Lemma}

\theoremstyle{definition}
\newtheorem{definition}[theorem]{Definition}

\theoremstyle{remark}
\newtheorem{remark}[theorem]{Remark}

\usepackage[textsize=tiny]{todonotes}


\usepackage{centernot}  
\usepackage{xfrac}      
\usepackage{bbm}        

\usepackage{enumitem}   

\usepackage{parskip}    



\def\bfI{\mathbf{I}}



\def\calD{\mathcal{D}}

\def\calF{\mathcal{F}}
\def\calG{\mathcal{G}}
\def\calH{\mathcal{H}}
\def\calI{\mathcal{I}}
\def\calJ{\mathcal{J}}
\def\calK{\mathcal{K}}
\def\calL{\mathcal{L}}

\def\calN{\mathcal{N}}
\def\calO{\mathcal{O}}

\def\calS{\mathcal{S}}

\def\calU{\mathcal{U}}

\def\calX{\mathcal{X}}

\def\calZ{\mathcal{Z}}











\newcommand{\abs}[1]{\left\vert #1\right\vert}

\newcommand{\rbr}[1]{\left(#1\right)}
\newcommand{\sbr}[1]{\left[#1\right]}
\newcommand{\cbr}[1]{\left\{#1\right\}}
\newcommand{\abr}[1]{\left\langle#1\right\rangle}

\def\norm#1{\left\|#1\right\|}



\def\argmax{\mathop{\rm arg\,max}}
\def\argmin{\mathop{\rm arg\,min}}

\def\half{\frac 1 2}

\newcommand{\R}{\mathbb{R}}

\newcommand{\into}{\rightarrow}


\newcommand{\y}{y}
\newcommand{\yk}{y_k}
\newcommand{\ykk}{y_{k+1}}

\newcommand{\x}{u}
\newcommand{\xk}{u_k}

\newcommand{\xkk}{u_{k+1}}

\newcommand{\xbar}{\xbar{w}}



\newcommand{\etak}{\eta_k}



\newcommand{\grad}{\nabla f}


\newcommand{\diag}{\text{diag}}








\usepackage{tikz}
\usepackage{pgfplots}

\usepgfplotslibrary{fillbetween}
\usetikzlibrary{patterns}

\tikzset{
    font={\fontsize{12pt}{12}\selectfont},
}

\pgfplotsset{
    compat=1.5.1,
    primary/.style={color=black, style=solid, line width=1.5pt}, 
    secondary/.style={color=red, style=solid, line width=1.5pt}, 
}


\frenchspacing






\icmltitlerunning{Fast Convex Optimization for Two-Layer ReLU Networks: Equivalent Model Classes and Cone Decompositions}

\def\smallPDF{}

\begin{document}

\twocolumn[
	\icmltitle{Fast Convex Optimization for Two-Layer ReLU Networks: Equivalent Model Classes and Cone Decompositions}



	\icmlsetsymbol{equal}{*}

	\begin{icmlauthorlist}
		\icmlauthor{Aaron Mishkin}{cs_stanford}
		\icmlauthor{Arda Sahiner}{ee_stanford}
		\icmlauthor{Mert Pilanci}{ee_stanford}
	\end{icmlauthorlist}

	\icmlaffiliation{cs_stanford}{Department of Computer Science, Stanford University}
	\icmlaffiliation{ee_stanford}{Department of Electrical Engineering, Stanford University}

	\icmlcorrespondingauthor{Aaron Mishkin}{amishkin@cs.stanford.edu}

	\icmlkeywords{Machine Learning, ICML}

	\vskip 0.3in
]



\printAffiliationsAndNotice{} 

\begin{abstract}
    We develop fast algorithms and robust software for convex optimization of
    two-layer neural networks with ReLU activation functions.
    Our work leverages a convex reformulation of the standard weight-decay
    penalized training problem as a set of group-$\ell_1$-regularized
    \emph{data-local} models, where locality is enforced by polyhedral cone
    constraints.
    In the special case of zero-regularization, we show that this problem is
    exactly equivalent to \emph{unconstrained} optimization of a convex ``gated
    ReLU'' network with non-singular gates.
    For problems with non-zero regularization, we show that convex gated ReLU
    models obtain data-dependent approximation bounds for the ReLU training
    problem.
    To optimize the convex reformulations, we develop an accelerated proximal
    gradient method and a practical augmented Lagrangian solver.
    We show that these approaches are faster than standard training heuristics
    for the non-convex problem, such as SGD, and outperform commercial
    interior-point solvers.
    Experimentally, we verify our theoretical results, explore the
    group-$\ell_1$ regularization path, and scale convex optimization for
    neural networks to image classification on MNIST and CIFAR-10.
\end{abstract}





\section{Introduction}\label{sec:intro}


\begin{figure}[t]
	\centering
	\ifdefined\smallPDF
		\includegraphics[width=\linewidth]{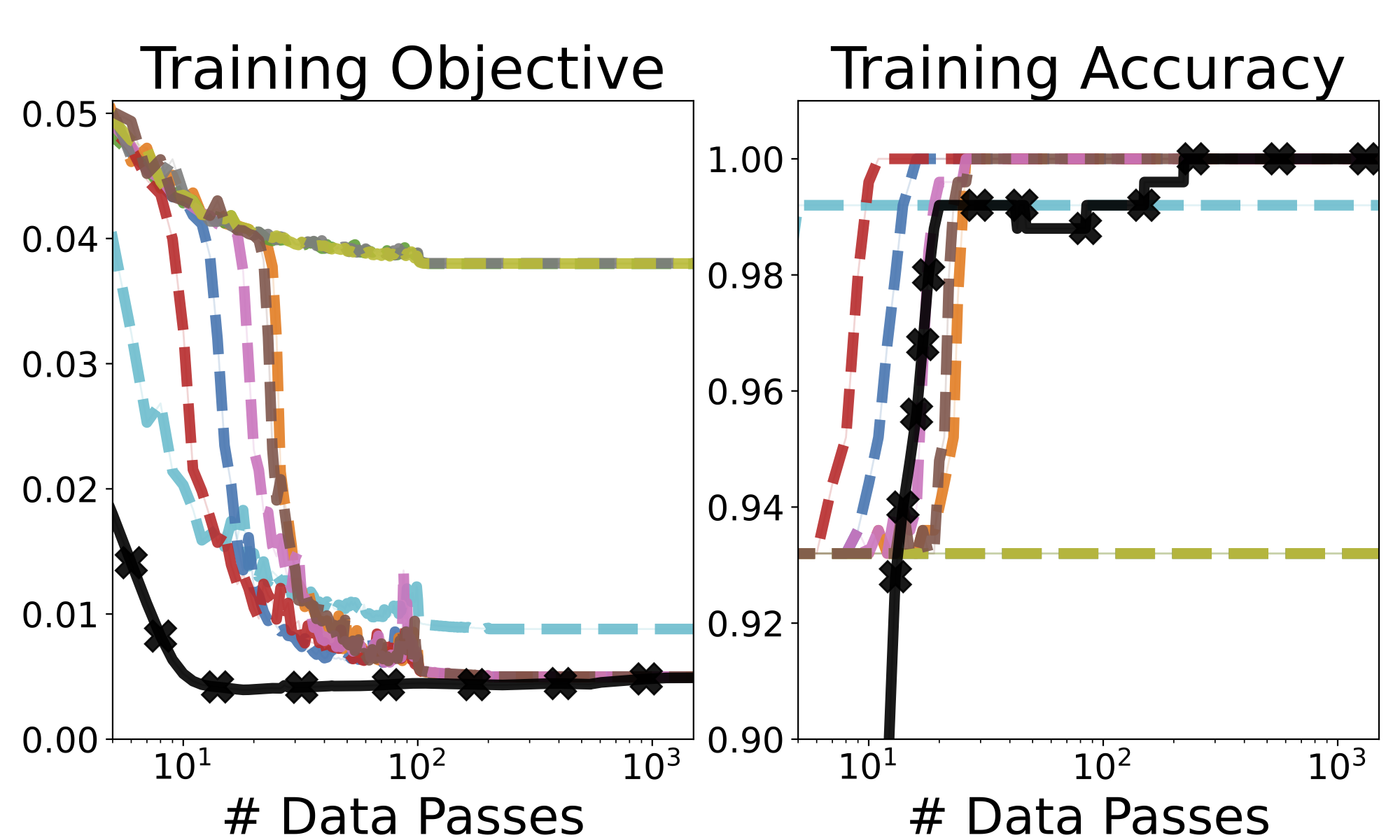}
	\else
		\includegraphics[width=\linewidth]{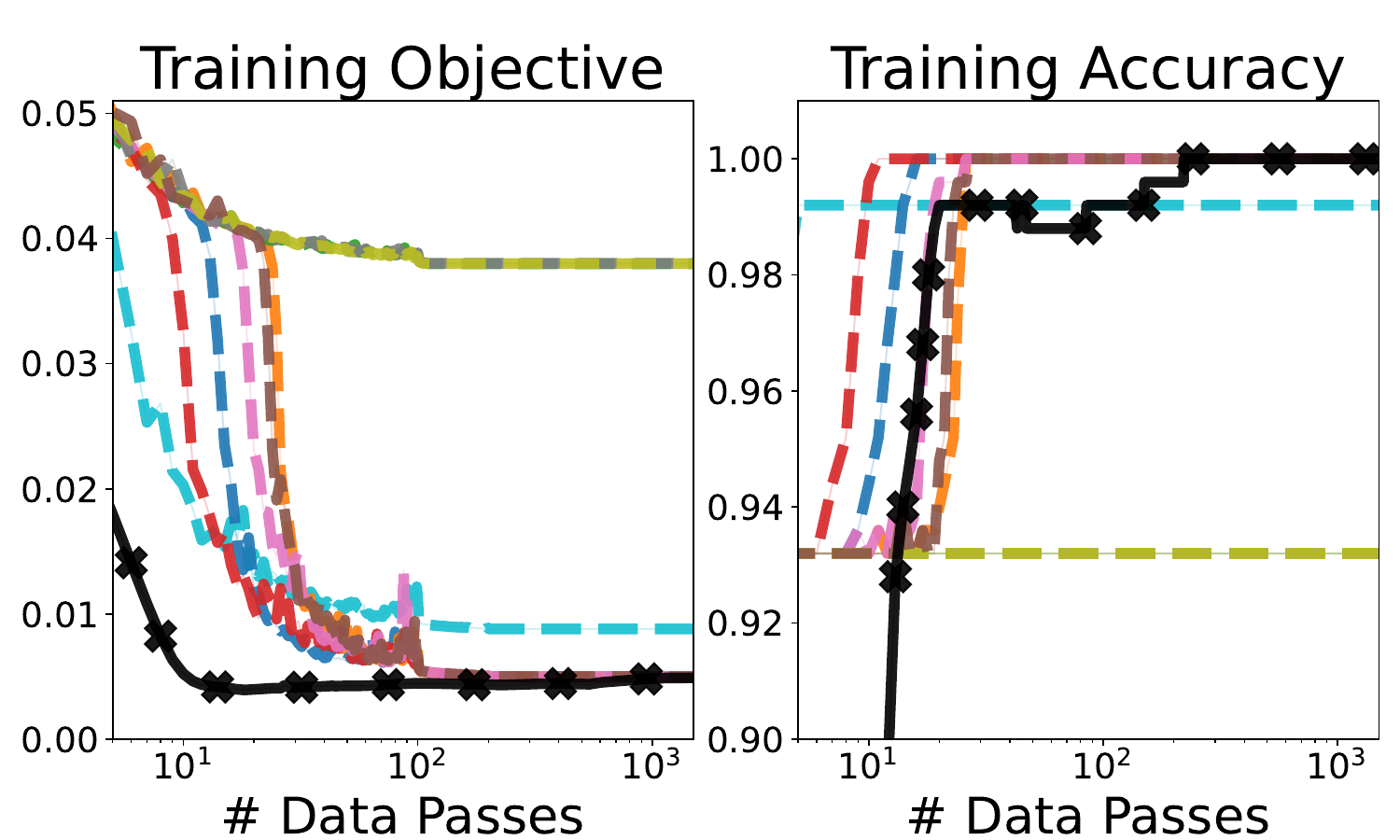}
	\fi
	\vspace{-1.5em}
	\caption{Convex (solid line) and non-convex (dashed) optimization of a two-layer ReLU network for a realizable synthetic classification problem.
		We plot only one run of the convex solver since they are nearly identical and all reach perfect accuracy.
		In contrast, \( 4/10 \) runs of SGD on the non-convex problem converge to sub-optimal stationary points.
	}%
	\label{fig:synthetic-classification}
	\vspace{-0.5cm}
\end{figure}

It is well-known that global optimization of neural networks is NP-Hard~\citep{blum1988npcomplete}.
Despite the theoretical difficulty, highly accurate models are trained in practice using stochastic gradient methods (SGMs)~\citep{bengio2012practical}.
Unfortunately, SGMs cannot guarantee convergence to a local optimum of the non-convex training loss~\citep{ge2015escaping} and existing methods rarely certify convergence to a stationary point of any type~\citep{goodfellow2016deeplearning}.
SGMs are also sensitive to hyper-parameters;
they converge slowly, to different stationary points~\citep{neyshabur2017implicit}, or even diverge depending on the choice of step-size.
Parameters like the random seed complicate replications and can produce model churn, where networks learned using the same procedure give different predictions for the same inputs~\citep{henderson2018deep, bhojanapalli2021reproducibility}.
See \autoref{fig:synthetic-classification} for an example.
For most applications, practitioners use domain knowledge and costly hyper-parameter search to cope with these challenges.

In contrast, we propose to optimize shallow models via \emph{convex reformulations} of the training objective.
Recent work by \citet{pilanci2020convexnn} uses duality theory to show two-layer neural networks with ReLU activations and weight decay regularization may be re-expressed as a linear model with a group-$\ell_1$ penalty and polyhedral cone constraints.
Subsequent research extends this model space, deriving convex formulations for convolutions~\citep{sahiner2020convex,ergen2021implicit,gupta2021exact}, vector-outputs~\citep{sahiner2021vector}, batch normalization \cite{ergen2021demystifying}, generative models \cite{sahiner2021hidden} and deeper networks~\citep{ergen2021beyond, ergen2021revealing}.
However, existing work is largely focused on model classes, rather than leveraging convexification to train neural networks.

This paper develops fast optimization algorithms for two-layer ReLU models by carefully studying the space of equivalent models.
We show that unregularized ReLU networks can be trained by decomposing the solution to an unconstrained generalized linear model (GLM) onto a difference of polyhedral cones.
With non-zero regularization, the same unconstrained problem yields a data-dependent approximation of the optimal solution which differs only in the norm of the model weights.
To fit this GLM, we develop a proximal-gradient method that combines the convex optimization toolbox with GPU acceleration.
We also use this optimizer as a sub-routine for an augmented Lagrangian method that quickly and robustly trains ReLU networks via the (constrained) convex reformulation.
Our deterministic optimizers give both convergence and optimality guarantees.

To summarize, our main contributions are the following:
\begin{itemize}
	\item A new class of \emph{unconstrained} convex optimization problems which are equivalent to training an unregularized ReLU model and approximation guarantees for the case of non-zero regularization.

	\item An accelerated proximal-gradient method for this unconstrained problem that improves the complexity of computing a global optimum from \(O(1/\epsilon^2)\) to \(O(1/\sqrt{\epsilon})\) iterations compared to subgradient methods.

	\item An augmented Lagrangian method for the constrained convex reformulation which uses our unconstrained solver as a sub-routine and outperforms commercial interior-point software such as MOSEK~\citep{mosek}.

	\item Extensive experiments which validate our theoretical results, carefully explore the properties of our optimization methods, and scale convex optimization for ReLU networks to MNIST and CIFAR-10.
\end{itemize}

Quality software is key to practical use of our methods.
As such, we also provide \texttt{scnn}, an open-source package for training neural networks by convex optimization.\footnote{\small \url{https://github.com/pilancilab/scnn}}

\subsection{Related Work}

Our work combines ideas from the literature on convex neural networks, accelerated methods, and constrained solvers.

\textbf{Convex Neural Networks}:
There have been repeated attempts to develop convex neural networks.
\citet{bengio2006convex} view two-layer neural networks as convex models, but their work requires the first-layer weights to be fixed.
Similarly, extreme learning machines \citep{huang2005elm} obtain a convex problem by using a random first-layer;
these models can obtain zero training error for over-parameterized problems
\citep{woodworth2020rich}, but do not learn parsimonious latent representations as in our approach.

\citet{bach2017breaking} analyze infinite-width two-layer networks; these methods are not implementable, but may be viewed as convex problems.
Other research considers the separate problem of neural networks for which the prediction function is convex~\citep{amos2017input, sivaprasad2021curious}.

In concurrent work, \citet{bai2022efficient} consider training two-layer
ReLU networks via convex reformulations using ADMM.
Their approach requires solving a linear system at each iteration,
or uses coordinate descent to solve the ADMM sub-problems.
In practice, our solvers scale to larger datasets
and allow for more activation patterns.

\textbf{Accelerated Proximal Gradient}:
\citet{beck2009fista, nesterov2007proximalgradient} were the first to extend optimal gradient methods~\citep{nesterov1983method} to composite problems.
Work since then includes extensions to stochastic~\citep{schmidt2011convergence} and non-convex~\citep{li2015accelerated} optimization.
See~\citet{parikh2014proximalsurvery} for a survey of proximal algorithms, including proximal gradient.

\textbf{Augmented Lagrangian Methods}: The convergence theory was initially developed by Rockafellar~\citep{rockafellar1976augmented, rockafellar1976monotone}.
More recent work includes practical guidelines~\citep{birgin2014pratical} and acceleration techniques~\citep{kang2015inexact}.
See \citet{bertsekas2014constrained} for exhaustive theoretical developments.

\section{Convex Reformulations}\label{sec:convex-formulations}

Let \( X \in \R^{n \times d} \) be a data matrix and \( y \in \R^n \) the associated targets.
We are interested in two-layer ReLU networks,
\begin{equation*}
	h_{W_1, w_2}(X) = \sum_{i=1}^m \rbr{X W_{1i}}_+ w_{2i},
\end{equation*}
where \( W_1 \in \R^{m \times d} \), \( w_2 \in \R^m \) are the weights of the first and second layers, \( m \) is the number of hidden units, and \( \rbr{\cdot}_+ = \max\cbr{\cdot, 0} \) is the ReLU activation.
Fitting \( h_{W_1, w_2} \) by minimizing convex loss \( \calL \) with weight decay (\( \ell_2 \)) regularization leads to the optimization problem (NC-ReLU),\vspace{-1em}
\begin{equation}\label{convex-forms:eq:non-convex-relu-mlp}
	\min_{W_1, w_2} \! \calL\Big(h_{W_1, w_2}(X), y \Big) + \frac{\lambda}{2}\sum_{i=1}^m \norm{W_{1i}}_2^2 + |w_{2i}|^2,\!
\end{equation}
where \( \lambda \geq 0 \) is the regularization strength.
While Problem~\ref{convex-forms:eq:non-convex-relu-mlp} is non-convex,
\citet{pilanci2020convexnn} show that there is an equivalent convex optimization problem with the same optimal value if \( m \geq m^* \) for some \( m^* \leq n + 1 \). Furthermore, \citet{wang2021hidden} showed that all optimal solutions to \eqref{convex-forms:eq:non-convex-relu-mlp} can be found via the convex problem.

\subsection{Sub-Sampled ReLU Convex Programs}\label{sec:relu-models}

The convex reformulation for the NC-ReLU objective is based on ``enumerating'' the possible activations of a single neuron in the hidden layer.
The activation patterns a ReLU neuron \( \rbr{X w}_{+} \) can take for fixed \( X \) are described by
\[ \calD_X = \cbr{D = \diag(\mathbbm{1}(X u \geq 0)) : u \in \R^d}, \]
which grows as \(|\calD_X| \in O(r(n/r)^r)\) for \(r := \text{rank}(X)\) \citep{pilanci2020convexnn}.
For \( D_i \in \calD_{X} \), the set of vectors \( u \) which achieve the corresponding activation pattern, meaning \( D_i X u = \rbr{X u}_+ \), is the convex cone,
\[
	\calK_i = \cbr{u \in \R^d : (2D_i - I) X u \succeq 0}.
\]
For any subset \( \tilde \calD \subseteq \calD_X \), we define the sub-sampled convex optimization problem (C-ReLU):
\begin{equation}\label{convex-forms:eq:convex-relu-mlp}
	\begin{aligned}
		\!\!\! \min_{v, w} \, & \calL\Big(\!\!\sum_{D_i \in \tilde \calD}\!\! D_i X (v_i \!-\! w_i), y\Big)\!+\!\lambda\!\sum_{D_i \in \tilde \calD}\!\norm{v_i}_2\!+\!\norm{w_i}_2. \\
		                      & \text{s.t.} \quad v_i, w_i \in \calK_i
	\end{aligned}
\end{equation}

\citet{pilanci2020convexnn} prove NC-ReLU and C-ReLU are equivalent using linear semi-infinite duality theory~\citep{ goberna2002semi}.
However, this result requires \( m \geq m^* \) and the full enumeration of the activations of a neuron: \( \tilde \calD = \calD_X \).
In practice, learning with \( \calD_{X} \) is computationally infeasible except for special cases where the data are low rank.
By introducing sub-sampled models, we relax the dependencies on \( m^* \) and \( \calD_X \) to simple inclusions involving \( \tilde \calD \).





\begin{restatable}{theorem}{convexDualityFree}\label{thm:duality-free}
	Suppose \( \rbr{W_1^*, w_2^*} \) and \( \rbr{v^*, w^*} \) are global minima of the NC-ReLU~\eqref{convex-forms:eq:mi-reformulation} and C-ReLU~\eqref{convex-forms:eq:convex-relu-mlp} problems, respectively.
	If the number of hidden units satisfies
	\[ m \geq b := \sum_{D_i \in \tilde \calD} \abs{\cbr{v^*_i : v^*_i \neq 0} \cup \cbr{w^*_i : w^*_i \neq 0} }, \]
	and the optimal activations are in the convex model,
	\[ \cbr{\text{\emph{diag}}\rbr{X W_{1i}^* \geq 0 : i \in [m]} } \subseteq \tilde \calD, \]
	then the two problems have same the optimal value.
\end{restatable}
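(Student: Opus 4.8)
The plan is to establish equality of optimal values by a two-sided sandwiching argument: I will show that a global minimizer of each problem can be transformed into a feasible point of the other with no larger objective. Both transformations rest on the positive homogeneity of the ReLU, namely $\rbr{X(tW)}_+(s/t) = \rbr{XW}_+ s$ for $t>0$, together with the elementary inequality $\norm{a}_2 |b| \le \half\rbr{\norm{a}_2^2 + |b|^2}$, which is tight exactly when $\norm{a}_2 = |b|$. The cone membership used throughout is immediate from the definitions: if $D_{j} = \diag\rbr{X W \ge 0}$ then $\rbr{2 D_{j} - I} X W \succeq 0$, so $W \in \calK_{j}$ and $D_{j} X W = \rbr{XW}_+$.

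\textbf{Direction C-ReLU $\le$ NC-ReLU.} Starting from $\rbr{W_1^*, w_2^*}$, I form for each neuron $i$ the product $g_i = W_{1i}^* w_{2i}^*$ and locate its pattern $D_{j(i)} = \diag\rbr{X W_{1i}^* \ge 0}$. The activation-inclusion hypothesis guarantees $D_{j(i)} \in \tilde\calD$, so $\calK_{j(i)}$ is present in the convex model; since this cone is closed under nonnegative scaling and $W_{1i}^* \in \calK_{j(i)}$, the signed product lies in it as well. I therefore route $g_i$ into the block $v_{j(i)}$ when $w_{2i}^* \ge 0$ and $-g_i$ into $w_{j(i)}$ when $w_{2i}^* < 0$, aggregating over neurons sharing a pattern. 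Convexity of each cone keeps the aggregates feasible, and $\sum_i D_{j(i)} X g_i = \sum_{D_j \in \tilde\calD} D_j X (v_j - w_j)$, so the loss is unchanged; the group-$\ell_1$ penalty is then bounded by $\lambda \sum_i \norm{g_i}_2 \le \frac{\lambda}{2} \sum_i \rbr{\norm{W_{1i}^*}_2^2 + |w_{2i}^*|^2}$ via the triangle inequality and the tightness inequality above.

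\textbf{Direction NC-ReLU $\le$ C-ReLU.} From $(v^*, w^*)$ I build an explicit network: for every nonzero block $v_i^*$ I allocate one neuron with $W_1 = v_i^*/\sqrt{\norm{v_i^*}_2}$ and positive second-layer weight $w_2 = +\sqrt{\norm{v_i^*}_2}$, and for every nonzero $w_i^*$ a neuron with the same first-layer scaling but $w_2 = -\sqrt{\norm{w_i^*}_2}$. Because $v_i^*, w_i^* \in \calK_i$, the cone constraint gives $\rbr{X v_i^*}_+ = D_i X v_i^*$, so positive homogeneity yields neuron outputs $D_i X v_i^*$ and $-D_i X w_i^*$ respectively; summing reproduces $\sum_i D_i X(v_i^* - w_i^*)$. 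The balanced scaling makes each neuron's weight-decay cost exactly $\norm{v_i^*}_2$ or $\norm{w_i^*}_2$, so the total penalty equals the convex penalty. This construction uses exactly $b$ neurons, so the budget $m \ge b$ (zeroing any surplus neurons) ensures feasibility for NC-ReLU.

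Combining the two inequalities gives equality of optimal values. I expect the main obstacle to be the bookkeeping in the first direction: verifying that the signed, rescaled neuron weights genuinely lie in the correct cones $\calK_{j(i)}$ relies on the activation-inclusion hypothesis and careful sign handling, and aggregating several neurons with a common pattern into a single $(v_j, w_j)$ pair while simultaneously preserving cone membership and the norm bound is where the argument is most delicate. The second direction is more mechanical once the $b$-neuron count is matched to the hypothesis $m \ge b$.
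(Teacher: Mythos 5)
Your proof is correct and follows essentially the same two-sided construction as the paper: cone membership of $W_{1i}^*$ in $\calK_{j(i)}$ via its own activation pattern, aggregation of neurons sharing a pattern, balanced rescaling $W_{1}=v_i^*/\sqrt{\norm{v_i^*}_2}$, $w_2=\sqrt{\norm{v_i^*}_2}$ in the reverse direction, and the triangle/Young inequalities to control the penalty. The only organizational difference is that the paper first passes through a mixed-integer reformulation with $w_2\in\{-1,1\}^m$ (Lemma A.1) and then handles the merging of same-pattern neurons by an optimality-contradiction argument showing the merge preserves the value exactly, whereas you work directly with the weight-decay objective and replace that contradiction step with a one-line triangle-inequality bound on the aggregated group-$\ell_1$ penalty --- a modest but genuine simplification that yields the needed inequality $d^*\le p^*$ without arguing about collinearity of the merged weights.
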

See \cref{app:convex-forms} for proof.
The advantages of this theorem over existing results are (i) the simple and duality-free proof, and (ii) the dependence on \( \tilde \calD \), which we show in \cref{sec:experiments} can be much smaller than \( \calD_X \) while still performing comparably to NC-ReLU.
\cref{thm:duality-free} also reveals that \( m^* \) is determined by the number of active ``neurons'' at the optimal solution of the full C-ReLU problem with \( \tilde \calD = \calD_X \).




\subsection{Unconstrained Relaxation: Gated ReLUs}

Solving C-ReLU using scalable first-order methods typically requires projecting on \( \calK_i \), which is an expensive quadratic program in the general case.
To circumvent this, we consider the following unconstrained relaxation (C-GReLU):
\begin{equation}\label{convex-forms:eq:unconstrained-relu-mlp}
	\min_{u} \; \calL\Big(\sum_{D_i \in \tilde \calD} D_i X u_i, y\Big)+ \lambda \sum_{D_i \in \tilde \calD} \norm{u_i}_2.
\end{equation}
At first look, this problem is a high-dimensional GLM with group-\( \ell_1 \) regularization.
In fact, C-GReLU is the convex re-formulation of another neural network optimization problem.
Let \( \calG \subset \R^d \) and consider the model,
\[
	h_{W_1, w_2}(X) = \sum_{g_i \in \calG} \phi_{g}(X, W_{1i}) w_{2i},
\]
where \( \phi_{g}(X, u) = \text{diag}(\mathbbm{1}(Xg \geq 0)) X u \) is a ``gated ReLU'' activation function with fixed gate vector \( g \) \citep{fiat2019decoupling}.
C-GReLU is equivalent to training this gated ReLU network.

\begin{restatable}{theorem}{unconstrainedEquivalence}\label{thm:unconstrained-equivalence}
	Let \( g_i \in R^d \) such that \(\text{diag}(X g_i \geq 0) = D_i \) and \( \tilde \calG = \cbr{g_i : D_i \in \tilde \calD} \).
	Then, C-GReLU is equivalent to the following gated ReLU problem (NC-GReLU):
	\begin{equation}\label{convex-forms:eq:gated-relu-mlp}
		\!\min_{W_1, w_2}\! \calL\Big(\!\!\sum_{g_i \in \tilde \calG }\!\!\phi_{g_i}(X, W_{1i})w_{2i}, y\Big)\!+\!\frac{\lambda}{2} \!\! \sum_{g_i \in \tilde \calG}\!\!\norm{W_{1i}}_2^2 + w_{2i}^2. \!\!
	\end{equation}
\end{restatable}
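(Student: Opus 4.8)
The plan is to prove equality of optimal values by a two-sided inequality, using a change of variables that collapses each pair of layer weights into a single vector together with the classical equivalence between squared-\(\ell_2\) weight decay and the group-\(\ell_1\) gauge. First I would note that, by the definition of \( \tilde\calG \), we have \(\diag(\mathbbm{1}(Xg_i \geq 0)) = D_i\), so the gated activation satisfies \(\phi_{g_i}(X, W_{1i}) = D_i X W_{1i}\). Hence the NC-GReLU prediction is \(\sum_i D_i X (W_{1i} w_{2i})\), which matches the C-GReLU prediction exactly under the substitution \(u_i = W_{1i} w_{2i}\). This identity handles the data-fitting term \(\calL\) in both directions and reduces the whole problem to comparing the two regularizers.

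The core step is then the penalty. For the forward direction (NC-GReLU to C-GReLU), given any \((W_1, w_2)\) I set \(u_i = W_{1i} w_{2i}\) and apply AM-GM: since \(w_{2i}\) is scalar, \(\lambda\norm{u_i}_2 = \lambda\abs{w_{2i}}\,\norm{W_{1i}}_2 \leq \tfrac{\lambda}{2}\rbr{\norm{W_{1i}}_2^2 + w_{2i}^2}\), so the C-GReLU objective at \(u\) is at most the NC-GReLU objective at \((W_1, w_2)\); taking infima yields one inequality. For the reverse direction, given any \(u\) I construct weights that attain equality in AM-GM: when \(u_i \neq 0\) take \(w_{2i} = \norm{u_i}_2^{1/2}\) and \(W_{1i} = u_i / \norm{u_i}_2^{1/2}\), so that \(W_{1i} w_{2i} = u_i\), \(\norm{W_{1i}}_2 = \abs{w_{2i}}\), and the penalty \(\tfrac{\lambda}{2}\rbr{\norm{W_{1i}}_2^2 + w_{2i}^2}\) collapses to exactly \(\lambda\norm{u_i}_2\); when \(u_i = 0\) take \(W_{1i} = 0\) and \(w_{2i} = 0\). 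This produces an NC-GReLU point with the same objective as \(u\), giving the matching inequality, so the optimal values coincide. The same construction exhibits an explicit map between the two sets of minimizers.

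I do not expect a serious obstacle, as this is the standard reduction from a rank-one weight-decay parametrization to its convex gauge. The only points requiring care are the degenerate case \(u_i = 0\) and verifying that the rescaling in the reverse direction is legitimate, i.e. that rescaling \((W_{1i}, w_{2i})\) changes neither the product \(W_{1i} w_{2i}\) nor the induced activation. The latter holds because the gate \(g_i\), and hence \(D_i\), is fixed independently of the weights, so \(\phi_{g_i}\) is linear in \(W_{1i}\) and the activation pattern is unaffected; this is precisely what distinguishes the gated model from the genuine ReLU and lets the equivalence go through without any duality argument.
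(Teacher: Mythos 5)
Your proof is correct and follows essentially the same route as the paper's: both reduce the gated activation to \( D_i X W_{1i} \), collapse the two layers via \( u_i = W_{1i} w_{2i} \), and use Young's inequality (your AM--GM step) together with the explicit square-root rescaling to match the weight-decay penalty with the group-\(\ell_1\) gauge. Your forward direction is in fact slightly streamlined, since you bound the infimum directly rather than first arguing that equality in Young's inequality must hold at a global minimizer, but the substance is identical.
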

See \cref{app:convex-forms} for proof.
We can use \cref{thm:unconstrained-equivalence} to fit Gated ReLU networks by (much easier) unconstrained minimization.
However, as the next section shows, we can also leverage C-GReLU to approximate or exactly solve the original ReLU problem.


\section{Equivalence of ReLU and Gated ReLU}\label{sec:relu-grelu-equivalence}

This section builds upon our convex re-formulations to show that the full C-ReLU problem is equivalent to a sub-sampled C-GReLU problem up to the norm of their optimal solutions.
As a consequence, we give an approximation algorithm for ReLU networks that first computes the solution to C-GReLU and then solves an auxiliary \emph{cone decomposition} problem.
The cone decomposition can be formulated as a linear program (LP) or second-order cone program (SOCP), and admits a closed form solution when \( X \) is full row-rank.
Before presenting these fully-general results, we study the unregularized setting, where we show the approximation is exact.
All proofs are deferred to \cref{app:relu-grelu-equivalence}.

\begin{figure}
    \centering


\begin{tikzpicture}[scale=1,
		declare function={
				cone_u(\x)= -\x/3;
				cone_l(\x)= -4*\x/5;
			}
	]
    \begin{axis}[width=1.1\linewidth, height=5cm,
			axis lines=center, yticklabels={,,}, xticklabels={,,},
			ymin=-4, ymax=4, ytick={-5,...,5}, ylabel=$$, x axis line style={-},
				xmin=-6, xmax=6, xtick={-5,...,5}, xlabel=$$, y axis line style={-},
		]
		\addplot[name path=cone_u, domain=-6:6, samples=100, line width=1pt]{cone_u(x)};
		\addplot[name path=cone_l, domain=-6:6, samples=200, line width=1pt]{cone_l(x)};

		\addplot fill between[
				of = cone_u and cone_l,
				split, 
				every even segment/.style = {fill=blue, fill opacity=0.3},
				every odd segment/.style  = {fill=teal, fill opacity=0.3}
			];

		\node[circle, fill, inner sep=1pt] at (axis cs:0,0) {};

		\node[label={0:$u_i$}, circle, fill, inner sep=1.8pt] (u) at (axis cs:2,1) {};
		\node[label={90:$v_i$}, circle, fill, inner sep=1.8pt] (v) at (axis cs:25/7+2, -25/21 - 2/3) {};
		\node[label={90:$w_i$}, circle, fill, inner sep=1.8pt] (w) at (axis cs:-25/7, 20/7) {};

		\node[label={0:$\calK_i$}] at (axis cs:4,-2.5) {};
		\node[label={180:$-\calK_i$}] at (axis cs:-3.75,2.5) {};
        
		\draw [->, dashed, draw=red, line width = 0.4mm] (u) edge (w);
		\draw [->, dashed, draw=red, line width = 0.4mm] (u) edge (v);
	\end{axis}

\end{tikzpicture}%
    \caption{An illustration of the Cone Decomposition (CD) procedure: \( u_i \) is decomposed onto the Minkowski difference \( \calK_i - \calK_i \).}%
    \label{fig:cone-decomp}
    \vspace{-0.3cm}
\end{figure}
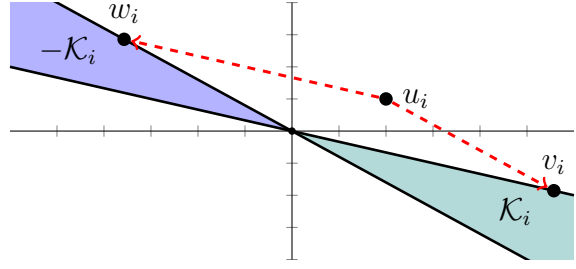

Let \( \lambda = 0 \) and consider the C-GReLU problem~\eqref{convex-forms:eq:unconstrained-relu-mlp}.
For each \( D_i \in \tilde \calD \), we seek to decompose the optimal data-local models as \( u^*_i = v_i - w_i \in \calK_i - \calK_i \).
If these decompositions exist, collecting them into \( (v, w) = \cbr{(v_i, w_i)} \) gives a feasible point for the C-ReLU problem with the same optimal objective value as C-GReLU.
The next proposition gives sufficient conditions on the data for this to happen.
\begin{restatable}{proposition}{nonSingularCones}\label{prop:non-singular-cones}
    If \( X \) is full row-rank, then \( \calK_i - \calK_i = \R^d  \) for every \( D_i \in \calD_{\calX} \).
    As a result, the C-ReLU, C-GReLU, NC-ReLU, and NC-GReLU problems are all equivalent.
\end{restatable}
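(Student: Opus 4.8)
The plan is to split the proposition into its two assertions: the purely geometric identity $\calK_i - \calK_i = \R^d$, and the resulting chain of problem equivalences. For the geometric part I would begin by rewriting the cone in a transparent form. Writing $2D_i - I = \diag(s_i)$ for the sign vector $s_i \in \cbr{-1,+1}^n$ associated with the pattern $D_i$, the defining inequality $(2D_i - I) X u \succeq 0$ reads $s_{ij}(Xu)_j \geq 0$ coordinate-wise. Hence $\calK_i = \cbr{u \in \R^d : Xu \in O_i}$ is exactly the preimage under $X$ of the closed orthant $O_i = \cbr{z \in \R^n : s_{ij} z_j \geq 0}$ determined by $s_i$.

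The heart of the argument is then to show that this preimage has nonempty interior in $\R^d$. This is where full row-rank enters: $\text{rank}(X)=n$ means $u \mapsto Xu$ is surjective onto $\R^n$, so the open orthant $\cbr{z : s_{ij} z_j > 0}$, which is nonempty and open, has a preimage that is nonempty (by surjectivity) and open (by continuity of $X$), and this open set is contained in $\calK_i$. A convex cone with nonempty interior spans the whole space, and for any convex cone $K$ one has the elementary identity $K - K = \text{span}(K)$ (split a linear combination into its positive and negative parts, each of which lies in $K$). Combining these facts gives $\calK_i - \calK_i = \text{span}(\calK_i) = \R^d$, and since the orthant construction is identical for every pattern, this holds for all $D_i \in \calD_X$.

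For the equivalences I would first observe that C-GReLU is a relaxation of C-ReLU: given any C-ReLU feasible $(v,w)$, setting $u_i = v_i - w_i$ leaves the data-fit term $\calL\rbr{\sum_i D_i X u_i, y}$ unchanged while $\norm{u_i}_2 \leq \norm{v_i}_2 + \norm{w_i}_2$, so the C-GReLU objective is no larger and its optimal value is at most that of C-ReLU. For the reverse inequality I would invoke the geometric identity in the unregularized regime $\lambda = 0$ set up just above the proposition: taking a C-GReLU minimizer $u^*$, the identity $\calK_i - \calK_i = \R^d$ supplies a decomposition $u_i^* = v_i - w_i$ with $v_i, w_i \in \calK_i$, which is C-ReLU feasible and, because the regularizers vanish, attains exactly the C-GReLU optimal value. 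The two convex problems therefore share an optimal value, and chaining this with \cref{thm:duality-free} (NC-ReLU equivalent to C-ReLU, given enough hidden units) and \cref{thm:unconstrained-equivalence} (NC-GReLU equivalent to C-GReLU) closes the loop on all four.

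I expect the geometric step to be the crux, though it is genuinely short once $\calK_i$ is recognized as an orthant preimage; the one point requiring care is that surjectivity is precisely what promotes a nonempty preimage to one with nonempty \emph{interior}. The subtler modeling point is that the norm-preserving decomposition needed for the reverse inequality exists for a generic $u_i^*$ only when $\lambda = 0$ -- for positive $\lambda$ one has the strict gap $\norm{u_i}_2 < \norm{v_i}_2 + \norm{w_i}_2$ unless $u_i^*$ lies in the lineality space $\calK_i \cap (-\calK_i)$ -- which is exactly why this exactness statement is confined to the unregularized setting, with the general-$\lambda$ case deferred to the cone-decomposition results that follow.
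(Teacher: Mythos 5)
Your proof is correct, and its geometric half takes a genuinely different --- and cleaner --- route than the paper's. The paper picks a relative interior point \( \bar w \) of \( \calK_i \), supposes one of the defining inequalities is tight at \( \bar w \), and perturbs \( \bar w \) along the component of the corresponding row orthogonal to the other tight rows (this is where linear independence of the rows, i.e.\ full row-rank, enters) to contradict relative interiority; it concludes \( (2D_i - I)X\bar w \succ 0 \), so \( \calK_i \) has an interior point, and then invokes \cref{lemma:affine-characterization} (\( \calK_i \) has nonempty interior iff \( \calK_i - \calK_i = \R^d \)). You instead recognize \( \calK_i = X^{-1}(O_i) \) for a closed orthant \( O_i \) and use surjectivity of \( X \) (equivalent to full row-rank) to pull the nonempty open orthant back to a nonempty open subset of \( \calK_i \); combined with \( K - K = \operatorname{span}(K) \) for convex cones, this gives the same conclusion with no case analysis or perturbation bookkeeping, and it isolates surjectivity as the structural reason the result holds. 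The equivalence half matches the paper: decompose the C-GReLU optimum onto \( \calK_i - \calK_i \) to get a C-ReLU-feasible point of equal cost (valid because \( \lambda = 0 \) in this section), note the reverse relaxation direction via the triangle inequality, and chain through \cref{thm:duality-free} and \cref{thm:unconstrained-equivalence}. One small quibble with your closing aside: a norm-preserving decomposition of \( u_i^* \) exists whenever \( u_i^* \in \calK_i \cup (-\calK_i) \) (take the other summand to be zero), not only when it lies in the lineality space \( \calK_i \cap (-\calK_i) \); this does not affect the argument.
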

Unfortunately, \cref{prop:non-singular-cones} does not extended to \( n > d \);
in \cref{prop:col-rank-counter-example}, we give full-rank \( X \) for which some \( \calK_i \) is contained in a subspace of \( \R^d \), implying \( \calK_i - \calK_i \subset \R^d \).
We call such cones (and associated gate vectors) \emph{singular}.
\begin{restatable}{proposition}{coneContainment}\label{prop:cone-containment}
    Suppose \( \calK_i \) is singular for \( D_i \in \calD_{X} \).
    Then \( \exists D_j \in \calD_{X} \) such that \( \calK_j - \calK_j = \R^d \) and \( \calK_i \subset \calK_j \).
\end{restatable}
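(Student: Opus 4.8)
The key observation is that for any convex cone the Minkowski difference coincides with the linear span, $\calK_j - \calK_j = \mathrm{span}(\calK_j)$. Hence $\calK_j$ is non-singular (i.e.\ $\calK_j - \calK_j = \R^d$) exactly when it is full-dimensional, equivalently when it has nonempty interior, whereas $\calK_i$ singular means it lies in a proper subspace and so has empty interior. The plan is therefore to view each $\calK_i$ as a cell of the central hyperplane arrangement generated by the rows of $X$, and to show that the lower-dimensional cell $\calK_i$ lies in the closure of some full-dimensional cell $\calK_j$.

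Write $a_k^\top$ for the $k$-th row of $X$ and $s_{i,k} := 2 (D_i)_{kk} - 1 \in \{\pm 1\}$, so that $\calK_i = \{u : s_{i,k}\, a_k^\top u \ge 0 \text{ for all } k\}$. First I would fix a point $\bar u$ in the relative interior of $\calK_i$ and split the rows into the implicit-equality set $E = \{k : a_k^\top u = 0 \ \forall u \in \calK_i\}$ and its complement; by the standard description of the relative interior of a polyhedral cone, $a_k^\top \bar u = 0$ for $k \in E$ and $s_{i,k}\, a_k^\top \bar u > 0$ for $k \notin E$. Since $\calK_i$ is singular, its span $\{u : a_k^\top u = 0 \ \forall k \in E\}$ is a proper subspace, so $E$ contains a row with $a_k \neq 0$. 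I would then pick a direction $p$ with $a_k^\top p \neq 0$ for every $k \in E$ with $a_k \neq 0$ — possible because finitely many hyperplanes cannot cover $\R^d$ — and set $v = \bar u + \epsilon p$ for $\epsilon > 0$ small. Let $D_j = \diag(\mathbbm{1}(Xv \ge 0))$ and let $\calK_j$ be the associated cone.

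It then remains to verify the three required properties. For small $\epsilon$, the coordinates $(Xv)_k$ with $k \notin E$ keep the strict sign of $\bar u$, while for $k \in E$ with $a_k \neq 0$ we have $(Xv)_k = \epsilon\, a_k^\top p \neq 0$; the only vanishing coordinates are the identically-zero rows $a_k = 0$, for which $(D_j)_{kk} = 1$ is forced. Thus $v$ realizes its own sign pattern strictly, giving $v \in \mathrm{int}(\calK_j)$, so $D_j \in \calD_X$ and $\calK_j$ is full-dimensional, i.e.\ $\calK_j - \calK_j = \R^d$. For the inclusion, take any $u \in \calK_i$: on strict rows $k \notin E$ the sign is unchanged, $s_{j,k} = s_{i,k}$, so $s_{j,k}\, a_k^\top u \ge 0$; on equality rows $k \in E$ we have $a_k^\top u = 0$, so the $\calK_j$-constraint holds trivially regardless of $s_{j,k}$. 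Hence $\calK_i \subseteq \calK_j$, and the inclusion is strict because $\calK_j$ is full-dimensional while $\calK_i$ lies in a proper subspace.

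The main obstacle is not any single computation but getting the framing right: recognizing that singularity is precisely empty interior via $\calK_j - \calK_j = \mathrm{span}(\calK_j)$, and that the perturbed sign pattern is genuinely realizable as an element of $\calD_X$, which is what licenses passing back to the cone $\calK_j$. The one technical subtlety to handle carefully is the asymmetric, non-strict definition of $D_i$ through $\mathbbm{1}(Xu \ge 0)$, in particular the identically-zero rows of $X$, which are always assigned $(D_j)_{kk}=1$ and therefore contribute only trivial constraints that obstruct neither full-dimensionality nor the inclusion.
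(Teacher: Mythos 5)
Your proof is correct, and it reaches the result by a genuinely different mechanism than the paper, even though both arguments rest on the same structural fact: the constraints that force $\calK_i$ into a proper subspace come exactly from rows orthogonal to $\mathrm{aff}(\calK_i)$ (your implicit-equality set $E$ is the paper's orthogonal index set $\calO$, cf.\ Lemmas \ref{lemma:orthogonality} and \ref{lemma:tight-constraints}), and flipping signs on those rows cannot shrink $\calK_i$ because they annihilate it. Where you diverge is in how the new sign pattern is produced. The paper builds it abstractly via an inductive sign-assignment lemma (\cref{lemma:sign-assignment}), adding one half-space at a time and arguing that a full-dimensional intersection survives; it then has to separately exhibit an interior point of $\calK_j$ by combining an interior point of the reassigned block with a relative-interior point of $\calK_i$ (the step $\bar z = \bar y + \alpha \bar w$). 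You instead perturb a relative-interior point $\bar u$ of $\calK_i$ in a single generic direction $p$ avoiding the finitely many hyperplanes $\{a_k^\top p = 0\}$, $k \in E$, and read the pattern $D_j$ off from $v = \bar u + \epsilon p$ directly. This buys two things: membership $D_j \in \calD_X$ is immediate by construction (the paper leaves this implicit, recoverable only from the interior point it builds at the end), and the full-dimensionality of $\calK_j$ comes for free since $v$ satisfies every nontrivial constraint strictly. The cost is that you lean on the standard polyhedral characterizations of the relative interior and affine hull via implicit equalities, which the paper instead re-derives in its self-contained lemmas. Your construction also preserves the property the paper needs downstream (in \cref{thm:cone-elimination} and \cref{thm:approx-result}) that $D_j$ and $D_i$ differ only on rows orthogonal to $\calK_i$, so it could substitute for the paper's proof without breaking those arguments. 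The one point worth stating explicitly rather than in passing is the handling of identically-zero rows of $X$, which you do address: they always receive $(D_j)_{kk}=1$ and contribute only the vacuous constraint $0 \geq 0$, so they obstruct neither the strict realization of the pattern nor the inclusion.
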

That is, every singular cone is contained within a non-singular cone.
As a result, we show that these ``bad'' cones,
\begin{equation}\label{eq:singular-cones}
    \calS(\tilde \calD_X) = \cbr{ D_i \in \tilde \calD_X : \calK_i - \calK_i \subset \R^d },
\end{equation}
can be safely ignored when forming the convex programs.
\begin{restatable}{theorem}{coneElimination}\label{thm:cone-elimination}
    Let \( \tilde \calD \subseteq \calD_X \) and \( \lambda \geq 0 \).
    Then the C-ReLU problem with \( \tilde \calD \) is equivalent
    to the C-ReLU problem with \( \tilde \calD \setminus \calS(\tilde \calD) \).
    If \( \lambda = 0 \), then both problems are equivalent to the sub-sampled
    C-GReLU problem with \( \tilde \calD \setminus \calS(\tilde \calD) \).
\end{restatable}
Choosing \( \tilde \calD = \calD_X \) shows that the full C-ReLU
problem is exactly equivalent to the unconstrained C-GReLU problem without
singular gate vectors.
\cref{alg:cone-decomp} provides a template for training ReLU networks by leveraging cone decompositions and \cref{thm:cone-elimination}.
Note that \( \tilde \calD \) is generated by randomly sampling gate vectors \( g_i \sim \calN(0, I) \).
This is sufficient to recover the C-ReLU problem as \cref{thm:cone-elimination} implies singular cones, for which the sampling probability is zero, don't contribute to the solution.

\subsection{Approximating ReLU by Cone Decompositions}

We have seen that decomposing \( u_i^* = v_i - w_i \) allows us to map the C-GReLU problem into the C-ReLU problem.
However, triangle inequality shows \( \norm{u_i^*}_2 \leq \norm{v_i}_2 + \norm{w_i}_2 \), meaning the cone decomposition can only increase the norm of the model (see \cref{fig:cone-decomp}).
For \( \lambda > 0 \), this increases the penalty term in the objective (Eq.~\ref{convex-forms:eq:convex-relu-mlp}), although the loss \( \calL \) is unchanged.
This section develops cone decomposition algorithms for which we know the blow-up of the norm is not too large.
As a result, we obtain approximation guarantees for solving C-ReLU by solving C-GReLU.

In what follows, \( \calK = \cbr{w : (2 D - I) X w \succeq 0} \) denotes a non-singular cone, \( \tilde X = (2 D - I) X \), and \( \kappa(A) \) is the ratio of the largest and smallest \emph{non-zero} singular values of \( A \).
Our first result gives conditions for the existence of a closed-form
decomposition.
\begin{restatable}{proposition}{closedFormDecomp}\label{prop:closed-form-decomp}
    Suppose \( X \) is full row-rank.
    If \( \calI = \cbr{i \in [n] : \abr{\tilde x_i, u} < 0 } \), then for every \( u \in \R^d \),
    \vspace{-1ex}
    \[
        u = \rbr{u + w} - w, \text{where} \,  w = - \tilde X_{\calI}^\dagger \tilde X_{\calI} u,
    \]
    \vspace{-1ex}
    is a valid decomposition onto \( \calK - \calK \) satisfying,
    \[
        \norm{u + w}_2 + \norm{w}_2 \leq 2 \norm{u}_2.
    \]
\end{restatable}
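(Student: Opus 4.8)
The identity $u = \rbr{u+w} - w$ holds for any choice of $w$, so the substance of the claim is threefold: (i) $w \in \calK$, (ii) $u + w \in \calK$, and (iii) the norm inequality. The plan is to first recognize $P := \tilde{X}_\calI^\dagger \tilde{X}_\calI$ as the orthogonal projector onto the row space of $\tilde{X}_\calI$, so that $w = -Pu$ and $u + w = \rbr{I - P}u$ are precisely the components of $u$ lying in $\mathrm{row}(\tilde{X}_\calI)$ and in its orthogonal complement $\mathrm{null}(\tilde{X}_\calI)$. Full row-rank of $X$ guarantees that $\tilde{X} = (2D-I)X$ is also full row-rank, since $2D-I$ is an invertible sign matrix; hence every row subset $\tilde{X}_\calI$ has full row rank $\abs{\calI}$ and $P$ is well defined through $\tilde{X}_\calI^\dagger = \tilde{X}_\calI^\top\rbr{\tilde{X}_\calI \tilde{X}_\calI^\top}^{-1}$.

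For membership I would split the rows of $\tilde{X}$ into the active set $\calI$ and its complement. On the active rows I invoke the Moore--Penrose identity $\tilde{X}_\calI \tilde{X}_\calI^\dagger \tilde{X}_\calI = \tilde{X}_\calI$, which gives $\tilde{X}_\calI w = -\tilde{X}_\calI u$, an entrywise strictly positive vector by the definition of $\calI$, and $\tilde{X}_\calI\rbr{u+w} = \tilde{X}_\calI u - \tilde{X}_\calI u = 0$; thus the active constraints hold (with equality for $u+w$). It then remains to control the inactive rows $i \notin \calI$, where $\abr{\tilde{x}_i, u} \ge 0$. Decomposing $\abr{\tilde{x}_i, u} = \abr{\tilde{x}_i, Pu} + \abr{\tilde{x}_i, \rbr{I-P}u}$, I would reduce both requirements to the single sign condition $\abr{\tilde{x}_i, Pu} \le 0$: it yields $\abr{\tilde{x}_i, w} = -\abr{\tilde{x}_i, Pu} \ge 0$ so that $w \in \calK$, and simultaneously $\abr{\tilde{x}_i, u+w} = \abr{\tilde{x}_i, u} - \abr{\tilde{x}_i, Pu} \ge \abr{\tilde{x}_i, u} \ge 0$ so that $u+w \in \calK$. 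I expect this sign condition on the inactive rows to be the main obstacle: it is exactly the point where non-singularity of $\calK$ together with the full-row-rank hypothesis must be used, to rule out that projecting onto $\mathrm{null}(\tilde{X}_\calI)$ re-activates a constraint that was previously satisfied. This is the delicate geometric core of the argument, and I would devote most of the effort here.

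Finally, the norm inequality falls out of the orthogonality built into the construction. Because $w = -Pu$ and $u+w = \rbr{I-P}u$ are orthogonal, the Pythagorean identity gives $\norm{u+w}_2^2 + \norm{w}_2^2 = \norm{u}_2^2$. Applying $a + b \le \sqrt{2}\sqrt{a^2+b^2}$ with $a = \norm{u+w}_2$ and $b = \norm{w}_2$ then gives $\norm{u+w}_2 + \norm{w}_2 \le \sqrt{2}\,\norm{u}_2 \le 2\,\norm{u}_2$, which is in fact marginally stronger than the stated bound. This step is routine once the projector decomposition is in hand, so the entire difficulty of the proposition is concentrated in the inactive-row membership argument above.
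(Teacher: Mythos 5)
Your handling of the active rows and of the norm bound is sound and matches the paper on those points: on \( \calI \) the Moore--Penrose identity gives \( \tilde X_{\calI} w = -\tilde X_{\calI} u > 0 \) and \( \tilde X_{\calI}\rbr{u+w} = 0 \), and your Pythagorean argument yields \( \norm{u+w}_2 + \norm{w}_2 \le \sqrt{2}\,\norm{u}_2 \), which is in fact sharper than the paper's bound of \( 2\norm{u}_2 \) (the paper bounds \( \norm{w}_2 \) and \( \norm{u+w}_2 \) separately by \( \norm{u}_2 \) via \( \norm{P}_2 = \norm{I-P}_2 = 1 \)).

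The genuine gap sits exactly where you flagged it. You reduce membership on the inactive rows to the sign condition \( \abr{\tilde x_i, Pu} \le 0 \) for \( i \in \calJ = [n]\setminus\calI \), call it the ``delicate geometric core,'' and then do not prove it; a plan that defers its only nontrivial step is not a proof. Worse, that condition cannot be derived from full row-rank alone. The paper closes this step by asserting the stronger identity \( \tilde X_{\calJ}\tilde X_{\calI}^\dagger = 0 \), i.e.\ \( \tilde X_{\calJ} w = 0 \) exactly, so the projection never touches an inactive constraint. But \( \tilde X_{\calJ}\tilde X_{\calI}^\dagger = \tilde X_{\calJ}\tilde X_{\calI}^\top\rbr{\tilde X_{\calI}\tilde X_{\calI}^\top}^{-1} \) vanishes precisely when the rows indexed by \( \calJ \) are orthogonal to those indexed by \( \calI \) --- a cross-block orthogonality property, not a consequence of the rows being linearly independent. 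A two-dimensional example makes the failure concrete: take
\[
\tilde X = \begin{pmatrix} 1 & 0 \\ -1 & \epsilon \end{pmatrix}, \qquad u = \begin{pmatrix}-1\\ 0\end{pmatrix},
\]
so that \( \calI = \cbr{1} \), \( w = -\tilde X_{\calI}^\dagger\tilde X_{\calI} u = (1,0)^\top \), and \( \abr{\tilde x_2, w} = -1 < 0 \), hence \( w \notin \calK \) even though \( \tilde X \) has full row rank (and \( \calK - \calK = \R^2 \), so a valid decomposition does exist --- just not this one). So your instinct that the inactive rows are the crux is correct, but the obstacle is not merely technical: the claimed closed form only works under an additional orthogonality assumption on the rows (e.g.\ \( X \) semi-orthogonal), and no amount of effort will extract \( \abr{\tilde x_i, Pu} \le 0 \) from full row-rank as literally stated.
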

In general, we cannot hope for constant approximations since \( n \gg d \) implies
the cones \( \calK \) are very ``narrow''.
\begin{restatable}{proposition}{worstCaseNorm}\label{prop:worst-case-norm}
    There does not exist a decomposition \( u = v - w \), where \( v, w \in \calK \), such that
    \[
        \norm{v}_2 + \norm{w}_2 \leq C \norm{u},
    \]
    holds for an absolute constant \( C \).
\end{restatable}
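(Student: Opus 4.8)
The plan is to prove this impossibility result by exhibiting, for every proposed constant \( C \), a single admissible instance on which no decomposition meets the bound. The mechanism is the one flagged in the text: when \( n > d \) the cones \( \calK \) can be made arbitrarily ``narrow,'' and a vector \( u \) pointing across such a narrow cone can only be written as a difference \( v - w \) of cone elements if both \( v \) and \( w \) reach far out along the cone's axis (cf. \cref{fig:cone-decomp}). I would therefore parametrize a family of cones by a half-angle \( \theta \) and show the blow-up grows like \( \cot\theta \), so that taking \( \theta \) small enough defeats any fixed \( C \). To stay clear of \cref{prop:closed-form-decomp}, which already gives a factor \( 2 \) when \( X \) is full row-rank, I keep the instance in the regime \( n > d \), so \( X \) is not full row-rank.

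Concretely, I would work in \( \R^2 \) and take the three rows \( \tilde x_1 = \rbr{\sin\theta, -\cos\theta} \), \( \tilde x_2 = \rbr{\sin\theta, \cos\theta} \), and \( \tilde x_3 = \rbr{1,0} \), setting \( D = I \) so that \( \tilde X = \rbr{2D - I} X = X \). The resulting cone \( \calK = \cbr{u : \tilde X u \succeq 0} \) is the wedge \( \cbr{(u_1, u_2) : u_1 \geq 0,\ \abs{u_2} \leq u_1 \tan\theta} \) of half-angle \( \theta \) about the positive \( u_1 \)-axis; the third row gives the redundant constraint \( u_1 \geq 0 \) and only serves to place us at \( n = 3 > d = 2 \). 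Two admissibility checks are needed and both follow from the wedge having nonempty interior: the interior gate \( g = (1,0) \) satisfies \( Xg \succ 0 \), so \( D = \diag(\mathbbm{1}(Xg \geq 0)) \in \calD_X \) is a genuine activation pattern, and the same interior forces \( \calK - \calK = \R^2 \), i.e. \( \calK \) is non-singular and decompositions actually exist.

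I would then take \( u = (0,1) \), so \( \norm{u}_2 = 1 \), and bound any decomposition \( u = v - w \) with \( v, w \in \calK \). Matching first coordinates gives \( v_1 = w_1 =: a \), while the cone constraints give \( \abs{v_2} \leq a \tan\theta \) and \( \abs{w_2} \leq a \tan\theta \). Matching second coordinates, \( 1 = v_2 - w_2 \leq \abs{v_2} + \abs{w_2} \leq 2a\tan\theta \), hence \( a \geq \tfrac12 \cot\theta \). Since \( \norm{v}_2 \geq v_1 = a \) and \( \norm{w}_2 \geq w_1 = a \), this yields \( \norm{v}_2 + \norm{w}_2 \geq \cot\theta = \cot\theta \cdot \norm{u}_2 \). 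Given any constant \( C \), choosing \( \theta \) small enough that \( \cot\theta > C \) produces an instance on which \( \norm{v}_2 + \norm{w}_2 \leq C \norm{u}_2 \) fails for every valid decomposition; as \( C \) was arbitrary, no absolute constant can exist.

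The routine calculations (the wedge description and the two-coordinate inequality) are immediate, so I do not expect the arithmetic to be the obstacle. The step that most needs care is the admissibility argument: one must confirm that the narrow wedge is realized exactly as some \( \calK_i \) for a bona fide \( (X, D) \) with \( D \in \calD_X \), and that it is non-singular, so that the failure is genuinely about the norm blow-up of an \emph{existing} decomposition rather than about non-existence of any decomposition. Keeping the construction in the \( n > d \) regime is precisely what makes this consistent with the full-row-rank guarantee of \cref{prop:closed-form-decomp}.
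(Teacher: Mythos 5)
Your proof is correct and takes essentially the same approach as the paper's: both exhibit a family of narrow two-dimensional wedges together with a vector pointing across the wedge, and show the decomposition norm blows up as the wedge's half-angle shrinks (the paper parametrizes narrowness by a slope \( \alpha \) and identifies the minimum-norm decomposition by inspection, whereas you parametrize by an angle \( \theta \) and prove a lower bound of \( \cot\theta \cdot \norm{u}_2 \) over \emph{all} decompositions). Your extra care is actually warranted: you pad the instance to \( n > d \) to avoid colliding with \cref{prop:closed-form-decomp}, while the paper's own counterexample uses a full row-rank \( 2 \times 2 \) matrix and therefore sits uncomfortably against that proposition.
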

\begin{algorithm}[tb]
    \caption{Solving C-ReLU by Cone Decomposition}
    \label{alg:cone-decomp}
    \begin{algorithmic}
        \STATE {\bfseries Input:} data \( (X, y) \), \( \lambda \geq 0 \), num. samples \( p \), objective \( R \).
        \STATE \textbf{Sample}: \( \tilde \calD \!=\! \cbr{\diag(\mathbbm{1}(X g_i \geq 0)) :  g_i \!\sim \!\calN(0, I), i \in [p] } \)
        \STATE \textbf{Solve C-GReLU}:
        \STATE \hspace{1em} \( u^* \in \argmin_{u} \calL(\sum_{\tilde \calD} D_i X u_i, y) + \lambda \sum_{\tilde \calD} \norm{u_i}_2\)
        \STATE \textbf{Solve Cone Decomposition}:
        \STATE \hspace{1em} \( \bar v, \bar w \in \argmin_{v, w} \cbr{ R(v, w) : u_i^* = v_i - w_i, i \in [p] } \)
        \STATE \textbf{Return}: \((\bar v, \bar w)\)
    \end{algorithmic}
\end{algorithm}
\vspace{-1ex}
When \( X \) is not full row-rank, we can solve
\begin{equation}\label{eq:decomposition-program}
    \begin{aligned}
        \textbf{CD}: \quad \min_{v, w \in \calK} \cbr{ R(v, w) : v - w = u },
    \end{aligned}
\end{equation}
where \( R : \R^{d \times d} \mapsto \R \) is some loss function.
Taking \( R(v, w) = 0 \) reduces to a linear feasibility problem which can be handled by off-the-shelf LP solvers.
Choosing \( R(v,w) = \norm{v}_2 + \norm{w}_2 \) yields a second-order cone program (SOCP) for which we have the following guarantee.
\begin{restatable}{proposition}{socpDecomp}\label{prop:socp-decomp}
    For every \( u \in \R^d \), if \( \rbr{\bar v, \bar w} \) is a solution to the cone-decomposition program~\eqref{eq:decomposition-program} with
    \( R(v,w) = \norm{v}_2 + \norm{w}_2 \), then there exists \( \calJ \subseteq [n] \) such that
    \[
        \norm{\bar v}_2 + \norm{\bar w}_2 \leq \rbr{1 + 2 \kappa(\tilde X_{\calJ})} \norm{u}_2.
    \]
\end{restatable}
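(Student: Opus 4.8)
The plan is to leverage the optimality of the SOCP solution: since $(\bar v, \bar w)$ minimizes $R(v,w) = \norm{v}_2 + \norm{w}_2$ over all feasible decompositions in \eqref{eq:decomposition-program}, it suffices to exhibit a \emph{single} feasible pair $(v,w) \in \calK \times \calK$ with $v - w = u$ whose objective obeys the claimed bound. First I would collapse the two cone memberships into one linear system. Eliminating $v = u + w$ and recalling $\calK = \cbr{w : \tilde X w \succeq 0}$, the pair $(u+w,\, w)$ is feasible precisely when $\tilde X w \succeq 0$ and $\tilde X w \succeq - \tilde X u$, i.e.\ when $\tilde X w \succeq b$, where $b := (-\tilde X u)_+$ denotes the coordinatewise positive part (so $b \succeq 0$ and $b_i = \max\cbr{-\abr{\tilde x_i, u}, 0}$). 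Note this system is feasible exactly because $\calK$ is non-singular: $\calK - \calK = \R^d$ guarantees some $v - w = u$ with $v, w \in \calK$.

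Next I would take $w^\star$ to be the minimum-norm feasible point, $w^\star = \argmin\cbr{\norm{w}_2^2 : \tilde X w \succeq b}$, which exists and is unique by strict convexity over a nonempty closed polyhedron. Writing the KKT conditions for this projection problem, stationarity gives $w^\star = \sum_i \mu_i \tilde x_i$ with multipliers $\mu_i \geq 0$, and complementary slackness forces $\mu_i = 0$ off the active set $\calJ := \cbr{i : \abr{\tilde x_i, w^\star} = b_i}$. Hence $w^\star$ lies in the row space of $\tilde X_\calJ$ and satisfies $\tilde X_\calJ w^\star = b_\calJ$; as the unique row-space solution of this system, it equals $w^\star = \tilde X_\calJ^\dagger b_\calJ$. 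This parallels \cref{prop:closed-form-decomp} but selects the active rows adaptively rather than assuming full row rank.

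The bound then follows from two singular-value estimates. Since $b_\calJ = \tilde X_\calJ w^\star \in \text{range}(\tilde X_\calJ)$, no range-space component is lost, so $\norm{w^\star}_2 = \norm{\tilde X_\calJ^\dagger b_\calJ}_2 \le \sigma_{\min}(\tilde X_\calJ)^{-1}\norm{b_\calJ}_2$, where $\sigma_{\min}$ is the smallest \emph{nonzero} singular value. Because each $b_i$ is a positive part, $\abs{b_i} \le \abs{\abr{\tilde x_i, u}}$, whence $\norm{b_\calJ}_2 \le \norm{\tilde X_\calJ u}_2 \le \sigma_{\max}(\tilde X_\calJ)\norm{u}_2$, giving $\norm{w^\star}_2 \le \kappa(\tilde X_\calJ)\norm{u}_2$. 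Setting $v^\star = u + w^\star$ yields a feasible pair, so by the triangle inequality and optimality of $(\bar v, \bar w)$,
\[
\norm{\bar v}_2 + \norm{\bar w}_2 \le \norm{v^\star}_2 + \norm{w^\star}_2 \le \norm{u}_2 + 2\norm{w^\star}_2 \le \rbr{1 + 2\kappa(\tilde X_\calJ)}\norm{u}_2.
\]

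The step I expect to be the main obstacle is the active-set characterization: rigorously arguing that the minimum-norm feasible point is the pseudoinverse image $\tilde X_\calJ^\dagger b_\calJ$, that $b_\calJ$ lies exactly in $\text{range}(\tilde X_\calJ)$ (so the pseudoinverse incurs no approximation), and that degeneracies such as zero or linearly dependent active rows are absorbed harmlessly — which is precisely why $\kappa$, defined through \emph{nonzero} singular values, is the correct quantity rather than a condition number of a fixed full matrix.
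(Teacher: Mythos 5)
Your proof is correct and follows essentially the same route as the paper's: both reduce feasibility to the single system \( \tilde X w \succeq b \) with \( b = (-\tilde X u)_+ \), take the minimum-norm feasible \( w \), identify it via the active constraints as \( \tilde X_\calJ^\dagger b_\calJ \), and bound its norm by \( \kappa(\tilde X_\calJ)\norm{u}_2 \) before invoking optimality of \( (\bar v, \bar w) \). Your KKT/row-space justification of the pseudoinverse characterization is in fact slightly more explicit than the paper's.
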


Note that the general setting incurs a penalty of \( \kappa(\tilde X_{\calJ}) \) compared to \cref{prop:closed-form-decomp}.
Intuitively, this term measures the narrowness of \( \calK \) and the difficulty of the decomposition.
Combining \cref{prop:socp-decomp} with \cref{thm:cone-elimination} gives our main approximation result.
\begin{restatable}{theorem}{approxResult}\label{thm:approx-result}
    Let \( \lambda \geq 0 \) and let \( p^* \) be the optimal value of the full
    C-ReLU problem with training set \( (X, y) \).
    There exists \( \calJ \subseteq [n] \) such that the C-GReLU problem with
    patterns \( \calD_X \setminus \calS(\calD_X) \), minimizer \( u^* \),
    and optimal value \( d^* \) satisfies,
    \[
        d^*
        \leq p^*
        \leq d^*
        + 2 \lambda \kappa(\tilde X_{\calJ}) \sum_{D_i \in \tilde \calD} \norm{u_i^*}_2.
    \]
\end{restatable}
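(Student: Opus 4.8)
The plan is to establish the two inequalities separately, after first fixing a convenient sub-sampled set \( \tilde \calD \). Let \( (v^*, w^*) \) be an optimizer of the full C-ReLU problem (with \( \tilde \calD = \calD_X \)) attaining value \( p^* \), and let \( S \subseteq \calD_X \) be its support, i.e.\ the cones with \( v_i^* \neq 0 \) or \( w_i^* \neq 0 \). Whenever \( \calK_i \in S \) is singular, \cref{prop:cone-containment} supplies a non-singular cone \( \calK_j \supset \calK_i \); since \( v_i^*, w_i^* \in \calK_i \subset \calK_j \), the defining property of these cones forces the two activation patterns to coincide on these vectors (\( D_i X v_i^* = (X v_i^*)_+ = D_j X v_i^* \), and likewise for \( w_i^* \)), so moving the corresponding block from \( \calK_i \) to \( \calK_j \) leaves both the loss and the penalty unchanged. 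After this substitution I obtain a set \( \tilde \calD \) consisting solely of non-singular cones for which the modified optimum remains feasible; since enlarging the cone set can only decrease the value, this yields \( p^*(\tilde \calD) = p^* \), and this is the sub-sampled problem I will use.

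For the lower bound \( d^* \leq p^* \), I would observe that C-GReLU~\eqref{convex-forms:eq:unconstrained-relu-mlp} is a relaxation of C-ReLU~\eqref{convex-forms:eq:convex-relu-mlp} on the same \( \tilde \calD \): given any feasible \( (v, w) \) with \( v_i, w_i \in \calK_i \), the point \( u_i = v_i - w_i \) is feasible for the unconstrained problem, preserves the loss since \( \sum D_i X (v_i - w_i) = \sum D_i X u_i \), and satisfies \( \norm{u_i}_2 \leq \norm{v_i}_2 + \norm{w_i}_2 \) by the triangle inequality. Hence the C-GReLU objective at \( u \) is at most the C-ReLU objective at \( (v, w) \), so \( d^* \leq p^*(\tilde \calD) = p^* \).

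For the upper bound I would run the construction in reverse on the C-GReLU minimizer \( u^* \). Each cone in \( \tilde \calD \) is non-singular, so \( \calK_i - \calK_i = \R^d \) and \cref{prop:socp-decomp}, applied to \( \tilde X_i = (2 D_i - I) X \), produces a decomposition \( u_i^* = \bar v_i - \bar w_i \) with \( \bar v_i, \bar w_i \in \calK_i \) and an index set \( \calJ_i \subseteq [n] \) for which \( \norm{\bar v_i}_2 + \norm{\bar w_i}_2 \leq (1 + 2 \kappa(\tilde X_{i, \calJ_i})) \norm{u_i^*}_2 \). Collecting the pairs \( (\bar v_i, \bar w_i) \) yields a feasible point of C-ReLU on \( \tilde \calD \) with the same loss as \( u^* \) and penalty bounded by \( \lambda \sum_i (1 + 2 \kappa(\tilde X_{i, \calJ_i})) \norm{u_i^*}_2 \); since \( p^* = p^*(\tilde \calD) \) is the minimum, this feasible objective upper-bounds \( p^* \), giving \( p^* \leq d^* + 2 \lambda \sum_i \kappa(\tilde X_{i, \calJ_i}) \norm{u_i^*}_2 \).

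It remains to collapse the per-cone conditioning factors into one. Choosing \( \calJ = \calJ_{i^*} \) for the index \( i^* \) maximizing \( \kappa(\tilde X_{i, \calJ_i}) \) and writing \( \kappa(\tilde X_\calJ) \) for this worst value, I bound \( \sum_i \kappa(\tilde X_{i, \calJ_i}) \norm{u_i^*}_2 \leq \kappa(\tilde X_\calJ) \sum_i \norm{u_i^*}_2 \), which gives the stated inequality. I expect the main obstacle to be the first paragraph: verifying that one may restrict to a \( \tilde \calD \) of non-singular cones without changing \( p^* \), which is exactly where the containment \( \calK_i \subset \calK_j \) of \cref{prop:cone-containment} must be combined with the fact that the ReLU activation patterns agree on vectors lying in both cones. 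The remaining steps are a clean relaxation argument and a bookkeeping reduction from the many \( \calJ_i \) to a single \( \calJ \) (at the cost of a mild abuse of the notation \( \tilde X_\calJ \), which now refers to the worst cone \( D_{i^*} \)).
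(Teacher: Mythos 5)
Your proposal is correct and follows essentially the same route as the paper: both arguments use \cref{prop:cone-containment} to pass to a sub-sampled problem containing only non-singular cones, obtain \( d^* \leq p^* \) by the triangle-inequality relaxation, and obtain the upper bound by decomposing the C-GReLU minimizer with \cref{prop:socp-decomp} and then replacing the per-cone index sets \( \calJ_i \) by a single worst-case \( \calJ \) (the paper additionally notes that \( 2D_i - I \) is orthogonal, so the condition numbers do not depend on \( i \)). The only cosmetic difference is that you build \( \tilde \calD \) from the support of the C-ReLU optimizer, whose blocks genuinely lie in their cones, whereas the paper merges the neurons of the full C-GReLU solution into the containing non-singular cones; both reductions rest on the same orthogonality facts established in the proof of \cref{prop:cone-containment}.
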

As a consequence of \cref{thm:approx-result}, \cref{alg:cone-decomp} is guaranteed to approximate the C-ReLU problem if \( R(v,w) = \norm{v}_2 + \norm{w}_2 \) and \( p \) is sufficiently large.
As \( \lambda \into 0 \), this result smoothly recovers \cref{thm:cone-elimination}, implying we can control the approximation by adjusting the regularization.

\begin{figure}
    \centering


\begin{tikzpicture}[scale=1,
	]
	\begin{axis}[width=1.1\linewidth, height=5cm,
			axis lines=none,  
			yticklabels={,,}, xticklabels={,,},
			ymin=-0.2, ymax=10.2, x axis line style={-},
			xmin=-0.2, xmax=20.2, y axis line style={-},
		]

		\filldraw[color=blue!60, fill=blue!5, line width=0.4mm](axis cs:0,5.8) rectangle (axis cs:20, 10);
		\filldraw[color=red!60, fill=red!5, line width=0.4mm](axis cs:0,0) rectangle (axis cs:20, 4.2);

		\filldraw[line width=0.4mm, fill=white](axis cs:1,1) rectangle (axis cs:8, 3.2) node[pos=.5] {NC-GReLU};
		\filldraw[line width=0.4mm, fill=white](axis cs:12,1) rectangle (axis cs:19, 3.2) node[pos=.5] {NC-ReLU};

		\filldraw[line width=0.4mm, fill=white](axis cs:1,6.8) rectangle (axis cs:8, 9) node[pos=.5] {C-GReLU};
		\filldraw[line width=0.4mm, fill=white](axis cs:12,6.8) rectangle (axis cs:19, 9) node[pos=.5] {C-ReLU};

		\draw [<->, solid, draw=black, line width = 0.6mm] (axis cs:4.5,3.2) -- (axis cs:4.5,6.8) node[right, pos=0.5] {\small Sol. Map};

		\draw [<->, solid, draw=black, line width = 0.6mm] (axis cs:15.5,3.2) -- (axis cs:15.5,6.8)  node[right, pos=0.5] {\small Sol. Map};

        \draw [<-, solid, draw=black, line width = 0.6mm] (axis cs:6,9) to [bend left=15] (axis cs:14, 9);

		\draw [->, solid, draw=orange, line width = 0.6mm] (axis cs:8,7.9) -- (axis cs:12,7.9);
		\node[align=center] at (axis cs:10.1, 7.8) {\small Cone\\ \small Decomp.};
	\end{axis}

\end{tikzpicture}%
    \caption{Summary of equivalences between convex (blue) and non-convex (red) neural
        network training problems with gated ReLU (left) and ReLU (right)
        activations.
        The convex programs C-GReLU and C-ReLU are equivalent to the standard
        non-convex training problems NC-GReLU and NC-GReLU and are related to
        each other via the cone decomposition procedure.
    }%
    \label{fig:relations}
    \vspace{-0.3cm}
\end{figure}
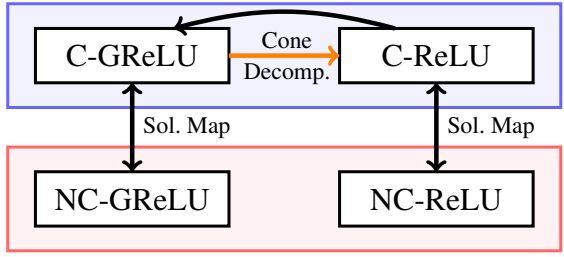


\section{Efficient Global Optimization}\label{sec:}

We now have two options for convex optimization of ReLU models: directly tackling the C-ReLU problem or solving C-GReLU and a cone decomposition program (see \cref{fig:relations}).
This section develops efficient and scalable methods for both approaches.
For simplicity, we assume \(\mathcal{L}\) is squared loss; our results are easily extended to other loss functions.

\subsection{Solving the Gated ReLU Problem}\label{sec:r-fista}

Our goal is a fast and reliable method for the C-GReLU problem even when \( \tilde \calD \) is very large.
To be practical, it should benefit from GPU acceleration, provide convergence certificates, and be ``tuning-free''.
To be theoretically satisfying, it should come with complexity guarantees.

Our starting place is the observation that C-GReLU is exactly the classic \emph{group lasso} with basis expansion,
\[ M(X) = [D_1 X \, D_2 X \, \cdots \, D_{\abs{\tilde \calD}} X ]. \]
A naive approach to huge-scale group lasso is the stochastic subgradient method;
this approach benefits from auto-differentiation engines such as PyTorch~\citep{paszke2019pytorch} and TensorFlow~\citep{abadi2016tensorflow} and is simple to code.
However, subgradient methods require decreasing step-sizes to converge and are extremely slow --- they require \( O(\epsilon^{-2}) \) iterations to to compute an \( \epsilon \)-optimal point.

Instead, we use the composite structure of the objective as a sum of a convex quadratic \( f(u) = \|\sum_{D_i \in \tilde \calD} D_i X u_i - y\|_2^2 \) and the non-smooth penalty \( g(u) = \lambda \sum_{D_i \in \tilde \calD} \norm{u_i}_2 \).
The FISTA algorithm~\citep{beck2009fista} is an accelerated method that treats \( g \) exactly using the iteration,
\begin{align}
	\xkk
	 & = \argmin_{y} Q_{\yk, \etak}(y) + g(y)               \label{eq:prox-gradient-mm} \\
	\ykk
	 & = \xkk + \frac{t_k - 1}{t_{k + 1}} \rbr{\xkk - \xk} \nonumber,
\end{align}
where \( t_{k + 1} = (1 + \sqrt{1 + 4 t_k^2}) / 2 \, \) and,
\begin{equation*}
	\begin{aligned}
		Q_{\xk, \etak}(y) & = f(\xk)\! + \!\abr{\grad(\xk), y \!- \!\xk}\!
		+\! \frac{1}{2 \etak}\norm{y \!- \!\xk}_2^2,
	\end{aligned}
\end{equation*}
majorizes \( f \) as long as \( \etak \leq \lambda_{\text{max}}(M^\top M)^{-1} \).
Using the convergence guarantee for FISTA when \( f, g \) are convex and $f$ is Lipschitz smooth~\citep{duchi2009forwardbackwardsplitting, beck2009fista, nesterov2007proximalgradient} gives the complexity of \emph{global} optimization of the NC-GReLU problem.

\begin{restatable}{theorem}{gatedComplexity}\label{thm:gated-complexity}
	Let \( \rbr{W_{1}^*, w_2^*} \) be the minimum-norm global minimizer of the NC-GReLU problem with gates \( \calG \).
	Then, we can compute an \( \epsilon \)-optimal point \( \rbr{W_{1\epsilon}, w_{2\epsilon}} \) in iterations
	\begin{equation*}
		\begin{aligned}
			T & \leq \big(2 \epsilon^{-1} \lambda_{\text{max}}\rbr{M^\top M}\sum_{D_i \in \tilde \calD} \norm{W_{1i}^* w_{2i}^*}_2^2\big)^{1/2}.
		\end{aligned}
	\end{equation*}
\end{restatable}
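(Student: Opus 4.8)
The plan is to obtain the bound by applying the textbook FISTA guarantee to the convex C-GReLU problem~\eqref{convex-forms:eq:unconstrained-relu-mlp} and then pulling the resulting iteration count back to NC-GReLU~\eqref{convex-forms:eq:gated-relu-mlp} through \cref{thm:unconstrained-equivalence}. Writing \( M = M(X) = [D_1 X \, \cdots \, D_{\abs{\tilde \calD}} X] \), the C-GReLU objective is the composite sum \( F(u) = f(u) + g(u) \) with smooth part \( f(u) = \norm{M u - y}_2^2 \) and nonsmooth part \( g(u) = \lambda \sum_{D_i \in \tilde \calD} \norm{u_i}_2 \). First I would record the two facts FISTA needs: \( f \) is convex with a constant Hessian, so its gradient is Lipschitz with modulus \( L = \lambda_{\text{max}}(M^\top M) \) (the constant appearing in the majorization condition \( \etak \le \lambda_{\text{max}}(M^\top M)^{-1} \), up to the scaling of the squared loss), while \( g \) is a finite nonnegative sum of norms and hence convex and proper. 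Thus the hypotheses of the FISTA convergence theorem are satisfied.

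Next I would invoke the standard accelerated rate~\citep{beck2009fista,nesterov2007proximalgradient}: for any minimizer \( u^* \) of \( F \), the iterates~\eqref{eq:prox-gradient-mm} initialized at \( u_0 \) obey \( F(\xk) - F(u^*) \le 2 L \norm{u_0 - u^*}_2^2 / (k+1)^2 \). Initializing at \( u_0 = 0 \) collapses the distance term to \( \norm{u^*}_2^2 \), and demanding that the right-hand side be at most \( \epsilon \) gives \( (T+1)^2 \ge 2 \epsilon^{-1} L \norm{u^*}_2^2 \). Taking square roots shows that \( T \le \rbr{2 \epsilon^{-1} \lambda_{\text{max}}(M^\top M) \norm{u^*}_2^2}^{1/2} \) iterations suffice to produce an \( \epsilon \)-optimal point of C-GReLU.

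Finally I would transfer this to the NC-GReLU parametrization. By \cref{thm:unconstrained-equivalence} the two problems share an optimal value, and the equivalence is realized by the substitution \( u_i = W_{1i} w_{2i} \) together with the AM-GM identity \( \min \half(\norm{W_{1i}}_2^2 + w_{2i}^2) = \norm{u_i}_2 \) over all factorizations of \( u_i \), attained at the balanced weights \( \norm{W_{1i}}_2 = \abs{w_{2i}} \). Decoding a C-GReLU iterate \( u \) through this balanced map yields a pair \( (W_1, w_2) \) whose NC-GReLU objective equals \( F(u) \); since the optimal values coincide, the \( \epsilon \)-suboptimality established above is preserved exactly. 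Taking \( (W_1^*, w_2^*) \) to be the minimum-norm minimizer and \( u^* \) its image, balancedness gives \( \norm{u^*}_2^2 = \sum_{D_i \in \tilde \calD} \norm{W_{1i}^* w_{2i}^*}_2^2 \), which substituted into the bound of the previous paragraph yields the claim. The main obstacle is precisely this last transfer: one must verify that the minimum-\( \ell_2 \)-norm C-GReLU minimizer is the image of the minimum-norm NC-GReLU solution, so that the FISTA distance \( \norm{u_0 - u^*}_2^2 \) is correctly identified with \( \sum_i \norm{W_{1i}^* w_{2i}^*}_2^2 \), and that the decoding map sends \( \epsilon \)-optimal convex iterates to \( \epsilon \)-optimal network weights; the accelerated rate itself is an off-the-shelf citation requiring no new work.
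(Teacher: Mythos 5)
Your proposal is correct and follows essentially the same route as the paper's proof: map the minimum-norm NC-GReLU minimizer to a C-GReLU minimizer via \( u_i^* = W_{1i}^* w_{2i}^* \), apply the standard FISTA rate \( F(x_k) - F^* \le 2L\norm{x_0 - x^*}_2^2/(k+1)^2 \) with \( x_0 = 0 \) and \( L = \lambda_{\text{max}}(M^\top M) \), and transfer the suboptimality back through the solution map of \cref{thm:unconstrained-equivalence}. Your final worry is slightly over-cautious: the FISTA bound holds with respect to \emph{any} minimizer \( u^* \), so it suffices that the image of \( (W_1^*, w_2^*) \) is \emph{some} C-GReLU minimizer, which the equivalence theorem already guarantees.
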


Proof in \cref{app:optimization}.
\cref{thm:gated-complexity} can also be expressed directly in terms of the C-GReLU problem by using a mapping between minimizers of the convex and non-convex formulations.
Data normalization gives \( \lambda_{\text{max}}\rbr{M^\top M} \leq d \cdot |\tilde \calD |  \), which is fully polynomial when \( \text{rank} (X) \) is constant (see Appendix~\ref{app:data-normalization}).
Such a condition holds for convolutional networks with fixed filter sizes~\citep{pilanci2020convexnn}.

\subsubsection{Developing an Efficient Optimizer}\label{sec:efficient-fista}

In theory, it is sufficient to run FISTA with small enough step-size to obtain \cref{thm:gated-complexity}, but this approach works poorly in practice.
Additional enhancements are required for fast and reliable solvers.

\textbf{Line-Search}:
Constant step-sizes converge slowly, so we use a line-search with the test condition proposed by \citet{beck2009fista}:
\begin{equation}\label{unconstrained:eq:line-search-cond}
	f(\xkk(\etak)) \leq Q_{\yk, \etak}(\xkk(\etak)).
\end{equation}
Computing this condition requires evaluating \( f(\xkk(\etak)) \), but does not need additional gradient evaluations like the alternative proposed by~\citet{nesterov2007proximalgradient}.
Simple backtracking along \( \xkk - \xk \)  works poorly and does not converge;
instead, we probe the arc of solutions to~\eqref{eq:prox-gradient-mm} by reducing the step-size.
As evaluating the proximal operator is slower than backtracking, it is important to initialize \( \etak \) effectively.

\textbf{Initializing the Step-size}:
Warm-starting with \( \etak = \eta_{k-1} \) can lead to overly-small steps, particularly later in optimization.
An alternative is \emph{forward-tracking} as \( \eta_k = \alpha \eta_{k-1} \) for \( \alpha > 1 \) \citep{fridovich2019choosing}.
This can partially adapt to local Lipschitz smoothness of \( f \), but may also lead to unnecessary evaluations of the proximal operator.
Instead, we check the tightness of~\eqref{unconstrained:eq:line-search-cond} before forward-tracking~\citep{liu2009lassplore}.
Let \( l_{\yk}(\x) = f(\yk) + \abr{\grad(\yk), \x - \yk} \) and
\begin{equation}\label{unconstrained:eq:upper-bound-gap}
	\omega_k := \frac{\norm{\xk - \y_{k-1}}_2^2}{2 \eta_{k-1} \rbr{f(\xk) - l_{\y_{k-1}}(\xk)}},
\end{equation}
to get \( \eta_{k} = \eta_{k-1} + (1 - \alpha) \eta_{k-1} \mathbbm{1}(\omega_k \geq c)\);
\( c = 1 \) obtains forward-tracking, while \( c \gg 1 \) gives a conservative strategy.

\textbf{Restarts}:
Resetting \( (\yk, t_k) \gets (\xk, 1) \) in the middle of optimization is called \emph{restarting}.
Restarting methods adapt to strong convexity and can attain a fast linear rate of convergence~\citep{nesterov2007proximalgradient, allen2014linear}.
Although C-GReLU is not strongly-convex, restarts can allow FISTA to adapt to local curvature~\citep{giselsson2014restart}.
We restart FISTA when
\( \abr{\xkk - \xk, \xkk - \yk} > 0 \) --- that is, \( \xkk \) is not a descent step with respect to the proximal gradient mapping~\citep{odonoghue2015restarts}.

\textbf{Data Normalization}: The proximal step~\eqref{unconstrained:eq:line-search-cond} is equivalent to composition of a gradient update with the group soft-thresholding operator.
Thresholding is sensitive to rounding errors in computation of the gradient and,
since errors accumulate in the ``memory'' \( y_k \), it is critical to improve
the conditioning of this computation.
Appendix~\ref{app:data-normalization} describes a simple data transformation which works well in practice.

Combining these elements together gives an efficient algorithm for C-GReLU which we call R-FISTA.


\subsection{Tractable Cone Decompositions}\label{sec:cone-decomp}

Training a ReLU network using \cref{alg:cone-decomp} requires solving a large-scale LP or SOCP.
Empirically, the complexity of solving these problems with commercial software is similar to directly solving C-ReLU (see \cref{table:cone-decomp}).
Instead, we propose an \emph{approximate} decomposition procedure which
can be solved efficiently using R-FISTA.

Manipulating the cone decomposition \( v - w = u \), \( v, w \in \calK \),
we obtain the equivalent conditions \( \tilde X w \geq (-\tilde X u)_+ \) and \( v = u + w \).
Given
\( \rho \geq 0 \), and \( b = (-\tilde X u)_+ \),
the regularized one-sided quadratic
\begin{equation}\label{eq:cd-approx}
	\textbf{CD-A}: \, \min_{w} \half
	\|
	(b - \tilde X w)_+
	\|_2^2
	+ \rho \norm{w}_2,
\end{equation}
approximates the exact cone-decomposition as follows:
\begin{restatable}{proposition}{approxDecompBounds}\label{prop:approx-decomp-bounds}
	Suppose \( \tilde w \) is a minimizer of \eqref{eq:cd-approx} and let \( \tilde v = u + \tilde w \).
	If \( X \) is full row-rank, then
	\[
		\|(\tilde X \tilde w)_-\|_2 + \|(\tilde X \tilde v)_-\|_2
		\leq \frac{2\rho}{\sigma_{\text{min}}(\tilde X)}.
	\]
	Furthermore, if \( \rho > 0 \), then the norm bound in \cref{prop:socp-decomp} also holds for the approximate solution \( (\tilde v, \tilde w) \).

	Alternatively, suppose \( X \) is not full row-rank.
	As \( \rho_k \rightarrow 0 \), every convergent subsequence of \( (\tilde v_k, \tilde w_k) \) is a feasible cone decomposition.
	Moreover, at least one such sequence exists.
\end{restatable}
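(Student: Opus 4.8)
The plan is to treat the three assertions separately, all built on the single observation that feasibility of the cone decomposition \( v-w=u \), \( v,w\in\calK \), is equivalent to \( \tilde X w \succeq b \) for \( b = (-\tilde X u)_+ \), and that the negative-part quantities \( (\tilde X\cdot)_- \) appearing in the bound exactly measure violation of \( \tilde X(\cdot)\succeq 0 \). Since \( b\succeq 0 \) and \( t\mapsto(t)_+ \) is monotone, I first record the pointwise inequality \( (\tilde X\tilde w)_- = (-\tilde X\tilde w)_+ \leq (b-\tilde X\tilde w)_+ =: r \), and a companion bound for \( \tilde v \). For the latter I use that \( \tilde Xu + b \succeq 0 \) componentwise (immediate from \( b=(-\tilde Xu)_+ \)): writing \( \tilde X\tilde v = \tilde Xu + \tilde X\tilde w \succeq (\tilde Xu + b) - r \succeq -r \) gives \( (\tilde X\tilde v)_- \leq r \) as well. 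Thus both violation terms are controlled by \( \|r\|_2 \), and it remains to bound \( \|r\|_2 \).

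For the first claim, I would extract \( \|r\|_2 \) from the first-order optimality of \( \tilde w \). The smooth part \( \tfrac12\|(b-\tilde Xw)_+\|_2^2 \) is \( C^1 \) with gradient \( -\tilde X^\top(b-\tilde Xw)_+ \), so stationarity reads \( \tilde X^\top r\in\rho\,\partial\|\tilde w\|_2 \), whence \( \|\tilde X^\top r\|_2\leq\rho \) (the subdifferential of the Euclidean norm lies in the unit ball, regardless of whether \( \tilde w=0 \)). When \( X \) is full row-rank, \( \tilde X=(2D-I)X \) is full row-rank as well, so \( \tilde X^\top \) is injective and \( \|\tilde X^\top r\|_2\geq\sigma_{\text{min}}(\tilde X)\|r\|_2 \). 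Combining, \( \|r\|_2\leq\rho/\sigma_{\text{min}}(\tilde X) \), and the pointwise inequalities of the previous paragraph give \( \|(\tilde X\tilde w)_-\|_2 + \|(\tilde X\tilde v)_-\|_2 \leq 2\|r\|_2 \leq 2\rho/\sigma_{\text{min}}(\tilde X) \).

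For the second claim I would compare \( \tilde w \) against the exact SOCP minimizer \( \bar w \) from \cref{prop:socp-decomp}. Because \( \bar w\in\calK \) and \( \bar v=u+\bar w\in\calK \) force \( \tilde X\bar w\succeq b \), the point \( \bar w \) is exactly feasible for CD-A, so its CD-A value is \( \rho\|\bar w\|_2 \); optimality of \( \tilde w \) then yields \( \rho\|\tilde w\|_2\leq\rho\|\bar w\|_2 \), i.e. \( \|\tilde w\|_2\leq\|\bar w\|_2 \) since \( \rho>0 \). The triangle inequality gives \( \|\tilde v\|_2+\|\tilde w\|_2\leq\|u\|_2+2\|\tilde w\|_2\leq\|u\|_2+2\|\bar w\|_2 \), so it suffices to know the individual bound \( \|\bar w\|_2\leq\kappa(\tilde X_{\calJ})\|u\|_2 \). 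This is exactly the quantity established inside the proof of \cref{prop:socp-decomp} (there the sum bound is obtained as \( \|\bar w\|_2\leq\kappa\|u\|_2 \) together with \( \|\bar v\|_2\leq(1+\kappa)\|u\|_2 \)), and invoking it closes the argument. I expect this to be the main obstacle: unlike the first and third parts, it requires reaching into the internal estimate of \cref{prop:socp-decomp} rather than using its statement as a black box, so the proof should be organized to make that individual bound explicit.

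For the third claim, fix an exact decomposition \( w^\star \) with \( \tilde Xw^\star\succeq b \), which exists since \( \calK \) is non-singular, so \( \calK-\calK=\R^d \). For each \( \rho_k>0 \), optimality of \( \tilde w_k \) against \( w^\star \) gives \( \tfrac12\|(b-\tilde X\tilde w_k)_+\|_2^2 + \rho_k\|\tilde w_k\|_2 \leq \rho_k\|w^\star\|_2 \). Reading off the two terms: the penalty part yields \( \|\tilde w_k\|_2\leq\|w^\star\|_2 \), so \( \{\tilde w_k\} \) (hence \( \{(\tilde v_k,\tilde w_k)\} \)) is bounded and admits a convergent subsequence by Bolzano--Weierstrass, proving existence; the loss part yields \( \|(b-\tilde X\tilde w_k)_+\|_2^2\leq 2\rho_k\|w^\star\|_2\to 0 \). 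Along any convergent subsequence \( \tilde w_{k_j}\to\tilde w_\infty \), continuity forces \( (b-\tilde X\tilde w_\infty)_+=0 \), i.e. \( \tilde X\tilde w_\infty\succeq b\succeq 0 \) so \( \tilde w_\infty\in\calK \), while \( \tilde X\tilde v_\infty = \tilde Xu+\tilde X\tilde w_\infty\succeq \tilde Xu+b\succeq 0 \) gives \( \tilde v_\infty\in\calK \); since \( \tilde v_\infty-\tilde w_\infty=u \) by construction, the limit is a feasible cone decomposition.
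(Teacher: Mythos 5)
Your argument for the first claim is the same as the paper's: first-order optimality gives \( \tilde X^\top (b - \tilde X\tilde w)_+ \in \rho\,\partial\|\tilde w\|_2 \), the subdifferential of the Euclidean norm lies in the unit ball, and full row-rank of \( \tilde X \) (which inherits the singular values of \( X \) because \( 2D - I \) is orthogonal) converts \( \|\tilde X^\top r\|_2 \le \rho \) into \( \|r\|_2 \le \rho/\sigma_{\text{min}}(\tilde X) \); your explicit pointwise inequalities \( (\tilde X\tilde w)_- \preceq r \) and \( (\tilde X\tilde v)_- \preceq r \) usefully expand the paper's terse ``only positive elements contribute to the norm.'' For the second claim you and the paper share the key observation that any feasible \( w' \) has CD-A value \( \rho\|w'\|_2 \), hence \( \|\tilde w\|_2 \le \|w'\|_2 \) for \emph{every} feasible \( w' \); however, your intermediate assertion that the proof of \cref{prop:socp-decomp} establishes \( \|\bar w\|_2 \le \kappa(\tilde X_{\calJ})\|u\|_2 \) is not what is shown there --- the quantity bounded by \( \kappa\|u\|_2 \) is the norm of the minimum-norm feasible point \( w' \) solving the relaxation \( \min_{w\in\calF} 2\|w\|_2 + \|u\|_2 \), not of the SOCP minimizer \( \bar w \) itself (for \( \bar w \) one only extracts \( \|\bar w\|_2 \le (1+2\kappa)\|u\|_2 \), which would degrade your final constant). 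Since your inequality \( \|\tilde w\|_2 \le \|w\|_2 \) holds for every feasible \( w \), the fix is simply to compare against that \( w' \) rather than against \( \bar w \); with that substitution your chain gives exactly \( (1+2\kappa)\|u\|_2 \). The third claim is where you genuinely depart from the paper: the paper compactifies the problem and invokes Berge's maximum theorem to obtain continuity of the optimal value in \( \rho \), whereas you compare \( \tilde w_k \) directly against a fixed feasible \( w^\star \) to read off both boundedness, \( \|\tilde w_k\|_2 \le \|w^\star\|_2 \), and vanishing constraint violation, \( \|(b-\tilde X\tilde w_k)_+\|_2^2 \le 2\rho_k\|w^\star\|_2 \to 0 \). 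Your route is more elementary and arguably cleaner; it reaches the same conclusion without any set-valued continuity machinery, at the cost of nothing beyond the existence of one feasible point, which non-singularity of \( \calK \) already supplies.
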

Proof in \cref{app:optimization}, where we also provide \cref{prop:approx-decomp-submatrices},
an alternative characterization in terms of sub-matrices \( \tilde X_{\calJ} \).
\autoref{prop:approx-decomp-bounds} shows it is straightforward to control the quality of the approximation by tuning \( \rho \).
In practice, we find CD-A with \( \rho \approx 10^{-10} \) yields competitive performance and is easily solved using R-FISTA.

\subsection{Solving the ReLU Problem}\label{sec:al-method}

The main difficulty in solving C-ReLU is the constraints.
Interior point methods~\citep{nesterov1994interior} and specialized conic solvers~\citep{odonoghue2016scs} can handle \( \calK_i \), but such methods require second-order information or repeated linear-system solves and scale poorly in both \( n \) and \( d \).
Instead, we develop an augmented Lagrangian (AL) method that uses R-FISTA as a sub-routine.

Recall \cref{thm:cone-elimination} established a sub-sampled problem equivalent to the full C-ReLU problem for which each \( \calK_i \) is non-singular.
These cones have an interior point if and only if they are non-singular (see \cref{lemma:affine-characterization}), implying the sub-sampled problem is strictly feasible and satisfies strong duality.
Letting \( \gamma, \zeta \in \R^{|\tilde \calD| \times n} \) be estimates of the optimal Lagrange multipliers, the augmented Lagrangian for~\eqref{convex-forms:eq:convex-relu-mlp} is
\begin{equation}\label{eq:augmented-lagrangian}
	\begin{aligned}
		\!\!\!\!\calL_\delta & (v,\!w,\!\gamma,\!\zeta)\!:=\!(\delta / 2)\!\!\sum_{D_i \in \tilde \calD}\!\!\big[\|(\gamma_i / \delta\!-\! \tilde X_i v_i)_+\|_2^2 \\
		                     & \hspace{2em} + \|(\zeta_i / \delta - \tilde X_i w_i)_+\|_2^2 \big] + F(v,w),
	\end{aligned}
\end{equation}
where \( F(v, w) \) is the primal objective and \( \tilde X_i = (2 D_i - I) X \).
Eq.~\ref{eq:augmented-lagrangian} is a penalty method and can recover an optimal primal-dual pair from \( \rbr{v_k, w_k} \in \argmin \calL_{\delta_k}(v, w, 0, 0) \) as \( \delta_k \into \infty \) \citep{nocedal1999numerical}.
However, choosing \( \delta_k \) is challenging in practice.

\begin{figure*}[t]
	\centering
	\includegraphics[width=0.9\linewidth]{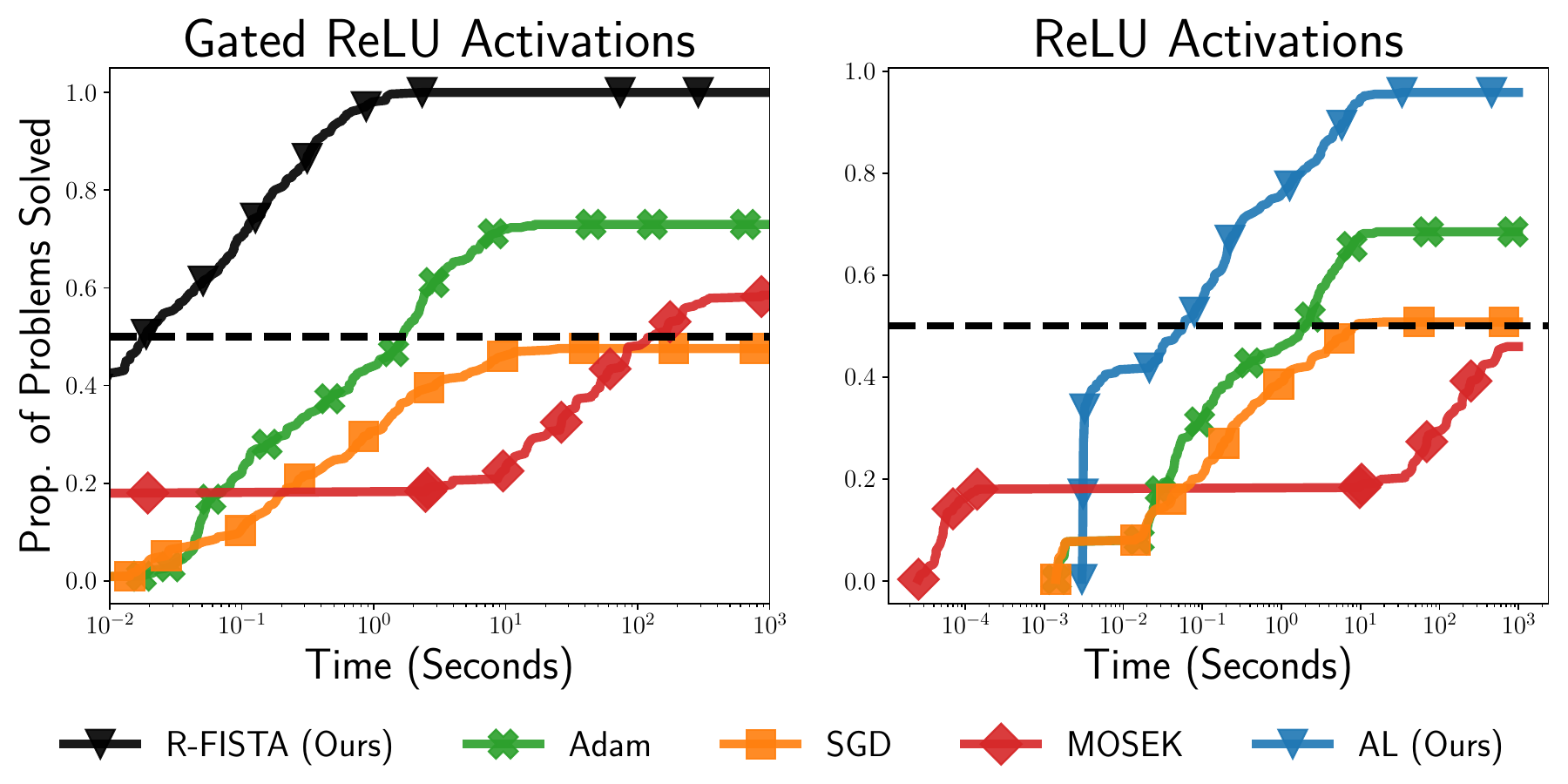}
	\vspace{-0.4cm}
	\caption{Performance profiles comparing (left) R-FISTA and MOSEK for the C-GReLU problem to Adam and SGD for NC-GReLU,
		and (right) the AL method and baselines for C-ReLU/NC-ReLU.
		A problem is solved when \( \rbr{F(x_k) - F(x^*)}/F(x^*) \leq 1 \), where \( F(x^*) \) is the smallest objective value found by any method.
		This rule is method-independent as the convex and non-convex problems share the same optimal objective value.
		See Appendix~\ref{app:performance-profiles} for alternative thresholds.
		Methods are judged by comparing time to a fixed proportion of problems solved (see dashed line at \( 50\% \)).
		R-FISTA and the AL method solve a higher proportion of problems faster than the baselines.
	}%
	\label{fig:performance-profiles}
	\vspace{-0.4cm}
\end{figure*}

Instead, the AL method performs proximal-point iterations on the dual \citep{rockafellar1976monotone, rockafellar1976augmented} via the iterations,
\begin{align}
	\rbr{v_{k+1}, w_{k+1}}
	            & = \argmin_{v, w} \calL_{\delta}(v, w, \gamma_k, \zeta_k), \label{eq:al-subroutine} \\
	\gamma_{k + 1}
	= (\gamma_k & - \delta \tilde X_i v_i)_+, \quad
	\zeta_{k + 1} = (\zeta_k - \delta \tilde X_i w_i)_+. \nonumber
\end{align}
The dual iterates of the AL method converge as \( O(1/\delta \cdot \epsilon) \). See \cref{thm:al-convergence-rate} for a proof going through proximal-point.

\subsubsection{Reliable Constrained Optimization}\label{sec:reliable-co}

AL methods are typically \emph{exterior point} solvers: \( (v_k, w_k) \) will approach the constraint set only as the dual problem is solved.
The complexity of maximizing the dual depends on the penalty strength, with \( \delta \gg 1 \) producing fast convergence.
However, \( \delta \) also affects the Lipschitz smoothness of \( \calL_\delta \) --- large \( \delta \) increases the curvature of the (one-sided) quadratic penalties --- and can make solving~\eqref{eq:al-subroutine} prohibitively expensive for first-order methods.
Thus, we choose \( \delta \) to balance convergence on the primal and dual problems.

\textbf{Choosing the Penalty Strength}:
it is common to set \( \delta \) to aggressively decrease the constraint gap~\citep{conn2013lancelot, murtagh1983minos},
\[ c_{\text{gap}} = \sum_{D_i \in \tilde \calD} \|(\tilde X_i v_i)_-|^2_2 + \|(\tilde X_i w_i)_-\|^2_2. \]
These rules pre-suppose second-order solvers and lead to very poor conditioning of \( \calL_\delta \).
Instead, we propose a simple ``windowing'' heuristic:
when solving Eq.~\ref{eq:al-subroutine} for \( \rbr{\gamma_1, \zeta_1} \), take \( \delta \) to ensure that \( c_{\text{gap}} \in \sbr{r_{\text{l}}, r_{\text{u}}} \).
This condition can be checked and enforced with minimal overhead by using a mild convergence criterion initially and helps avoid extreme behavior.
We found \( [r_{\text{l}}, r_{\text{u}}] = [0.01, 0.1] \) works well.

\textbf{Warm Starts}: The contours of \( \calL_\delta\rbr{\cdot, \cdot, \gamma_{k+1}, \zeta_{k+1}} \) typically change slowly when \( \delta \) is moderate.
In such cases, minimizing the augmented Lagrangian can be greatly sped-up by warm-starting with \( \rbr{v_k, w_k} \).

We obtain an efficient and robust AL method by combining warm-starts, our heuristic for \( \delta \), and the R-FISTA sub-solver.

\section{Experiments}\label{sec:experiments}

We now present experiments validating our optimizers.
We show that training neural networks via convex reformulations is faster and more robust than attempting to solve the non-convex training problem with SGD~\citep{robbins1951sgd} or Adam~\citep{kingma2015adam}.
Moreover, the models learned by convex optimization are consistent and generalize as well as Adam/SGD without their failure modes.

\subsection{Optimization Performance}\label{sec:optimization-performance}

\setlength{\tabcolsep}{3pt} 
\begin{table}[t]
	\centering
	\caption{Approximating the C-ReLU problem with cone decompositions.
		We compare the solution to C-GReLU (FISTA) with cone-decomposition by solving the min-norm program (CD-SOCP),
		the approximate cone decomposition (CD-A), and directly solving C-ReLU using the AL method.
		Exactly solving CD-SOCP is costly compared to direct solutions.
		Although CD-A gives only an approximate decomposition, it yields similar
		test performance to CD-SOCP and is two orders of magnitude faster.
	}%
	\label{table:cone-decomp}
	\vspace{0.1in}
	\begin{small}
		\begin{tabular}{lcccccccc}
			                 & \multicolumn{2}{c}{\textbf{R-FISTA}} & \multicolumn{2}{c}{\textbf{CD-SOCP}} & \multicolumn{2}{c}{\textbf{CD-A}} & \multicolumn{2}{c}{\textbf{AL}}                              \\ \cmidrule(lr){2-3}  \cmidrule(lr){4-5}  \cmidrule(lr){6-7} \cmidrule(lr){8-9}
			\textbf{Dataset} & Acc.                                 & Time                                 & Acc.                              & Time                            & Acc. & Time & Acc. & Time  \\ \midrule
			energy           & 86.3                                 & 0.12                                 & 86.3                              & 134.6                           & 86.3 & 1.56 & 83.7 & 5.05  \\
			ecoli            & 71.6                                 & 0.07                                 & 71.6                              & 149.7                           & 70.1 & 0.29 & 70.1 & 3.38  \\
			glass            & 64.3                                 & 0.13                                 & 64.3                              & 68.76                           & 64.3 & 0.57 & 61.9 & 3.0   \\
			pima             & 73.2                                 & 0.36                                 & 73.2                              & 37.68                           & 73.2 & 4.24 & 75.8 & 4.72  \\
			oocytes          & 78.6                                 & 0.98                                 & 79.1                              & 136.3                           & 78.0 & 4.68 & 74.2 & 81.68 \\ \bottomrule
		\end{tabular}
	\end{small}
\end{table}
\setlength{\tabcolsep}{6pt}

\textbf{Synthetic Classification}:
Convex-reformulations offer a stable approach to model training, especially outside of the over-parameterized setting.
To illustrate this, we create a realizable problem with \( X \sim \calN\rbr{0, \Sigma} \) and \( y = \text{sign}(h_{W_1,w_2}(X)) \), where \( h_{W_1, w_2} \) is a two-layer ReLU network with \( m = 100 \) and random Gaussian weights.
We try to recover this model with ten independent runs of SGD and compare against our AL method on the C-ReLU problem.
For C-ReLU, \( \tilde \calD \) is 100 random arrangements augmented with all activations generated while solving the non-convex problem with SGD.\footnote{This guarantees the non-convex model is in the model space of the convex program.}
\cref{fig:synthetic-classification} shows that SGD converges to sub-optimal stationary points four times, while every run of the convex solver yields a model with perfect training accuracy.
See Appendix~\ref{app:synthetic-classification} for additional results.

\textbf{Large-Scale Comparison}:
\cref{fig:performance-profiles} presents two performance profiles~\citep{dolan2002benchmarking} comparing the optimization performance of R-FISTA and our AL method to Adam, SGD, and the interior-point solver MOSEK~\citep{mosek}.
MOSEK solves the convex reformulations, while Adam and SGD solve the original non-convex problems.
The profiles aggregate performance on \( 438 \) problems generated by considering six regularization parameters for \( 73 \) datasets taken from the UCI repository~\citep{dua2019uci}.
We use the default parameters for MOSEK;
for Adam and SGD, we use a batch-size of \( 10\% \) of the data and take the \emph{best} run per-problem over a grid of seven step-sizes and three different random seeds.
See Appendix~\ref{app:performance-profiles} for details.

We make the following observations: (i) R-FISTA solves \( 50 \% \) of problems two orders of magnitude faster than Adam and SGD; (ii) MOSEK scales poorly and frequently runs out of memory despite being allocated 32GB --- \( 3 \times \) more than the other solvers; (iii) although the ReLU problem is significantly harder, the AL solver converges faster and solves \( 25\% \) more problems than the best baseline.

\begin{figure}[t]
	\centering
	\ifdefined\smallPDF
		\includegraphics[width=0.98\linewidth]{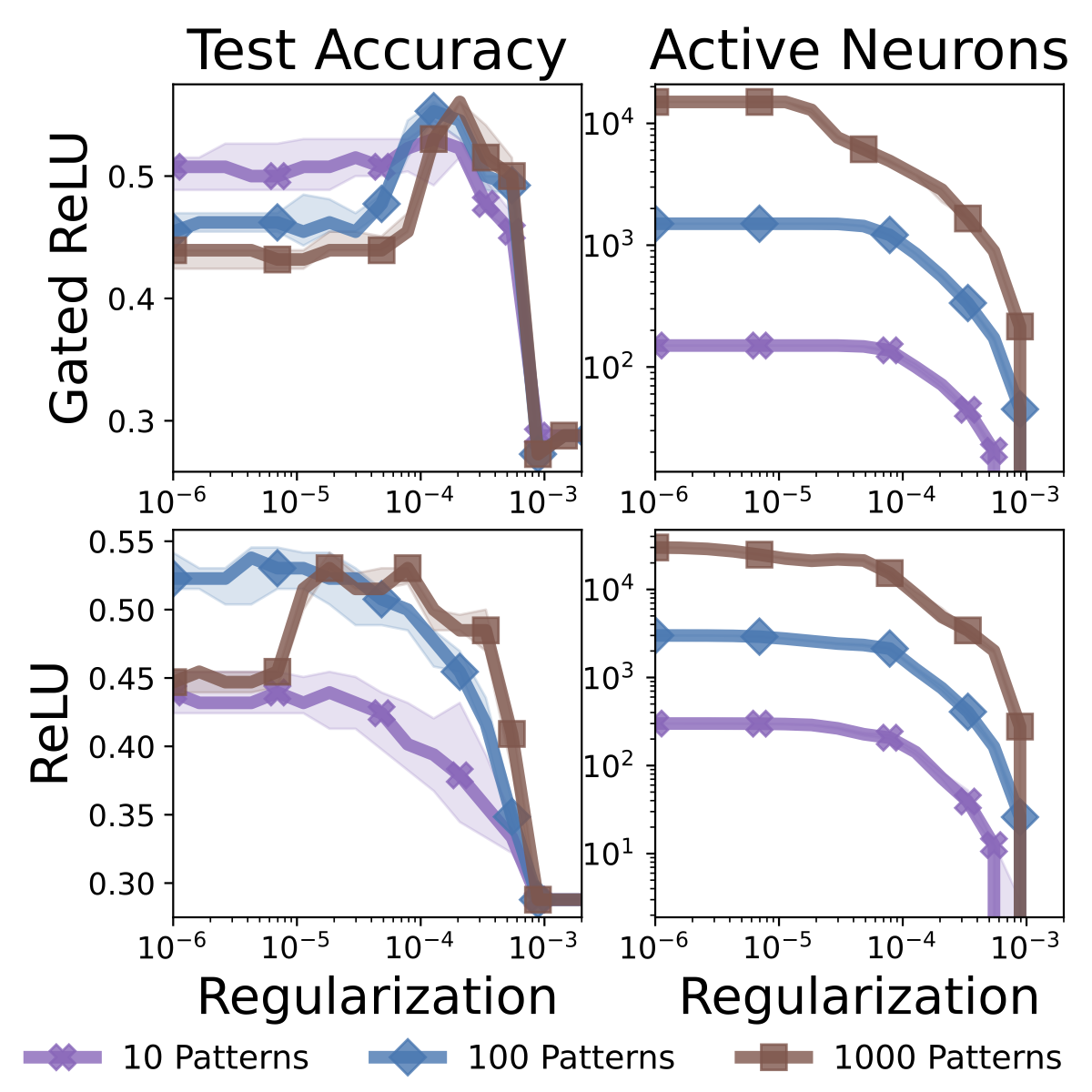}
		\vspace{-0.3cm}
	\else
		\includegraphics[width=0.98\linewidth]{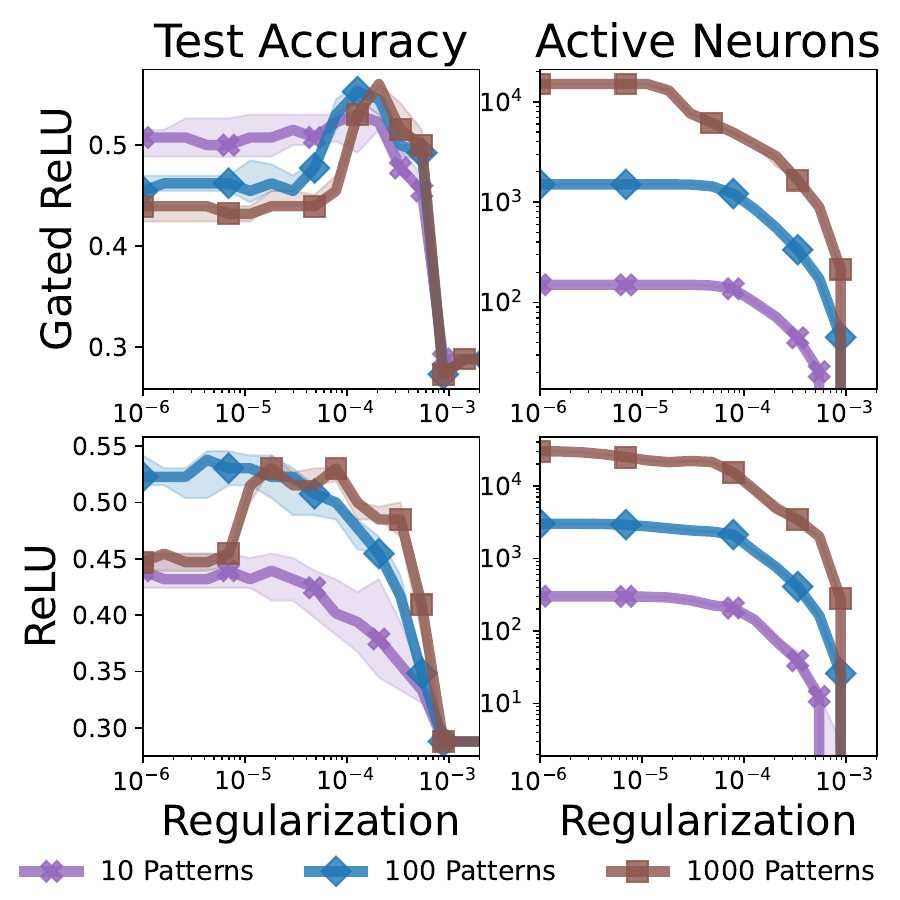}
		\vspace{-0.3cm}
	\fi
	\caption{
		Effect of sampling activation patterns on test accuracy for networks trained using the C-ReLU and C-GReLU problems on the \texttt{primary-tumor} dataset.
		We consider a grid of regularization parameters and plot median (solid line) and first and third quartiles (shaded region) over 10 random samplings of \( \tilde \calD \), where \( |\tilde \calD| \) is limited to 10, 100, or 1000 patterns.
	}%
	\label{fig:reg-plot}%
	\vspace{-0.3cm}
\end{figure}

\textbf{Cone Decompositions}:
We compare optimizing the C-ReLU problem directly using our AL method against \cref{alg:cone-decomp}.
We try two decomposition methods: CD-SOCP, which sets \( R(u,v) = \norm{u}_2 + \norm{v}_2 \) and solves the resulting SOCP, and
CD-A, which approximates the cone decomposition problem by solving Eq.~\eqref{eq:cd-approx}.
We use MOSEK to solve the SOCP.
\cref{table:cone-decomp} gives median test accuracy and time-to-solution for each approach on five UCI datasets.
R-FISTA is an order of magnitude faster than AL and two orders faster CD-SOCP, primarily because SOCPs must be solved on CPU.
CD-A performs comparably to CD-SOCP and is faster than solving the C-ReLU problem
with our AL method.
See \cref{app:cone-decomps} for experimental details and additional results, including model norms.

\subsection{Model Performance}\label{sec:model-performance}

\textbf{Sensitivity and Regularization}:
Figure~\ref{fig:reg-plot} shows the effects of sub-sampling activation patterns on the C-ReLU and C-GReLU problems for the \texttt{primary-tumor} dataset.
Surprisingly, we find that the distribution of test-accuracies is stable across regularization parameters even when the number of patterns is small.
We also observe an inverted-U shaped bias-variance trade-off as the regularization strength is increased, with sparse models showing the best generalization.
This contrasts the double descent phenomena frequently observed with non-convex neural networks~\citep{belkin2019reconciling, loog2020brief, nakkiran2020double}.
See Appendix~\ref{app:activation-pattern-ablations} for results on a further nine UCI datasets.

\setlength{\tabcolsep}{3pt} 
\begin{table}[t]
	\centering
	\caption{Test accuracies for our convex solvers, random forests (RF), SVMs with a linear kernel (Linear)
		and SVMs with an RBF kernel (RBF) for binary classification on 18 UCI datasets.
		C-GReLU and C-ReLU both obtain the best test accuracy on 9 datasets,
		while the most competitive baseline is best on just 4.
	}%
	\label{table:binary-uci-accuracies}
	\vspace{0.1in}
	\begin{small}
		\begin{tabular}{lccccc} \toprule
			\textbf{Dataset} & \textbf{C-GReLU} & \textbf{C-ReLU} & \textbf{RF}   & \textbf{Linear} & \textbf{RBF}  \\ \midrule
			blood            & 79.9             & \textbf{80.5}   & 75.8          & 74.5            & 77.9          \\
			chess-krvkp      & \textbf{99.2}    & 98.6            & 98.9          & 97.2            & 98.4          \\
			conn-bench       & \textbf{90.2}    & 85.4            & 73.2          & 68.3            & 85.4          \\
			cylinder-bands   & 76.5             & \textbf{78.4}   & 77.5          & 71.6            & 71.6          \\
			fertility        & \textbf{80.0}    & \textbf{80.0}   & 75.0          & 75.0            & 75.0          \\
			heart-hung.      & \textbf{86.2}    & \textbf{86.2}   & 84.5          & 84.5            & \textbf{86.2} \\
			hill-valley      & \textbf{76.0}    & 68.6            & 57.9          & 62.0            & 70.2          \\
			ilpd-liver       & 72.4             & \textbf{74.1}   & 66.4          & 71.6            & 71.6          \\
			mammographic     & 77.6             & 78.6            & \textbf{80.7} & \textbf{80.7}   & 80.2          \\
			monks-1          & \textbf{100}     & \textbf{100}    & 95.8          & 79.2            & 83.3          \\
			musk-1           & 94.7             & \textbf{95.8}   & 92.6          & 86.3            & \textbf{95.8} \\
			ozone            & \textbf{97.6}    & \textbf{97.6}   & 97.4          & 97.2            & 97.4          \\
			pima             & 74.5             & 74.5            & \textbf{76.5} & 75.2            & 73.2          \\
			planning         & \textbf{69.4}    & 63.9            & 66.7          & 66.7            & \textbf{69.4} \\
			spambase         & 93.5             & 93.6            & \textbf{94.1} & 92.2            & 93.6          \\
			spectf           & \textbf{87.5}    & 75.0            & 68.8          & 68.8            & 68.8          \\
			statlog-german   & 74.0             & \textbf{77.5}   & 73.5          & 75.0            & 75.5          \\
			tic-tac-toe      & 99.0             & 99.0            & 99.5          & 98.4            & \textbf{100}  \\ \bottomrule
		\end{tabular}
	\end{small}
\end{table}
\setlength{\tabcolsep}{6pt}


\textbf{UCI Classification}:
Table~\ref{table:binary-uci-accuracies} compares the performance of C-ReLU and C-GReLU
with random forests \citep{breiman2001randomforests} and SVMs \citep{boser1992svm} for binary classification on 18 UCI datsasets.
For all methods, we report test accuracy for the best hyperparameters as selected
by cross-validation.
Taken together, C-ReLU and C-GReLU perform best on 14 problems,
showing two-layer neural networks offer an effective, easy-to-train
alternative to common baselines.
Results for additional datasets are given in Appendix~\ref{app:uci-accuracies}

\textbf{Non-Convex Solvers}:
We compare the generalization of C-ReLU and C-GReLU with that of the non-convex problems on 20 UCI datasets.
For each dataset/problem, we select the regularization strength using five-fold cross validation.
For NC-ReLU and NC-GReLU, we use Adam and SGD and tune the step-sizes by cross-validation.
See Appendix~\ref{app:non-convex-solvers} for details.
Table~\ref{table:non-convex-solvers} summarizes the test accuracy results.
We find that our convex programs generalize as well as the non-convex baselines for a fraction of the training time.

\begin{table}[t]
	\centering
	\caption{Median test accuracies from five restarts on a subset of the UCI datasets.
		Results are presented as \texttt{Gated ReLU} / \texttt{ReLU}.
		Overall, we find the convex reformulations have comparable generalization to the non-convex networks.
		Note the catastrophic failure of SGD on \texttt{ecoli}.
		See Appendix~\ref{app:non-convex-solvers} for quartiles.}%
	\label{table:non-convex-solvers}
	\vspace{0.1in}
	\begin{small}
		\begin{tabular}{lccc} \toprule
			\textbf{Dataset} & \textbf{Convex} & \textbf{Adam} & \textbf{SGD} \\ \midrule
			magic            & 86.9  / 85.9    & 82.9 / 86.9   & 82.1 / 86.4  \\
			statlog-heart    & 79.6  / 83.3    & 85.2 / 83.3   & 83.3 / 79.6  \\
			mushroom         & 100 / 100       & 97.6 / 100    & 96.9 / 99.9  \\
			vertebral-col.   & 87.1  / 90.3    & 90.3 / 90.3   & 90.3 / 88.7  \\
			cardiotocogr.    & 90.1  / 89.9    & 85.6 / 36.5   & 85.2 / 88.9  \\
			abalone          & 63.8  / 66.2    & 58.7 / 65.3   & 58.1 / 66.1  \\
			annealing        & 90.6  / 90.6    & 86.2 / 93.7   & 86.2 / 88.7  \\
			car              & 89.9  / 87.8    & 83.8 / 94.8   & 83.2 / 90.1  \\
			bank             & 89.8  / 89.8    & 89.9 / 90.8   & 89.8 / 90.5  \\
			breast-cancer    & 68.4  / 68.4    & 68.4 / 64.9   & 70.2 / 68.4  \\
			page-blocks      & 96.8  / 94.0    & 92.1 / 97.1   & 92.4 / 96.9  \\
			contrac          & 45.9  / 55.1    & 53.1 / 54.4   & 53.4 / 53.7  \\
			congressional    & 63.2  / 63.2    & 64.4 / 62.1   & 66.7 / 67.8  \\
			spambase         & 93.4  / 93.3    & 91.6 / 93.5   & 91.2 / 93.2  \\
			synthetic        & 97.5  / 98.3    & 98.3 / 96.7   & 97.5 / 96.7  \\
			musk-1           & 93.7  / 93.7    & 93.7 / 96.8   & 94.7 / 95.8  \\
			ringnorm         & 69.8  / 77.0    & 77.0 / 77.3   & 77.2 / 77.4  \\
			ecoli            & 82.1  / 80.6    & 79.1 / 82.1   & 4.5  / 80.6  \\
			monks-2          & 69.7  / 69.7    & 66.7 / 69.7   & 60.6 / 72.7  \\
			hill-valley      & 62.0  / 65.3    & 57.0 / 62.8   & 58.7 / 55.4  \\ \bottomrule
		\end{tabular}
		\vspace{-0.7cm}
	\end{small}
\end{table}

\textbf{Image Classification:}
We study the generalization performance of the Gated ReLU model for image classification on the MNIST and CIFAR-10 datasets \citep{lecun1998gradient, krizhevsky2009learning}.
We compare R-FISTA for C-GReLU to solving the NC-GReLU problem with SGD, Adam, and Adagrad~\citep{duchi2011adagrad}.
We choose the regularization strength and step sizes for each dataset-method pair using a train/validation split (see Appendix~\ref{app:image-datasets}).
Table~\ref{table:image-datasets} shows that R-FISTA scales well to these large-scale experiments, with generalization comparable with the non-convex solvers.
This reflects our theory, which shows that these methods are fundamentally solving the same problem.


\begin{table}
	\centering
	\caption{Test accuracy of R-FISTA for the C-GReLU problem compared to SGMs for NC-GReLU on two image classification tasks.
		Models trained using the convex program have comparable test accuracy to the non-convex formulation on MNIST and are slightly better on CIFAR-10.
	}%
	\label{table:image-datasets}%
	\vspace{0.1in}
	\begin{small}
		\begin{tabular}{lcccc} \toprule
			\textbf{Dataset} & \textbf{Convex} & \textbf{Adam} & \textbf{SGD} & \textbf{Adagrad} \\ \midrule
			MNIST            & 97.6            & 98.0          & 97.2         & 97.5             \\
			CIFAR-10         & 56.4            & 50.1          & 54.3         & 54.2             \\\bottomrule
		\end{tabular}
	\end{small}
\end{table}


\subsection{Additional Experiments}\label{sec:additional-experiments}

We defer additional experiments to the supplementary material due to space constraints.
In Appendix~\ref{app:acceleration-exps}, we study the effects of acceleration, restarts, and line-search on the performance of the R-FISTA method and conclude that all three components are key to the efficiency of the optimization procedure.
Appendix~\ref{app:step-size-initialization} presents an ablation study for the step-size initialization procedure in R-FISTA, which is shown to be robust to the choice of \( c \).
Similarly, Appendix~\ref{app:delta-heuristic-exps} examines the windowing heuristic for the penalty strength in our AL method and shows the strategy is comparable to the best fixed \( \delta \) found by grid search.



\section{Conclusion}\label{sec:conclusion}
We propose optimization algorithms for convex reformulations of two-layer neural networks with ReLU activations.
By studying the problem constraints, we split the space of ReLU activations into singular patterns, which may be safely ignored, and non-singular patterns.
As a result, we show that ReLU networks can be trained by decomposing the solution to an unconstrained Gated ReLU training problem onto a difference of polyhedral cones.
Experimentally, we test our algorithms on more than \( 70 \) different datasets, demonstrating that convex optimization is faster and more reliable than popular training methods like Adam and SGD.

Many directions are left to future work.
Efficiently solving the cone decomposition problem is key to improving on our augmented Lagrangian method, but existing conic solvers rely on CPU computation.
We believe developing methods which can natively leverage GPU acceleration is necessary.
Finally, we hope to extend convex optimization to deeper networks by layer-wise training, which has been shown to perform well on ImageNet~\cite{belilovsky2019greedy}.

\ifdefined\isaccepted
	\section*{Acknowledgements}

	This work was partially supported by an Army Research Office Early Career Award, and the National Science Foundation under grants ECCS-2037304, DMS-2134248.
	Aaron Mishkin was supported by the NSF Graduate Research Fellowship Program, Grant No. DGE-1656518 and the NSERC PGS D program, Grant No. PGSD3-547242-2020.
	Arda Sahiner was supported by the National Institutes of Health, Grant No. R01EB009690, U01EB029427.
	Computational resources were provided by the Stanford Research Computing Center.
	We would like to thank Frederik Kunstner and Tolga Ergen for many insightful discussions.
\fi

\newpage

\bibliography{
	bib/monographs.bib,
	bib/gradient_methods.bib,
	bib/lower_bounds.bib,
	bib/assumptions.bib,
	bib/interpolation.bib,
	bib/step_sizes.bib,
	bib/datasets.bib,
	bib/general_refs.bib}

\begin{thebibliography}{72}
\providecommand{\natexlab}[1]{#1}
\providecommand{\url}[1]{\texttt{#1}}
\expandafter\ifx\csname urlstyle\endcsname\relax
  \providecommand{\doi}[1]{doi: #1}\else
  \providecommand{\doi}{doi: \begingroup \urlstyle{rm}\Url}\fi

\bibitem[Abadi et~al.(2016)Abadi, Barham, Chen, Chen, Davis, Dean, Devin, Ghemawat, Irving, Isard, et~al.]{abadi2016tensorflow}
Abadi, M., Barham, P., Chen, J., Chen, Z., Davis, A., Dean, J., Devin, M., Ghemawat, S., Irving, G., Isard, M., et~al.
\newblock Tensor{F}low: A system for large-scale machine learning.
\newblock In \emph{12th $\{${USENIX{}$\}$} symposium on operating systems design and implementation ($\{${OSDI{}$\}$} 16)}, pp.\  265--283, 2016.

\bibitem[Agrawal et~al.(2018)Agrawal, Verschueren, Diamond, and Boyd]{agrawal2018rewriting}
Agrawal, A., Verschueren, R., Diamond, S., and Boyd, S.
\newblock A rewriting system for convex optimization problems.
\newblock \emph{Journal of Control and Decision}, 5\penalty0 (1):\penalty0 42--60, 2018.

\bibitem[{Allen Zhu} \& Orecchia(2017){Allen Zhu} and Orecchia]{allen2014linear}
{Allen Zhu}, Z. and Orecchia, L.
\newblock Linear {C}oupling: An ultimate unification of gradient and mirror descent.
\newblock In Papadimitriou, C.~H. (ed.), \emph{8th Innovations in Theoretical Computer Science Conference, {ITCS} 2017}, volume~67 of \emph{LIPIcs}, pp.\  3:1--3:22. Schloss Dagstuhl - Leibniz-Zentrum f{\"{u}}r Informatik, 2017.

\bibitem[Amos et~al.(2017)Amos, Xu, and Kolter]{amos2017input}
Amos, B., Xu, L., and Kolter, J.~Z.
\newblock Input convex neural networks.
\newblock In Precup, D. and Teh, Y.~W. (eds.), \emph{Proceedings of the 34th International Conference on Machine Learning, {ICML} 2017, Sydney, {NSW}, Australia, 6-11 August 2017}, volume~70 of \emph{Proceedings of Machine Learning Research}, pp.\  146--155. {PMLR}, 2017.

\bibitem[ApS(2019)]{mosek}
ApS, M.
\newblock \emph{{MOSEK} Optimizer {API} for Python 9.3.6}, 2019.
\newblock URL \url{https://docs.mosek.com/latest/pythonapi/index.html}.

\bibitem[Ausubel \& Deneckere(1993)Ausubel and Deneckere]{ausubel1993generalized}
Ausubel, L.~M. and Deneckere, R.~J.
\newblock A generalized theorem of the maximum.
\newblock \emph{Economic Theory}, 3\penalty0 (1):\penalty0 99--107, 1993.

\bibitem[Bach(2017)]{bach2017breaking}
Bach, F.~R.
\newblock Breaking the curse of dimensionality with convex neural networks.
\newblock \emph{J. Mach. Learn. Res.}, 18:\penalty0 19:1--19:53, 2017.

\bibitem[Bai et~al.(2022)Bai, Gautam, and Sojoudi]{bai2022efficient}
Bai, Y., Gautam, T., and Sojoudi, S.
\newblock Efficient global optimization of two-layer {ReLU} networks: Quadratic-time algorithms and adversarial training.
\newblock \emph{arXiv preprint arXiv:2201.01965}, 2022.

\bibitem[Beck \& Teboulle(2009)Beck and Teboulle]{beck2009fista}
Beck, A. and Teboulle, M.
\newblock A fast iterative shrinkage-thresholding algorithm for linear inverse problems.
\newblock \emph{{SIAM} J. Imaging Sci.}, 2\penalty0 (1):\penalty0 183--202, 2009.

\bibitem[Belilovsky et~al.(2019)Belilovsky, Eickenberg, and Oyallon]{belilovsky2019greedy}
Belilovsky, E., Eickenberg, M., and Oyallon, E.
\newblock Greedy layerwise learning can scale to {ImageNet}.
\newblock In \emph{International conference on machine learning}, pp.\  583--593. PMLR, 2019.

\bibitem[Belkin et~al.(2019)Belkin, Hsu, Ma, and Mandal]{belkin2019reconciling}
Belkin, M., Hsu, D., Ma, S., and Mandal, S.
\newblock Reconciling modern machine-learning practice and the classical bias--variance trade-off.
\newblock \emph{Proceedings of the National Academy of Sciences}, 116\penalty0 (32):\penalty0 15849--15854, 2019.

\bibitem[Bengio(2012)]{bengio2012practical}
Bengio, Y.
\newblock Practical recommendations for gradient-based training of deep architectures.
\newblock In Montavon, G., Orr, G.~B., and M{\"{u}}ller, K. (eds.), \emph{Neural Networks: Tricks of the Trade - Second Edition}, volume 7700 of \emph{Lecture Notes in Computer Science}, pp.\  437--478. Springer, 2012.

\bibitem[Bengio et~al.(2006)Bengio, Roux, Vincent, Delalleau, and Marcotte]{bengio2006convex}
Bengio, Y., Roux, N.~L., Vincent, P., Delalleau, O., and Marcotte, P.
\newblock Convex neural networks.
\newblock \emph{Advances in neural information processing systems}, 18:\penalty0 123, 2006.

\bibitem[Bertsekas(2009)]{bertsekas2009convex}
Bertsekas, D.
\newblock \emph{Convex optimization theory}, volume~1.
\newblock Athena Scientific, 2009.

\bibitem[Bertsekas(1997)]{bertsekas1997nonlinear}
Bertsekas, D.~P.
\newblock Nonlinear programming.
\newblock \emph{Journal of the Operational Research Society}, 48\penalty0 (3):\penalty0 334--334, 1997.

\bibitem[Bertsekas(2014)]{bertsekas2014constrained}
Bertsekas, D.~P.
\newblock \emph{Constrained optimization and Lagrange multiplier methods}.
\newblock Academic press, 2014.

\bibitem[Bhojanapalli et~al.(2021)Bhojanapalli, Wilber, Veit, Rawat, Kim, Menon, and Kumar]{bhojanapalli2021reproducibility}
Bhojanapalli, S., Wilber, K., Veit, A., Rawat, A.~S., Kim, S., Menon, A.~K., and Kumar, S.
\newblock On the reproducibility of neural network predictions.
\newblock \emph{CoRR}, abs/2102.03349, 2021.

\bibitem[Birgin \& Mart{\'{i}}nez(2014)Birgin and Mart{\'{i}}nez]{birgin2014pratical}
Birgin, E.~G. and Mart{\'{i}}nez, J.~M.
\newblock \emph{Practical augmented Lagrangian methods for constrained optimization}, volume~10 of \emph{Fundamentals of algorithms}.
\newblock {SIAM}, 2014.

\bibitem[Blum \& Rivest(1988)Blum and Rivest]{blum1988npcomplete}
Blum, A. and Rivest, R.~L.
\newblock Training a 3-node neural network is {NP-Complete}.
\newblock In Touretzky, D.~S. (ed.), \emph{Advances in Neural Information Processing Systems 1, {[NIPS} Conference, Denver, Colorado, {USA, 1988]}}, pp.\  494--501. Morgan Kaufmann, 1988.

\bibitem[Boser et~al.(1992)Boser, Guyon, and Vapnik]{boser1992svm}
Boser, B.~E., Guyon, I., and Vapnik, V.
\newblock A training algorithm for optimal margin classifiers.
\newblock In Haussler, D. (ed.), \emph{Proceedings of the Fifth Annual {ACM} Conference on Computational Learning Theory, {COLT} 1992, Pittsburgh, {PA}, {USA}, July 27-29, 1992}, pp.\  144--152. {ACM}, 1992.

\bibitem[Breiman(2001)]{breiman2001randomforests}
Breiman, L.
\newblock Random forests.
\newblock \emph{Mach. Learn.}, 45\penalty0 (1):\penalty0 5--32, 2001.

\bibitem[Conn et~al.(2013)Conn, Gould, and Toint]{conn2013lancelot}
Conn, A.~R., Gould, G., and Toint, P.~L.
\newblock \emph{{LANCELOT}: a Fortran package for large-scale nonlinear optimization (Release A)}, volume~17.
\newblock Springer Science \& Business Media, 2013.

\bibitem[Delgado et~al.(2014)Delgado, Cernadas, Barro, and Amorim]{delgado2014hundreds}
Delgado, M.~F., Cernadas, E., Barro, S., and Amorim, D.~G.
\newblock Do we need hundreds of classifiers to solve real world classification problems?
\newblock \emph{J. Mach. Learn. Res.}, 15\penalty0 (1):\penalty0 3133--3181, 2014.

\bibitem[Diamond \& Boyd(2016)Diamond and Boyd]{diamond2016cvxpy}
Diamond, S. and Boyd, S.
\newblock {CVXPY}: {A} {P}ython-embedded modeling language for convex optimization.
\newblock \emph{Journal of Machine Learning Research}, 17\penalty0 (83):\penalty0 1--5, 2016.

\bibitem[Dolan \& Mor{\'e}(2002)Dolan and Mor{\'e}]{dolan2002benchmarking}
Dolan, E.~D. and Mor{\'e}, J.~J.
\newblock Benchmarking optimization software with performance profiles.
\newblock \emph{Mathematical programming}, 91\penalty0 (2):\penalty0 201--213, 2002.

\bibitem[Dua \& Graff(2017)Dua and Graff]{dua2019uci}
Dua, D. and Graff, C.
\newblock {UCI} machine learning repository, 2017.
\newblock URL \url{http://archive.ics.uci.edu/ml}.

\bibitem[Duchi \& Singer(2009)Duchi and Singer]{duchi2009forwardbackwardsplitting}
Duchi, J.~C. and Singer, Y.
\newblock Efficient online and batch learning using forward backward splitting.
\newblock \emph{J. Mach. Learn. Res.}, 10:\penalty0 2899--2934, 2009.

\bibitem[Duchi et~al.(2011)Duchi, Hazan, and Singer]{duchi2011adagrad}
Duchi, J.~C., Hazan, E., and Singer, Y.
\newblock Adaptive subgradient methods for online learning and stochastic optimization.
\newblock \emph{J. Mach. Learn. Res.}, 12:\penalty0 2121--2159, 2011.

\bibitem[Ergen \& Pilanci(2021{\natexlab{a}})Ergen and Pilanci]{ergen2021beyond}
Ergen, T. and Pilanci, M.
\newblock Global optimality beyond two layers: Training deep {ReLU} networks via convex programs.
\newblock In Meila, M. and Zhang, T. (eds.), \emph{Proceedings of the 38th International Conference on Machine Learning, {ICML} 2021, 18-24 July 2021, Virtual Event}, volume 139 of \emph{Proceedings of Machine Learning Research}, pp.\  2993--3003. {PMLR}, 2021{\natexlab{a}}.

\bibitem[Ergen \& Pilanci(2021{\natexlab{b}})Ergen and Pilanci]{ergen2021implicit}
Ergen, T. and Pilanci, M.
\newblock Implicit convex regularizers of {CNN} architectures: Convex optimization of two- and three-layer networks in polynomial time.
\newblock In \emph{International Conference on Learning Representations: {ICLR} 2021}, 2021{\natexlab{b}}.

\bibitem[Ergen \& Pilanci(2021{\natexlab{c}})Ergen and Pilanci]{ergen2021revealing}
Ergen, T. and Pilanci, M.
\newblock Revealing the structure of deep neural networks via convex duality.
\newblock In \emph{International Conference on Machine Learning}, pp.\  3004--3014. PMLR, 2021{\natexlab{c}}.

\bibitem[Ergen et~al.(2021)Ergen, Sahiner, Ozturkler, Pauly, Mardani, and Pilanci]{ergen2021demystifying}
Ergen, T., Sahiner, A., Ozturkler, B., Pauly, J.~M., Mardani, M., and Pilanci, M.
\newblock Demystifying batch normalization in relu networks: Equivalent convex optimization models and implicit regularization.
\newblock In \emph{International Conference on Learning Representations}, 2021.

\bibitem[Fiat et~al.(2019)Fiat, Malach, and Shalev-Shwartz]{fiat2019decoupling}
Fiat, J., Malach, E., and Shalev-Shwartz, S.
\newblock Decoupling gating from linearity.
\newblock \emph{arXiv preprint arXiv:1906.05032}, 2019.

\bibitem[Fridovich-Keil \& Recht(2019)Fridovich-Keil and Recht]{fridovich2019choosing}
Fridovich-Keil, S. and Recht, B.
\newblock Choosing the step size: Intuitive line search algorithms with efficient convergence.
\newblock In \emph{The 11th Workshop on Optimization for Machine Learning ({OPT} 2019)}, 2019.

\bibitem[Ge et~al.(2015)Ge, Huang, Jin, and Yuan]{ge2015escaping}
Ge, R., Huang, F., Jin, C., and Yuan, Y.
\newblock Escaping from saddle points - online stochastic gradient for tensor decomposition.
\newblock In Gr{\"{u}}nwald, P., Hazan, E., and Kale, S. (eds.), \emph{Proceedings of The 28th Conference on Learning Theory, {COLT} 2015, Paris, France, July 3-6, 2015}, volume~40 of \emph{{JMLR} Workshop and Conference Proceedings}, pp.\  797--842. JMLR.org, 2015.

\bibitem[Giselsson \& Boyd(2014)Giselsson and Boyd]{giselsson2014restart}
Giselsson, P. and Boyd, S.~P.
\newblock Monotonicity and restart in fast gradient methods.
\newblock In \emph{53rd {IEEE} Conference on Decision and Control, {CDC} 2014, Los Angeles, {CA}, {USA}, December 15-17, 2014}, pp.\  5058--5063. {IEEE}, 2014.

\bibitem[Goberna \& L{\'{o}}pez(2002)Goberna and L{\'{o}}pez]{goberna2002semi}
Goberna, M.~A. and L{\'{o}}pez, M.~A.
\newblock Linear semi-infinite programming theory: An updated survey.
\newblock \emph{Eur. J. Oper. Res.}, 143\penalty0 (2):\penalty0 390--405, 2002.
\newblock \doi{10.1016/S0377-2217(02)00327-2}.
\newblock URL \url{https://doi.org/10.1016/S0377-2217(02)00327-2}.

\bibitem[Goodfellow et~al.(2016)Goodfellow, Bengio, and Courville]{goodfellow2016deeplearning}
Goodfellow, I., Bengio, Y., and Courville, A.
\newblock \emph{Deep Learning}.
\newblock MIT Press, 2016.
\newblock \url{http://www.deeplearningbook.org}.

\bibitem[G{\"u}ler(1991)]{guler1991convergence}
G{\"u}ler, O.
\newblock On the convergence of the proximal point algorithm for convex minimization.
\newblock \emph{SIAM journal on control and optimization}, 29\penalty0 (2):\penalty0 403--419, 1991.

\bibitem[Gupta et~al.(2021)Gupta, Bartan, Ergen, and Pilanci]{gupta2021exact}
Gupta, V., Bartan, B., Ergen, T., and Pilanci, M.
\newblock Exact and relaxed convex formulations for shallow neural autoregressive models.
\newblock In \emph{International Conference on Acoustics, Speech, and Signal Processing}, 2021.

\bibitem[Henderson et~al.(2018)Henderson, Islam, Bachman, Pineau, Precup, and Meger]{henderson2018deep}
Henderson, P., Islam, R., Bachman, P., Pineau, J., Precup, D., and Meger, D.
\newblock Deep reinforcement learning that matters.
\newblock In McIlraith, S.~A. and Weinberger, K.~Q. (eds.), \emph{Proceedings of the Thirty-Second {AAAI} Conference on Artificial Intelligence, {(AAAI-18)}, the 30th innovative Applications of Artificial Intelligence {(IAAI-18)}, and the 8th {AAAI} Symposium on Educational Advances in Artificial Intelligence {(EAAI-18)}, New Orleans, Louisiana, {USA}, February 2-7, 2018}, pp.\  3207--3214. {AAAI} Press, 2018.

\bibitem[Huang et~al.(2006)Huang, Zhu, and Siew]{huang2005elm}
Huang, G., Zhu, Q., and Siew, C.~K.
\newblock Extreme learning machine: Theory and applications.
\newblock \emph{Neurocomputing}, 70\penalty0 (1-3):\penalty0 489--501, 2006.

\bibitem[Kang et~al.(2015)Kang, Kang, and Jung]{kang2015inexact}
Kang, M., Kang, M., and Jung, M.
\newblock Inexact accelerated augmented {L}agrangian methods.
\newblock \emph{Comput. Optim. Appl.}, 62\penalty0 (2):\penalty0 373--404, 2015.

\bibitem[Kingma \& Ba(2015)Kingma and Ba]{kingma2015adam}
Kingma, D.~P. and Ba, J.
\newblock Adam: {A} method for stochastic optimization.
\newblock In Bengio, Y. and LeCun, Y. (eds.), \emph{3rd International Conference on Learning Representations, {ICLR} 2015}, 2015.

\bibitem[Krizhevsky et~al.(2009)Krizhevsky, Hinton, et~al.]{krizhevsky2009learning}
Krizhevsky, A., Hinton, G., et~al.
\newblock Learning multiple layers of features from tiny images.
\newblock 2009.

\bibitem[LeCun et~al.(1998)LeCun, Bottou, Bengio, and Haffner]{lecun1998gradient}
LeCun, Y., Bottou, L., Bengio, Y., and Haffner, P.
\newblock Gradient-based learning applied to document recognition.
\newblock \emph{Proceedings of the IEEE}, 86\penalty0 (11):\penalty0 2278--2324, 1998.

\bibitem[Li \& Lin(2015)Li and Lin]{li2015accelerated}
Li, H. and Lin, Z.
\newblock Accelerated proximal gradient methods for nonconvex programming.
\newblock \emph{Advances in neural information processing systems}, 28:\penalty0 379--387, 2015.

\bibitem[Liu et~al.(2009)Liu, Chen, and Ye]{liu2009lassplore}
Liu, J., Chen, J., and Ye, J.
\newblock Large-scale sparse logistic regression.
\newblock In IV, J. F.~E., Fogelman{-}Souli{\'{e}}, F., Flach, P.~A., and Zaki, M.~J. (eds.), \emph{Proceedings of the 15th {ACM} {SIGKDD} International Conference on Knowledge Discovery and Data Mining, Paris, France, June 28 - July 1, 2009}, pp.\  547--556. {ACM}, 2009.

\bibitem[Loog et~al.(2020)Loog, Viering, Mey, Krijthe, and Tax]{loog2020brief}
Loog, M., Viering, T., Mey, A., Krijthe, J.~H., and Tax, D.~M.
\newblock A brief prehistory of double descent.
\newblock \emph{Proceedings of the National Academy of Sciences}, 117\penalty0 (20):\penalty0 10625--10626, 2020.

\bibitem[Murtagh \& Saunders(1983)Murtagh and Saunders]{murtagh1983minos}
Murtagh, B.~A. and Saunders, M.~A.
\newblock {MINOS 5.0} user's guide.
\newblock Technical report, Stanford Univ CA Systems Optimization Lab, 1983.

\bibitem[Nakkiran et~al.(2020)Nakkiran, Kaplun, Bansal, Yang, Barak, and Sutskever]{nakkiran2020double}
Nakkiran, P., Kaplun, G., Bansal, Y., Yang, T., Barak, B., and Sutskever, I.
\newblock Deep double descent: Where bigger models and more data hurt.
\newblock In \emph{8th International Conference on Learning Representations, {ICLR} 2020, Addis Ababa, Ethiopia, April 26-30, 2020}. OpenReview.net, 2020.

\bibitem[Nesterov(1983)]{nesterov1983method}
Nesterov, Y.
\newblock A method for unconstrained convex minimization problem with the rate of convergence ${O}(1/k^{2})$.
\newblock In \emph{Doklady an {USSR}}, volume 269, pp.\  543--547, 1983.

\bibitem[Nesterov(2013)]{nesterov2007proximalgradient}
Nesterov, Y.~E.
\newblock Gradient methods for minimizing composite functions.
\newblock \emph{Math. Program.}, 140\penalty0 (1):\penalty0 125--161, 2013.

\bibitem[Nesterov \& Nemirovskii(1994)Nesterov and Nemirovskii]{nesterov1994interior}
Nesterov, Y.~E. and Nemirovskii, A.
\newblock \emph{Interior-point polynomial algorithms in convex programming}, volume~13 of \emph{Siam studies in applied mathematics}.
\newblock {SIAM}, 1994.

\bibitem[Neyshabur et~al.(2017)Neyshabur, Tomioka, Salakhutdinov, and Srebro]{neyshabur2017implicit}
Neyshabur, B., Tomioka, R., Salakhutdinov, R., and Srebro, N.
\newblock Geometry of optimization and implicit regularization in deep learning.
\newblock \emph{CoRR}, abs/1705.03071, 2017.

\bibitem[Nocedal \& Wright(1999)Nocedal and Wright]{nocedal1999numerical}
Nocedal, J. and Wright, S.~J.
\newblock \emph{Numerical Optimization}.
\newblock Springer, 1999.

\bibitem[O'Donoghue \& Cand{\`{e}}s(2015)O'Donoghue and Cand{\`{e}}s]{odonoghue2015restarts}
O'Donoghue, B. and Cand{\`{e}}s, E.~J.
\newblock Adaptive restart for accelerated gradient schemes.
\newblock \emph{Found. Comput. Math.}, 15\penalty0 (3):\penalty0 715--732, 2015.

\bibitem[O'Donoghue et~al.(2016)O'Donoghue, Chu, Parikh, and Boyd]{odonoghue2016scs}
O'Donoghue, B., Chu, E., Parikh, N., and Boyd, S.
\newblock Conic optimization via operator splitting and homogeneous self-dual embedding.
\newblock \emph{Journal of Optimization Theory and Applications}, 169\penalty0 (3):\penalty0 1042--1068, June 2016.
\newblock URL \url{http://stanford.edu/~boyd/papers/scs.html}.

\bibitem[Parikh \& Boyd(2014)Parikh and Boyd]{parikh2014proximalsurvery}
Parikh, N. and Boyd, S.~P.
\newblock Proximal algorithms.
\newblock \emph{Found. Trends Optim.}, 1\penalty0 (3):\penalty0 127--239, 2014.

\bibitem[Paszke et~al.(2019)Paszke, Gross, Massa, Lerer, Bradbury, Chanan, Killeen, Lin, Gimelshein, Antiga, et~al.]{paszke2019pytorch}
Paszke, A., Gross, S., Massa, F., Lerer, A., Bradbury, J., Chanan, G., Killeen, T., Lin, Z., Gimelshein, N., Antiga, L., et~al.
\newblock {PyTorch}: An imperative style, high-performance deep learning library.
\newblock \emph{Advances in neural information processing systems}, 32:\penalty0 8026--8037, 2019.

\bibitem[Pilanci \& Ergen(2020)Pilanci and Ergen]{pilanci2020convexnn}
Pilanci, M. and Ergen, T.
\newblock Neural networks are convex regularizers: Exact polynomial-time convex optimization formulations for two-layer networks.
\newblock In \emph{Proceedings of the 37th International Conference on Machine Learning, {ICML} 2020}, volume 119 of \emph{Proceedings of Machine Learning Research}, pp.\  7695--7705. {PMLR}, 2020.

\bibitem[Robbins \& Monro(1951)Robbins and Monro]{robbins1951sgd}
Robbins, H. and Monro, S.
\newblock A stochastic approximation method.
\newblock \emph{Ann. Math. Statist.}, 22\penalty0 (3):\penalty0 400--407, 09 1951.

\bibitem[Rockafellar(1976{\natexlab{a}})]{rockafellar1976augmented}
Rockafellar, R.~T.
\newblock Augmented {L}agrangians and applications of the proximal point algorithm in convex programming.
\newblock \emph{Math. Oper. Res.}, 1\penalty0 (2):\penalty0 97--116, 1976{\natexlab{a}}.

\bibitem[Rockafellar(1976{\natexlab{b}})]{rockafellar1976monotone}
Rockafellar, R.~T.
\newblock Monotone operators and the proximal point algorithm.
\newblock \emph{SIAM journal on control and optimization}, 14\penalty0 (5):\penalty0 877--898, 1976{\natexlab{b}}.

\bibitem[Sahiner et~al.(2021{\natexlab{a}})Sahiner, Ergen, Ozturkler, Bartan, Pauly, Mardani, and Pilanci]{sahiner2021hidden}
Sahiner, A., Ergen, T., Ozturkler, B., Bartan, B., Pauly, J., Mardani, M., and Pilanci, M.
\newblock Hidden convexity of wasserstein gans: Interpretable generative models with closed-form solutions.
\newblock \emph{International Conference on Learning Representations}, 2021{\natexlab{a}}.

\bibitem[Sahiner et~al.(2021{\natexlab{b}})Sahiner, Ergen, Pauly, and Pilanci]{sahiner2021vector}
Sahiner, A., Ergen, T., Pauly, J.~M., and Pilanci, M.
\newblock Vector-output {ReLU} neural network problems are copositive programs: Convex analysis of two layer networks and polynomial-time algorithms.
\newblock In \emph{9th International Conference on Learning Representations, {ICLR} 2021, Virtual Event, Austria, May 3-7, 2021}. OpenReview.net, 2021{\natexlab{b}}.

\bibitem[Sahiner et~al.(2021{\natexlab{c}})Sahiner, Mardani, Ozturkler, Pilanci, and Pauly]{sahiner2020convex}
Sahiner, A., Mardani, M., Ozturkler, B., Pilanci, M., and Pauly, J.~M.
\newblock Convex regularization behind neural reconstruction.
\newblock In \emph{ICLR}, 2021{\natexlab{c}}.

\bibitem[Schmidt et~al.(2011)Schmidt, {Le Roux}, and Bach]{schmidt2011convergence}
Schmidt, M., {Le Roux}, N., and Bach, F.~R.
\newblock Convergence rates of inexact proximal-gradient methods for convex optimization.
\newblock In Shawe{-}Taylor, J., Zemel, R.~S., Bartlett, P.~L., Pereira, F. C.~N., and Weinberger, K.~Q. (eds.), \emph{Advances in Neural Information Processing Systems 24: {NeurIPS} 2011}, pp.\  1458--1466, 2011.

\bibitem[Sivaprasad et~al.(2021)Sivaprasad, Singh, Manwani, and Gandhi]{sivaprasad2021curious}
Sivaprasad, S., Singh, A., Manwani, N., and Gandhi, V.
\newblock The curious case of convex neural networks.
\newblock In Oliver, N., P{\'{e}}rez{-}Cruz, F., Kramer, S., Read, J., and Lozano, J.~A. (eds.), \emph{Machine Learning and Knowledge Discovery in Databases. Research Track - European Conference, {ECML} {PKDD} 2021, Bilbao, Spain, September 13-17, 2021, Proceedings, Part {I}}, volume 12975 of \emph{Lecture Notes in Computer Science}, pp.\  738--754. Springer, 2021.

\bibitem[Sra et~al.(2012)Sra, Nowozin, and Wright]{sra2012optimization}
Sra, S., Nowozin, S., and Wright, S.~J.
\newblock \emph{Optimization for machine learning}.
\newblock Mit Press, 2012.

\bibitem[Wang et~al.(2021)Wang, Lacotte, and Pilanci]{wang2021hidden}
Wang, Y., Lacotte, J., and Pilanci, M.
\newblock The hidden convex optimization landscape of regularized two-layer relu networks: an exact characterization of optimal solutions.
\newblock In \emph{International Conference on Learning Representations}, 2021.

\bibitem[Woodworth et~al.(2020)Woodworth, Gunasekar, Lee, Moroshko, Savarese, Golan, Soudry, and Srebro]{woodworth2020rich}
Woodworth, B.~E., Gunasekar, S., Lee, J.~D., Moroshko, E., Savarese, P., Golan, I., Soudry, D., and Srebro, N.
\newblock Kernel and rich regimes in overparametrized models.
\newblock In Abernethy, J.~D. and Agarwal, S. (eds.), \emph{Conference on Learning Theory, {COLT} 2020, 9-12 July 2020, Virtual Event [Graz, Austria]}, volume 125 of \emph{Proceedings of Machine Learning Research}, pp.\  3635--3673. {PMLR}, 2020.

\end{thebibliography}

\bibliographystyle{icml2022}

\clearpage
\onecolumn
\appendix


\section{Convex Reformulations:\ Proofs}\label{app:convex-forms}

\begin{restatable}{lemma}{miFormulation}\label{lemma:mi-reformulation}
	The non-convex problem NC-ReLU (Problem \ref{convex-forms:eq:non-convex-relu-mlp}) is equivalent to the mixed-integer program,
	\begin{equation}\label{convex-forms:eq:mi-reformulation}
		\begin{aligned}
			\min_{W_1, w_2} \; & \calL\Big(\sum_{i = 1}^m \rbr{X W_{1i}}_+ w_{2i}, y\Big) + \lambda \sum_{i = 1}^m \norm{W_{1i}}_2 \\
			                   & \text{\emph{s.t.}} \; w_2 \in \cbr{-1, 1}^m.
		\end{aligned}
	\end{equation}
\end{restatable}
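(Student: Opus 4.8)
The plan is to prove that the two problems share the same optimal value by building a value-improving correspondence between feasible points, using the positive homogeneity of the ReLU together with a single application of AM--GM per hidden unit. The key structural fact is that for any $\alpha > 0$ the reparametrization $(W_{1i}, w_{2i}) \mapsto (\alpha W_{1i}, w_{2i}/\alpha)$ leaves each neuron's contribution $\rbr{X W_{1i}}_+ w_{2i}$ unchanged, because $\rbr{X(\alpha W_{1i})}_+ = \alpha \rbr{X W_{1i}}_+$; hence it preserves the entire network output and the loss term $\calL$, and perturbs only the regularizer. I would establish equality of the optimal values by proving each bounds the other.

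For the inequality (value of \eqref{convex-forms:eq:mi-reformulation} $\leq$ value of \eqref{convex-forms:eq:non-convex-relu-mlp}), take any $(W_1, w_2)$ feasible for NC-ReLU. For each $i$ with $w_{2i} \neq 0$, rescale by $\alpha_i = |w_{2i}|$ to get $\tilde W_{1i} = |w_{2i}| W_{1i}$ and $\tilde w_{2i} = \operatorname{sign}(w_{2i}) \in \cbr{-1,1}$; for $i$ with $w_{2i}=0$ set $\tilde W_{1i}=0$, $\tilde w_{2i}=1$, which kills that unit at zero penalty. The output is unchanged and the new point is feasible for the mixed-integer program. Its per-unit penalty is $\lambda \norm{\tilde W_{1i}}_2 = \lambda |w_{2i}| \norm{W_{1i}}_2$, and AM--GM gives $\lambda |w_{2i}| \norm{W_{1i}}_2 \leq \tfrac{\lambda}{2}\rbr{\norm{W_{1i}}_2^2 + w_{2i}^2}$, so the integer-constrained objective does not exceed the NC-ReLU objective.

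For the reverse inequality, take any $(W_1, w_2)$ feasible for the integer program, so $w_{2i}^2 = 1$. This point is already feasible for NC-ReLU, but I rescale to minimize its quadratic penalty: for $\norm{W_{1i}}_2 > 0$ choose $\beta_i = \norm{W_{1i}}_2^{-1/2}$, which makes AM--GM tight and yields per-unit penalty $\tfrac{\lambda}{2}\rbr{\beta_i^2\norm{W_{1i}}_2^2 + \beta_i^{-2}} = \lambda \norm{W_{1i}}_2$; for $\norm{W_{1i}}_2 = 0$ set $w_{2i}=0$ for penalty $0$. The output is again preserved, so the NC-ReLU value is at most the integer-program value. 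Combining the two directions gives equality, and since both maps carry minimizers to minimizers, the problems are equivalent. The main obstacle here is not conceptual but bookkeeping: handling the degenerate units ($w_{2i}=0$ or $W_{1i}=0$) where the rescaling factor is undefined, and checking that homogeneity is invoked only with a strictly positive scale so that $\rbr{\cdot}_+$ genuinely commutes with it. One should also keep in mind that the quadratic penalty $\tfrac{\lambda}{2}\sum \norm{W_{1i}}_2^2 + |w_{2i}|^2$ is precisely what AM--GM converts into $\lambda \sum \norm{W_{1i}}_2 |w_{2i}|$, after which the normalization $|w_{2i}| = 1$ eliminates the product and leaves $\lambda \sum \norm{W_{1i}}_2$.
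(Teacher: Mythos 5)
Your proof is correct and follows essentially the same route as the paper: positive homogeneity of the ReLU to preserve the network output under the rescaling $(W_{1i}, w_{2i}) \mapsto (\beta_i W_{1i}, w_{2i}/\beta_i)$, combined with Young's inequality (your AM--GM step) to pass between the quadratic penalty and the product form $\lambda \norm{W_{1i}}_2 \abs{w_{2i}}$. The only cosmetic difference is that you build value-non-increasing maps from arbitrary feasible points in both directions, whereas the paper first argues that Young's inequality must be tight at any global minimizer and then rescales; your handling of the degenerate units ($w_{2i}=0$ or $W_{1i}=0$) is also slightly more explicit than the paper's.
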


\begin{proof}
	The proof proceeds in two steps: first we transform the objective into an equivalent problem which is invariant to certain scale re-parameterizations of the network parameters.
	Then, we use these scale re-parameterizations reduce the two optimization problems to each-other.

	Let \( p^* \) be the optimal value of the non-convex optimization problem and \( d^* \) the optimal value of the mixed-integer program.
	The ReLU activation function,
	\begin{align*}
		\rbr{a}_+ = \max\cbr{a, 0},
	\end{align*}
	is positively homogeneous, meaning \( \rbr{a * \beta}_+ = \beta \rbr{a}_+ \) for any scalar \( \beta \geq 0 \).
	Defining \( W_{1i}' = \beta_i W_{1i}, \, w_{2i}' = w_{2i} / \beta_i \), \( i \in [m] \), we have
	\begin{align*}
		h_{W_1, w_2}(X)
		 & = \sum_{i=1}^m \rbr{X W_{1i} }_+ w_{2i}
		= \sum_{i=1}^m \rbr{X W_{1i} \frac{\beta_i}{\beta_i}}_+ w_{2i} \\
		 & = \sum_{i=1}^m \rbr{X W_{1i}'}_+ w_{2i}'
		= h(W'_{1}, w_2'),
	\end{align*}
	implying that the loss \( \calL(\sum_{i=1}^m \rbr{X W_{1i} }_+ w_{2i}, y) \) is invariant to ``scale-shift'' re-parameterizations of this form.
	To extend the invariance to the full objective function, recall Young's inequality,
	\[
		2 \abr{a, b} \leq a^2 + b^2,
	\]
	which yields,
	\begin{align*}
		\calL\rbr{\sum_{i = 1}^m \rbr{X W_{1i} }_+ w_{2i}, y } + \frac{\lambda}{2} \sum_{i = 1}^m \norm{W_{1i}}_2 + \abs{w_{2i}}^2
		 & \geq \calL\rbr{ \sum_{i = 1}^m \rbr{X W_{1i} }_+ w_{2i}, y } + \lambda \sum_{i = 1}^m \norm{W_{1i}}\abs{w_{2i}}.
	\end{align*}
	For any choice of parameters \( W_{1i}, w_{2i} \), equality in this expression is achieved with the rescaling
	\[
		W_{1i}' = W_{1i} * \beta_i, \quad \quad w_{2i}' = w_{2i} / \beta_i,
	\]
	where \( \beta_i = \sqrt{\frac{w_{2i}}{\norm{W_{1i}}_2}} \).
	As this rescaling does not affect \( h_{W_1, w_2} \), it must be that any global minimizer \( \theta^* = \rbr{W_{1}^*, w_{2}^*} \) of Problem~\ref{convex-forms:eq:non-convex-relu-mlp} achieves the lower-bound in Young's inequality and,
	\begin{align}\label{convex-forms:eq:inner-product-reformulation}
		\calL\rbr{\sum_{i = 1}^m \rbr{X W_{1i}^*}_+ w_{2i}^*, y} + \frac{\lambda}{2} \sum_{i = 1}^m \norm{W_{1i}^*}_2^2 + \abs{w_{2i}^*}^2
		=
		\calL\rbr{\sum_{i = 1}^m \rbr{X W_{1i}^*}_+ w_{2i}^*, y} + \lambda \sum_{i = 1}^m \norm{W_{1i}^*}_2\abs{w_{2i}^*}.
	\end{align}
	The right-hand side of this equation is invariant to scale re-parameterizations of the form \( W_{1i}' = \beta W_{1i}^*, w_{2i}' = w_{2i}^* / \beta \) for \( \beta > 0 \).
	Taking \( \beta = \abs{w_{2i}} \), we deduce
	\begin{align*}
		p^* = \calL\rbr{\sum_{i = 1}^m \rbr{X W_{1i}'}_+ w_{2i}', y} + \lambda \sum_{i = 1}^m \norm{W'_{1i}}_2 \geq d^*.
	\end{align*}

	To show the reverse inequality, observe that every global minimum \( \rbr{W^*_1, w^*_2} \) of Problem~\ref{convex-forms:eq:mi-reformulation} is trivially in the domain of the non-convex ReLU training problem.
	Using the mapping
	\[ \rbr{W'_{1i}, w_{2i}'} = \rbr{\frac{W_{1i}^*}{\sqrt{\norm{W_{1i}^*}_2}}, w^*_{2i} \sqrt{\norm{W_{1i}^*}_2}}, \]
	and plugging \( \rbr{W'_1, w'_{2}} \) into Problem~\ref{convex-forms:eq:non-convex-relu-mlp} shows \( d^* \geq p^* \).
	We have shown \( p^* = d^* \) and so the problems are formally equivalent with mappings between the solutions as given above.

\end{proof}

\convexDualityFree*
\begin{proof}
	The proof proceeds by showing the equivalence of C-ReLU and the mixed integer problem given in \cref{convex-forms:eq:mi-reformulation} and the invoking \cref{lemma:mi-reformulation}.
	Let \( p^* \) be the optimal value of the mixed-integer problem in~\eqref{convex-forms:eq:mi-reformulation} and \( d^* \) the optimal value of the convex program in~\eqref{convex-forms:eq:convex-relu-mlp}.
	We first show that \( d^* \geq p^* \).

	Suppose \( \rbr{v^*, w^*} \) is a global minimizer of Problem~\ref{convex-forms:eq:convex-relu-mlp} and let
	\begin{align*}
		\cbr{\rbr{W_{1k}^*, w_{2k}^*}} = \bigcup_{D_i \in \tilde \calD_i} \cbr{\rbr{v^*_i, 1} : v^*_i \neq 0} \cup \cbr{\rbr{w^*_i, -1} : w^*_i \neq 0},
	\end{align*}
	where we set \( W_{1k}^* = 0, \) and \( w_{2k}^* = 0 \) for all \( k \in [m], k > b \).
	It holds by assumption that \( b \leq m \) and thus \( \rbr{W_1^*, w_{2}^*} \) is a valid input for the mixed-integer problem.

	Recalling the constraints \( (2 D_i - I) X v^*_i \geq 0 \), and \( (2 D_i - I) X w^*_i \geq 0 \),
	we see that \( D_i X v_i^* = \rbr{X v_i^*}_+ \), \( D_i X w_i^* = \rbr{X w_i^*}_+ \), and thus
	\begin{align*}
		\rbr{X W_{1k}^*} w_{2j}
		 & = \begin{cases}
			     D_i X v_i^*   & \mbox{if \( W_{1k}^* = v_i^* \) for some \( i \in [b] \)} \\
			     - D_i X w_i^* & \mbox{if \( W_{1k}^* = w_i^* \) for some \( i \in [b] \)} \\
			     0             & \mbox{otherwise}.
		     \end{cases}
	\end{align*}
	Using this fact in the optimization objective for the convex program, we find
	\begin{align*}
		d^*
		 & = \calL\rbr{\sum_{D_i \in \tilde \calD} D_i X (v_i - w_i), y} + \lambda \sum_{D_i \in \calD} \norm{v_i}_2 + \norm{w_i}_2 \\
		 & = \calL\rbr{\sum_{k = 1}^m \rbr{X W_{1k}^*}_+ w_{2k}^* - y} + \lambda \sum_{k = 1}^m \norm{W_{1k}^*}_2                   \\
		 & \geq p^*,
	\end{align*}
	as required.

	To show the reverse inequality, let \( \rbr{W_1^*, w_{2}^*} \) be a solution to~\eqref{convex-forms:eq:mi-reformulation} and consider the set-function
	\[
		T(j) = \cbr{ i \in [m] : \text{diag}\rbr{X W_{1i}^* > 0} = D_j }.
	\]
	Recalling
	\( \cbr{\text{diag}\rbr{X W_{1i}^* > 0 : i \in [m]} } \subseteq \tilde \calD, \)
	by assumption, we define a valid candidate solution as
	\begin{equation*}
		\cbr{\rbr{v_j^*, w_j^*}}_{D_j \in \tilde \calD} = \cbr{ \sum_{i \in T(j)} W_{1i}^* \mathbbm{1}(w_{2i}^* = 1), \sum_{i \in T(j)} W_{1i}^* \mathbbm{1}(w_{2i}^* = -1) }_{D_j \in \tilde \calD}
	\end{equation*}

	We start by showing that the neurons indexed by \( T(j) \) can be merged without changing the objective of the mixed-integer problem.
	In particular, let \( j \in [m] \) be arbitrary and suppose that there exists \( l,k \in T(j) \) such that \( w_{2l}^* = w_{2k}^* = 1 \).
	By definition of \( T \), it holds that \( \rbr{X W_{1l}^*}_+ \) and \( \rbr{X W_{1k}^*}_+ \) have the same activation pattern.
	Accordingly, we have \(  \rbr{X W_{1l}^*}_+ + \rbr{X W_{1k}^*}_+ = \rbr{X (W_{1l}^* + W_{1k}^*)}_+ \) by definition of the ReLU activation.
	Thus, merging these two parameter vectors as \( Z_{lk}^* = W_{1l}^* + W_{1k}^*\) does not change the prediction of the model in the mixed-integer program.

	Now we consider the group \( \ell_1 \) penalty term.
	Triangle inequality implies
	\begin{align*}
		\norm{\tilde Z_{lk}^*}_2 \leq \norm{W_{1l}^*}_2 + \norm{W_{1k}^*}_2,
	\end{align*}
	with equality if and only if \( W_{1l}^* = 0 \), \( W_{1k}^* = 0 \), or the vectors are collinear.
	Suppose that equality does not hold.
	Then the penalty term could be reduced setting \( W_{1l}^* = Z_{lk} \) and \( W_{1k}^* = 0\) while leaving the squared-loss term unchanged.
	But, this contradicts global optimality of \(W_{1}^*, w_{2}^* \).
	Thus, it must be that \( W_{1l}^* = 0 \), \( W_{1k}^* = 0 \), or the vectors are collinear.
	In each case, we have that the merged vector \( \tilde Z_{lk}^* \) also attains the optimal value \( p^* \).
	Clearly a symmetric argument holds in the case \( w_{2k} = w_{2l} = -1 \).

	Arguing by induction if necessary, we deduce that the vectors \( v^*, w^* \) given by the solution mapping also attain \( p^* \).
	Recalling that \( D_j X v_j^* = \rbr{X v_j}_+ \) and \( D_j X w_j^* = \rbr{X w_j}_+ \) by choice of \( T(j) \) and definition of \( D_j \) gives
	\begin{align*}
		p^*
		 & = \calL\rbr{\sum_{i=j}^P D_j X (v^*_j - w^*_j) - y} + \lambda \sum_{j=1}^P \norm{v^*_j}_2 + \norm{w^*_j}_2 \\
		 & \geq d^*,
	\end{align*}
	where \( v_j^*, w_j^* \) are feasible.
	This completes the proof.
\end{proof}

\unconstrainedEquivalence*
\begin{proof}
	The proof proceeds similarly to the proof \cref{lemma:mi-reformulation}.

	Let \( p^* \) be the optimal value of the Problem~\ref{convex-forms:eq:gated-relu-mlp} and let \( d^* \) be the optimal value of the C-GReLU problem~\eqref{convex-forms:eq:unconstrained-relu-mlp}.
	For each \( z_i \in \tilde \calZ \) and any \( v \in \R^d \), we have the following equality by construction:
	\[ \phi_{z_i}(X, v) = \rbr{X z_i > 0} \circ X v = D_i X v. \]
	Accordingly, the non-convex optimization problem~\eqref{convex-forms:eq:gated-relu-mlp} can be written as
	\begin{equation}
		\min_{W_1,w_2} \mathcal{L}\Big(\sum_{D_i \in \tilde \calD } D_i X W_{1i} w_{2i}, y\Big) + \frac{\lambda}{2} \sum_{z_i \in \tilde \calZ} \norm{W_{1i}}_2^2 + w_{2i}^2,
	\end{equation}
	which makes the connection to C-GReLU clear.
	Applying Young's inequality gives,
	\begin{align*}
		\mathcal{L}\Big(\sum_{D_i \in \tilde \calD } D_i X W_{1i} w_{2i}, y\Big) + \frac{\lambda}{2} \sum_{z_i \in \tilde \calZ} \norm{W_{1i}}_2^2 + w_{2i}^2
		 & \geq \mathcal{L}\Big(\sum_{D_i \in \tilde \calD } D_i X W_{1i} w_{2i}, y\Big) + \lambda \sum_{z_i \in \tilde \calZ} \norm{W_{1i}}_2 \abs{w_{2i}}
	\end{align*}
	For any choice of parameters \( W_{1i}, w_{2i} \), equality in this expression is achieved with the rescaling
	\[
		W_{1i}' = W_{1i} * \beta_i, \quad \quad w_{2i}' = w_{2i} / \beta_i,
	\]
	where \( \beta_i = \sqrt{\frac{w_{2i}}{\norm{W_{1i}}_2}} \).
	As this rescaling does not affect \( D_i X W_{1i} \, w_{2i} \) for each \( i \), it must be that any global minimizer \( \theta^* = \cbr{W_{1i}^*, w_{2i}^*} \) of Problem~\ref{convex-forms:eq:gated-relu-mlp} achieves the lower-bound in Young's inequality.
	Defining \( v_i' = W_{1i} * w_{2i} \), we have shown
	\begin{align*}
		p^*
		 & = \mathcal{L}\Big(\sum_{D_i \in \tilde \calD } D_i X W_{1i} w_{2i}, y\Big) + \lambda \sum_{z_i \in \tilde \calZ} \norm{W_{1i}}_2 \abs{w_{2i}} \\
		 & = \mathcal{L}\Big(\sum_{D_i \in \tilde \calD } D_i X v_i', y \big) + \lambda \sum_{z_i \in \tilde \calZ} \norm{v_{i}'}_2
		\geq d^*,
	\end{align*}
	where we have used absolute homogeneity of the norm.

	To obtain the reverse inequality, let \( v^* \) be a global minimizer of C-GReLU and define \( W_{1i}' = \frac{v_i^*}{\sqrt{\norm{v_i^*}_2}} \), \( w_{2i}' = \sqrt{\norm{v_i^*}_2} \) for all \( i \) to obtain

	\begin{align*}
		d^*
		 & = \mathcal{L}\Big(\sum_{D_i \in \tilde \calD} D_i X v_i, y\Big)+ \lambda \sum_{D_i \in \tilde \calD} \norm{v_i}_2                                                 \\
		 & = \mathcal{L}\Big(\sum_{D_i \in \tilde \calD } D_i X W_{1i}' w_{2i}', y\Big) + \frac{\lambda}{2} \sum_{z_i \in \tilde \calZ} \norm{W_{1i}'}^2_2 + \abs{w_{2i}'}^2
		\geq p^*,
	\end{align*}
	which completes the proof.

\end{proof}


\begin{proposition}\label{prop:unconstrained-relu-mlp}
	Problem~\ref{convex-forms:eq:unconstrained-relu-mlp} is equivalent to following unconstrained relaxation of the ReLU training problem's convex reformulation (Problem~\ref{convex-forms:eq:convex-relu-mlp}):
	\begin{equation} \label{convex-forms:eq:unreduced-unconstrained-relu-mlp}
		\begin{aligned}
			\min_{v, w} \;
			 & \half \calL\rbr{\sum_{D_i \in \tilde \calD} D_i X (v_i - w_i), y} + \lambda \sum_{D_i \in \tilde \calD} \norm{v_i}_2 + \norm{w_i}_2 \\
			 & \text{s.t.} \;  \rbr{2 D_i - I_n} X v_i \succeq 0, \rbr{2 D_i - I_n} X w_i \succeq 0,
		\end{aligned}
	\end{equation}
\end{proposition}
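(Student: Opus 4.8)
The plan is to prove equality of optimal values by two one-sided inequalities, mirroring the style used for \cref{lemma:mi-reformulation} and \cref{thm:unconstrained-equivalence}: let $d^*$ denote the optimal value of the C-GReLU problem~\eqref{convex-forms:eq:unconstrained-relu-mlp} and $p^*$ the optimal value of the relaxation~\eqref{convex-forms:eq:unreduced-unconstrained-relu-mlp}, and establish $p^* \le d^*$ and $d^* \le p^*$ separately. The one observation that drives everything is that the substitution $u_i = v_i - w_i$ leaves the prediction $\sum_{D_i \in \tilde\calD} D_i X u_i = \sum_{D_i \in \tilde\calD} D_i X (v_i - w_i)$ unchanged, so the loss term $\calL$ is inert under the change of variables and the whole argument reduces to comparing the penalty $\lambda \sum_i \norm{u_i}_2$ against $\lambda \sum_i \norm{v_i}_2 + \norm{w_i}_2$.

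For $d^* \le p^*$ I would take a minimizer $(v^*, w^*)$ of~\eqref{convex-forms:eq:unreduced-unconstrained-relu-mlp} and set $u_i = v_i^* - w_i^*$. The loss is preserved by the observation above, and the triangle inequality gives $\norm{u_i}_2 = \norm{v_i^* - w_i^*}_2 \le \norm{v_i^*}_2 + \norm{w_i^*}_2$, so the C-GReLU objective at the feasible point $u$ is at most $p^*$, whence $d^* \le p^*$. For the reverse inequality I would take a minimizer $u^*$ of C-GReLU and use the zero-padded point $v_i = u_i^*$, $w_i = 0$: the loss is again unchanged and the penalty equals $\lambda \sum_i \norm{u_i^*}_2$, so this point attains objective value $d^*$ in~\eqref{convex-forms:eq:unreduced-unconstrained-relu-mlp}, giving $p^* \le d^*$. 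Combining yields $p^* = d^*$, and the explicit maps $u_i = v_i - w_i$ and $(v_i, w_i) = (u_i, 0)$ upgrade the value equality to a full equivalence between the two problems.

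The only point that requires care — and the only place the word \emph{relaxation} does real work — is that the zero-padding step $w_i = 0$ is feasible precisely because the cone-membership constraints $v_i, w_i \in \calK_i$ present in C-ReLU~\eqref{convex-forms:eq:convex-relu-mlp} have been dropped; in general $u_i^* \notin \calK_i$, so this construction would fail for the constrained problem. This is entirely consistent with \cref{thm:approx-result}, which shows the constrained C-ReLU value can strictly exceed the C-GReLU value, so no equivalence with the constrained program should be expected. I therefore anticipate no genuine obstacle beyond bookkeeping: the loss is invariant under $u_i = v_i - w_i$, the only estimate used is the triangle inequality, and any fixed normalizing constant on $\calL$ (such as the $\half$ appearing in~\eqref{convex-forms:eq:unreduced-unconstrained-relu-mlp} under the squared-loss convention) is immaterial since the argument of $\calL$ is identical on both sides.
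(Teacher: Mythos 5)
Your proof is correct and follows essentially the same route as the paper's: both arguments rest on the invariance of the prediction under the substitution $u_i = v_i - w_i$, use the triangle inequality $\norm{v_i - w_i}_2 \leq \norm{v_i}_2 + \norm{w_i}_2$ for one direction, and use the zero-padded point $(u_i^*, 0)$ for the other. Your remark that the cone constraints displayed in \eqref{convex-forms:eq:unreduced-unconstrained-relu-mlp} must be read as dropped (consistent with the word ``unconstrained relaxation'') is also the reading the paper's own proof implicitly adopts, since it substitutes $(r^*, 0)$ without verifying cone feasibility.
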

\begin{proof}
	Suppose that \( (v^*, w^*) \) is an optimal solution \eqref{convex-forms:eq:unreduced-unconstrained-relu-mlp}.
	Defining \( r^* = v^* - w^* \), it holds by the triangle inequality that
	\begin{equation*}
		\norm{r_i^*}_2 \leq \norm{v_i^*}_2 + \norm{w_i^*}_2,
	\end{equation*}
	for each \( i \in [P] \).
	Since replacing \( v_i^* - w_i^* \) with \( r_i^* \) does not change the model prediction
	\[ \hat y = \sum_{D_i \in \tilde \calD} D_i X(v_i - w_i), \]
	we have shown that \( (r^*, 0) \) defines an equivalent model with the same or smaller objective value.
	Accordingly, any solution to~\eqref{convex-forms:eq:unreduced-unconstrained-relu-mlp} must have
	\( v_i^* = 0 \) or \( w_i^* = 0 \).
	Equivalence of the two problems follows immediately.
\end{proof}

\subsection{Extension to Multi-class Classification}\label{app:multi-class-extension}

Now we extend our sub-sampled C-ReLU and C-GReLU formulations to vector-valued problems, such as occur in multi-class classification.
Our starting place is the following vector-output variant of the NC-ReLU problem
\begin{equation}\label{convex-forms:eq:vector-non-convex-relu-mlp}
	\min_{W_1, W_2} \mathcal{L}\Big(\sum_{i=1}^m (XW_{1i})_+ W_{2i}^\top, Y\Big) + \frac{\lambda}{2}\sum_{i=1}^m \norm{W_{1i}}_2^2 + \norm{W_{2i}}_1^2,
\end{equation}
where now labels \(Y \in \R^{n \times C}\).
We note that the main difference between this formulation and \cref{convex-forms:eq:non-convex-relu-mlp} is that each row of \(X\) now maps to a vector rather than a single scalar, and the use of \(\ell_1\)-squared regularization on the second-layer weights. We now present a similar result to \cref{lemma:mi-reformulation} for this particular problem.
\begin{lemma}
	The non-convex problem \eqref{convex-forms:eq:vector-non-convex-relu-mlp} is equivalent to the following program,
	\begin{equation}\label{convex-forms:eq:vector-mi-reformulation}
		\begin{aligned}
			\min_{\{W_{1}^k, w_{2}^k\}_{k=1}^C}  \; & \mathcal{L}\Big(\sum_{i=1}^m (XW_{1i})_+ W_{2i}^\top, Y\Big) + \lambda \sum_{i = 1}^{m}  \norm{W_{1i}}_2 \\
			                                        & \text{\emph{s.t.}} \;\|W_{2i}\|_1 = 1~\forall i \in [m]
		\end{aligned}
	\end{equation}
\end{lemma}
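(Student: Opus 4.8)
The plan is to follow the proof of \cref{lemma:mi-reformulation} almost verbatim, replacing the scalar second-layer weight $w_{2i}$ and its absolute value by the vector $W_{2i}$ and its $\ell_1$-norm. Write $p^*$ for the optimal value of \eqref{convex-forms:eq:vector-non-convex-relu-mlp} and $d^*$ for that of \eqref{convex-forms:eq:vector-mi-reformulation}. The two facts that drive the argument are: (i) positive homogeneity of the ReLU, so that the scale-shift reparameterization $W_{1i} \mapsto \beta_i W_{1i}$, $W_{2i} \mapsto W_{2i}/\beta_i$ with $\beta_i > 0$ leaves each term $(X W_{1i})_+ W_{2i}^\top$ and hence $\mathcal{L}$ invariant; and (ii) the AM--GM inequality $\tfrac12(a^2 + b^2) \geq ab$ applied with $a = \norm{W_{1i}}_2$, $b = \norm{W_{2i}}_1$, whose right-hand side $\norm{W_{1i}}_2 \norm{W_{2i}}_1$ is invariant under the same reparameterization.

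For $d^* \leq p^*$, I would take a global minimizer $(W_1^*, W_2^*)$ of \eqref{convex-forms:eq:vector-non-convex-relu-mlp} and rescale by $\beta_i = \norm{W_{2i}^*}_1$ so that the rescaled weights satisfy $\norm{W_{2i}'}_1 = 1$, making them feasible for \eqref{convex-forms:eq:vector-mi-reformulation}. Since $\mathcal{L}$ is unchanged and $\norm{W_{1i}'}_2 = \norm{W_{2i}^*}_1 \norm{W_{1i}^*}_2$, the reformulation objective equals $\mathcal{L}(\cdots) + \lambda \sum_i \norm{W_{1i}^*}_2 \norm{W_{2i}^*}_1$, which AM--GM bounds above by $p^*$. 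For the reverse inequality $p^* \leq d^*$, I would take an optimal $(W_1^*, W_2^*)$ of \eqref{convex-forms:eq:vector-mi-reformulation}, which already lies in the (unconstrained) domain of the non-convex problem, and rescale by $\beta_i = 1/\sqrt{\norm{W_{1i}^*}_2}$ so that $\norm{W_{1i}'}_2 = \norm{W_{2i}'}_1$. Equality then holds in AM--GM, and using the constraint $\norm{W_{2i}^*}_1 = 1$ one computes $\tfrac{\lambda}{2}(\norm{W_{1i}'}_2^2 + \norm{W_{2i}'}_1^2) = \lambda \norm{W_{1i}^*}_2$, so the non-convex objective at $(W_1', W_2')$ equals $d^*$ and $p^* \leq d^*$. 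Combining the two inequalities yields $p^* = d^*$ with the explicit solution maps above.

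The main obstacle is purely bookkeeping: the rescalings require $\norm{W_{2i}^*}_1 \neq 0$ and $\norm{W_{1i}^*}_2 \neq 0$. I would dispose of these degenerate neurons separately --- if $W_{2i}^* = 0$ the corresponding summand vanishes and the neuron may be deleted (or $W_{1i}$ set to zero) without changing either objective, while if $W_{1i}^* = 0$ its penalty is already zero and the constraint $\norm{W_{2i}}_1 = 1$ can be satisfied by any unit-$\ell_1$ direction. The only conceptual difference from the scalar case is that AM--GM is applied to the pair of norms $(\norm{\cdot}_2, \norm{\cdot}_1)$ rather than to $(\abs{\cdot}, \norm{\cdot}_2)$; since the inequality depends only on nonnegativity of the two quantities, this substitution is immediate and the vector-valued structure of $W_{2i}$ enters only through its $\ell_1$-norm.
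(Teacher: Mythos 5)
Your proposal is correct and follows essentially the same route as the paper's proof, which likewise invokes Young's inequality (your AM--GM step) together with the scale-invariance of $(XW_{1i})_+W_{2i}^\top$ to pass to the product penalty $\norm{W_{1i}}_2\norm{W_{2i}}_1$ and then normalize $\norm{W_{2i}}_1 = 1$. Your write-up is in fact more careful than the paper's, spelling out both directions of the inequality and the degenerate zero-norm neurons explicitly.
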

\begin{proof}
	Follow from the proof of \cref{lemma:mi-reformulation} (Appendix \ref{app:convex-forms}), i.e. apply Young's inequality to achieve
	\begin{equation*}
		p^* = \min_{W_1, W_2} \mathcal{L}\Big(\sum_{i=1}^m (XW_{1i})_+ W_{2i}^\top, Y\Big) + \frac{\lambda}{2}\sum_{i=1}^m \norm{W_{1i}}_2^2 + \norm{W_{2i}}_1^2 = \min_{W_1, W_2} \mathcal{L}\Big(\sum_{i=1}^m (XW_{1i})_+ W_{2i}^\top, Y\Big) + \lambda\sum_{i=1}^m \norm{W_{1i}}_2\norm{W_{2i}}_1
	\end{equation*}
	Then, this is clearly equivalent to
	\begin{equation*}
		\begin{aligned}
			\min_{W_{1}, W_2}  \; & \mathcal{L}\Big(\sum_{i=1}^m (XW_{1i})_+ W_{2i}^\top, Y\Big) + \lambda \sum_{i = 1}^{m}  \norm{W_{1i}}_2 \\
			                      & \text{\emph{s.t.}} \;\|W_{2i}\|_1 = 1~\forall i \in [m]
		\end{aligned}
	\end{equation*}
\end{proof}
We can form the one-vs-all convex reformulation as follows:
\begin{equation}\label{convex-forms:eq:vector-convex-relu-mlp}
	\begin{aligned}
		\!\!\! \min_{\{v^k, w^k\}_{k=1}^C} \, & \mathcal{L}\Big(\sum_{k=1}^C \sum_{D_i \in \tilde \calD}\!\! D_i X (v_{i}^k \!-\! w_{i}^k)e_k^\top, Y\Big)\!+\!\lambda\!\sum_{k=1}^C\sum_{D_i \in \tilde \calD}\!\norm{v_{i}^k}_2\!+\!\norm{w_{i}^k}_2 \\
		                                      & \text{s.t.} \, \rbr{2 D_i - I_n} X v_{i}^k \succeq 0, \, \rbr{2 D_i - I_n} X w_{i}^k \succeq 0,
	\end{aligned}
\end{equation}
where \(e_k\) is the \(k\)th standard basis vector.

Then, we have the following analog of \cref{thm:duality-free} for the vector-output case:
\begin{restatable}{theorem}{convexVectorDualityFree}\label{thm:vector-duality-free}
	Suppose \(\rbr{W_1^* W_2^*}\) and \( \{\rbr{v^{k*}, w^{k*}}\}_{k=1}^C \) are global minimizers of Problems~\ref{convex-forms:eq:vector-mi-reformulation} and Problem~\ref{convex-forms:eq:vector-convex-relu-mlp}, respectively.
	If the number of hidden units satisfies
	\[ m^* \geq b := \sum_{k=1}^C \sum_{D_i \in \tilde \calD} \abs{\cbr{v^{k*}_i : v^{k*}_i \neq 0} \cup \cbr{w^{k*}_i : w^{k*}_i \neq 0} }, \]
	and the optimal activations are in the convex model,
	\[ \cbr{\text{\emph{diag}}\rbr{X W_{1i}^{*} > 0 : i \in [m]} } \subseteq \tilde \calD, \]
	then the two problems have same the optimal value.
\end{restatable}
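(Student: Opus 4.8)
The plan is to prove the two problems share an optimal value by matching $p^*$ (the optimum of the mixed-integer program~\eqref{convex-forms:eq:vector-mi-reformulation}) and $d^*$ (the optimum of the one-vs-all convex program~\eqref{convex-forms:eq:vector-convex-relu-mlp}) through explicit solution maps in both directions, following the template of \cref{thm:duality-free} with the scalar sign $w_{2i}\in\cbr{-1,1}$ replaced by a vector $W_{2i}$ on the $\ell_1$-sphere $\norm{W_{2i}}_1=1$. The argument is entirely constructive and avoids duality.

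For $p^*\le d^*$, I would take a convex minimizer $\cbr{(v^{k*},w^{k*})}$ and build a feasible network. For each $(k,i)$ with $v_i^{k*}\neq 0$ I create a neuron $(W_1,W_2)=(v_i^{k*},e_k)$, and for each with $w_i^{k*}\neq 0$ a neuron $(w_i^{k*},-e_k)$; the remaining neurons (available since $m^*\ge b$) are set to $(0,e_1)$, which are feasible because $\norm{\pm e_k}_1=1$ and contribute nothing. Since $v_i^{k*},w_i^{k*}\in\calK_i$, the cone constraint gives $(Xv_i^{k*})_+=D_iXv_i^{k*}$, so the neuron outputs sum to $\sum_{k}\sum_i D_iX(v_i^{k*}-w_i^{k*})e_k^\top$ and the loss is unchanged, while the first-layer penalty equals $\lambda\sum_k\sum_i(\norm{v_i^{k*}}_2+\norm{w_i^{k*}}_2)$ because $e_k$ carries no penalty. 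Hence the constructed objective equals $d^*$, giving $p^*\le d^*$.

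The reverse bound $d^*\le p^*$ is where the vector output genuinely changes the argument, and I expect it to be the main obstacle. Starting from an MI minimizer $(W_1^*,W_2^*)$, each neuron lies in a single cell $D_{(i)}\in\tilde\calD$ by the hypothesis $\cbr{\diag(XW_{1i}^*>0)}\subseteq\tilde\calD$, but a single neuron may now feed several classes with mixed signs. With $T(j)=\cbr{i:\diag(XW_{1i}^*>0)=D_j}$, I would split each neuron coordinate-wise into positive and negative parts, setting $v_j^k=\sum_{i\in T(j)}W_{1i}^*(W_{2ik}^*)_+$ and $w_j^k=\sum_{i\in T(j)}W_{1i}^*(W_{2ik}^*)_-$. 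These are feasible, since they are nonnegative combinations of the $W_{1i}^*\in\calK_j$, and they reproduce the MI prediction class-by-class. The decisive step is the penalty bookkeeping: triangle inequality followed by the $\ell_1$ constraint yields
\[
\sum_{j,k}\rbr{\norm{v_j^k}_2+\norm{w_j^k}_2}\le\sum_i\norm{W_{1i}^*}_2\sum_k\abs{W_{2ik}^*}=\sum_i\norm{W_{1i}^*}_2\norm{W_{2i}^*}_1=\sum_i\norm{W_{1i}^*}_2,
\]
so distributing a neuron's mass across classes exactly preserves the penalty precisely because $\norm{W_{2i}^*}_1=1$. Thus the constructed convex objective is at most $p^*$, giving $d^*\le p^*$.

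Combining the inequalities gives $p^*=d^*$, and the preceding lemma equating~\eqref{convex-forms:eq:vector-mi-reformulation} with the vector NC-ReLU objective~\eqref{convex-forms:eq:vector-non-convex-relu-mlp} then delivers the stated equivalence. The only fine points I anticipate are: verifying that the count $b$ equals the number of neurons built, which needs that no optimal $(k,i)$ has $v_i^{k*}=w_i^{k*}\neq0$ (such a pair cancels in the prediction yet doubles the penalty, so it is suboptimal); and confirming the filler neurons meet $\norm{W_{2i}}_1=1$ rather than exploiting the scalar freedom $w_{2i}=0$. Both are routine.
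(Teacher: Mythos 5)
Your proposal is correct and follows essentially the same route as the paper's proof: the same neuron-splitting solution maps in both directions, with the $\ell_1$-sphere constraint $\norm{W_{2i}}_1=1$ doing the penalty bookkeeping. The one small divergence is that for $d^*\le p^*$ you settle for the triangle inequality where the paper invokes a collinearity-at-optimality argument to get equality; since only an inequality is needed there, your version is a mild simplification, and your remarks about the filler neurons and the count $b$ are fine points the paper glosses over.
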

\begin{proof}
	We follow from the proof of \cref{thm:duality-free}.   Let \(p^*\) be the optimal value of \eqref{convex-forms:eq:vector-mi-reformulation} and \(d^*\) be the optimal value of \eqref{convex-forms:eq:vector-convex-relu-mlp}.

	First, suppose \( \{\rbr{v^{k*}, w^{k*}}\}_{k=1}^C\) is a global minimizer of Problem \ref{convex-forms:eq:vector-convex-relu-mlp}. Then, let
	\[ \rbr{W_{1(i, k)}^*, W_{2(i, k)}^*} = \bigcup_{D_i \in \tilde \calD} \bigcup_{k=1}^C \cbr{(v_i^{k*}, e_k): v_i^{k*} \neq 0} \cup \cbr{(w_i^{k*}, -e_k): w_i^{k*} \neq 0)}  \]
	where we set \(W_{1(i, k)}^* = 0\) and \(W_{2(i, k)}^*\) = 0 for non-assigned neurons. It holds by assumption that \(b \leq m\) and thus this is a valid input for \eqref{convex-forms:eq:vector-mi-reformulation}.
	Further, we have, due to the constraints,
	\begin{equation*}
		(XW_{1(i, k)}^*)_+W_{2(i, k)}^{*\top } = \begin{cases}
			D_iXv_{i}^{k*}e_k^\top & \text{if } W_{1(i, k)}^* = v_i^{k*} \\
			D_iXw_{i}^{k*}e_k^\top & \text{if } W_{1(i, k)}^* = w_i^{k*} \\
			0                      & o.w.
		\end{cases}
	\end{equation*}
	Inserting to the objective for the convex program,
	\begin{align}
		d^* & =\mathcal{L}\Big(\sum_{k=1}^C \sum_{D_i \in \tilde \calD}\!\! D_i X (v_{i}^{k*} \!-\! w_{i}^{k*})e_k^\top, Y\Big)\!+\!\lambda\!\sum_{k=1}^C\sum_{D_i \in \tilde \calD}\!\norm{v_{i}^{k*}}_2\!+\!\norm{w_{i}^{k*}}_2 \\
		    & = \mathcal{L}\Big(\sum_{(i, k)}\!\!(XW_{1(i, k)}^*)_+W_{2(i, k)}^{*\top }, Y\Big)\!+\!\lambda\!\sum_{(i,k) }\!\norm{W_{1(i, k)}^*}_2                                                                                \\  &\geq p^*
	\end{align}
	Now, we seek to find the other direction, i.e. show \(p^* \geq d^*\) and show a mapping. Let \(\rbr{W_{1i}^*, W_{2i}^*}\) be a solution to \eqref{convex-forms:eq:vector-mi-reformulation}. Defining, as in \cref{thm:duality-free},
	\[
		T(j) = \cbr{ i \in [m] : \text{diag}\rbr{X W_{1i}^* > 0} = D_j }.
	\]
	Recalling
	\( \cbr{\text{diag}\rbr{X W_{1i}^* > 0 : i \in [m]} } \subseteq \tilde \calD, \)
	by assumption, we define a valid candidate solution as
	\begin{equation*}
		\cbr{\cbr{\rbr{v_j^{k*}, w_j^{k*}}}_{k \in [C]}}_{D_j \in \tilde \calD} = \cbr{ \cbr{\sum_{i \in T(j)} W_{1i}^*w_{2i}^{k*} \mathbbm{1}(W_{2i}^{k*} \geq 0)}_{k \in [C]}, \cbr{-\sum_{i \in T(j)} W_{1i}^*w_{2i}^{k*} \mathbbm{1}(W_{2i}^{k*} < 0)}_{k \in [C]} }_{D_j \in \tilde \calD}
	\end{equation*}
	Then, by the same co-linearity arguments as in \cref{thm:duality-free}, we have
	\begin{align*}
		p^* & =  \mathcal{L}\Big(\sum_{i=1}^m (XW_{1i}^*)_+ W_{2i}^{*\top}, Y\Big) + \lambda \sum_{i = 1}^{m}  \norm{W_{1i}^*}_2                                                                                   \\
		    & = \mathcal{L}\Big(\sum_{i=1}^m \sum_{k=1}^C (XW_{1i}^*)_+ W_{2i}^{k*}e_k^\top, Y\Big) + \lambda \sum_{i = 1}^{m}  \norm{W_{1i}^*}_2 \sum_{k=1}^C |W_{2i}^{k*}|                                       \\
		    & = \mathcal{L}\Big(\sum_{i=1}^m \sum_{k=1}^C (XW_{1i}^*)_+ W_{2i}^{k*}e_k^\top, Y\Big) + \lambda \sum_{i = 1}^{m} \sum_{k=1}^C \norm{W_{1i}^*}_2  |W_{2i}^{k*}|                                       \\
		    & = \mathcal{L}\Big(\sum_{D_j \in \tilde \calD} \sum_{k=1}^C D_jX(v_{j}^{k*}- w_{j}^{k*})e_k^\top, Y\Big) + \lambda \sum_{D_j \in \tilde \calD} \sum_{k=1}^C \norm{v_{j}^{k*}}_2 + \norm{w_{j}^{k*}}_2 \\
		    & \geq d^*
	\end{align*}
\end{proof}
Thus, the vector-output NC-ReLU training problem~\eqref{convex-forms:eq:vector-non-convex-relu-mlp} is equivalent to the one-vs-all C-ReLU problem~\eqref{convex-forms:eq:vector-convex-relu-mlp} if the conditions of \cref{thm:vector-duality-free} are satisfied.
Further, taking \(\tilde \calD = \calD_X\) and applying \cref{thm:vector-duality-free} yields
\begin{equation*}
	{m}^{*} = \sum_{k=1}^C \sum_{D_i \in \calD_X} \abs{\cbr{v^{k*}_i : v^{k*}_i \neq 0} \cup \cbr{w^{k*}_i : w^{k*}_i \neq 0} }.
\end{equation*}
It follows that global optimization of the vector-output NC-ReLU problem requires \(m \geq m^*\) neurons, where \(m^* \leq C(n+1)\).

The gated ReLU analogs to vector-output ReLU architectures can be formulated in the same fashion.

\newpage


\section{Equivalence of ReLU and Gated ReLU: Proofs}\label{app:relu-grelu-equivalence}

First we give a simple lemma that will be useful when characterizing the span of \( \calK_i - \calK_i \).

\begin{lemma}\label{lemma:affine-characterization}
    The cone \( \calK_i \) has a non-empty interior if and only if \( \calK_i - \calK_i = \R^d \).
\end{lemma}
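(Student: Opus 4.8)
The plan is to reduce both sides of the equivalence to a statement about the linear span of \( \calK_i \). Writing \( \tilde X = (2 D_i - I) X \), the cone is \( \calK_i = \cbr{u \in \R^d : \tilde X u \succeq 0} \), which is closed under addition and under nonnegative scaling; hence it is a convex cone containing the origin. For any such cone I claim the Minkowski difference coincides with the linear span, \( \calK_i - \calK_i = \text{span}(\calK_i) \). Indeed, \( \calK_i - \calK_i \) is closed under addition (because \( \calK_i \) is), under negation (by construction), and under nonnegative scaling, so it is a linear subspace; it clearly contains \( \calK_i \) and is contained in \( \text{span}(\calK_i) \), giving equality. Thus it suffices to show that \( \calK_i \) has nonempty interior if and only if \( \text{span}(\calK_i) = \R^d \).

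The forward direction is immediate: if \( \calK_i \) contains an open ball \( B(u_0, r) \), then \( \calK_i - \calK_i \) contains \( B(u_0, r) - B(u_0, r) = B(0, 2r) \), and a linear subspace containing a ball must be all of \( \R^d \).

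For the reverse direction, suppose \( \text{span}(\calK_i) = \R^d \). Then I can select a basis \( u_1, \dots, u_d \) of \( \R^d \) with each \( u_j \in \calK_i \), and set \( u_0 = \sum_{j=1}^d u_j \), which lies in \( \calK_i \) since the cone is closed under addition. I will show \( u_0 \) is an interior point. Because \( u_1, \dots, u_d \) form a basis, any perturbation \( \delta \in \R^d \) can be written \( \delta = \sum_{j=1}^d \epsilon_j u_j \), where the coefficients depend linearly on \( \delta \) and so satisfy \( \max_j \abs{\epsilon_j} \leq C \norm{\delta}_2 \) for a constant \( C \) depending only on the fixed basis. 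For \( \norm{\delta}_2 \) small enough that every \( \epsilon_j \geq -1 \), we have \( u_0 + \delta = \sum_{j=1}^d (1 + \epsilon_j) u_j \) with all coefficients nonnegative, so \( u_0 + \delta \in \calK_i \) because convex cones are closed under nonnegative combinations. Hence an entire ball around \( u_0 \) lies in \( \calK_i \), and the interior is nonempty.

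I expect the only delicate point to be the bookkeeping in the reverse direction — verifying that small perturbations of \( u_0 \) remain nonnegative combinations of the chosen basis — but this is elementary once the basis is fixed. The identification \( \calK_i - \calK_i = \text{span}(\calK_i) \) and the forward implication are routine facts about convex cones.
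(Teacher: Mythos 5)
Your proof is correct and follows essentially the same route as the paper's: both arguments identify \( \calK_i - \calK_i \) with the linear (equivalently, affine, since \( 0 \in \calK_i \)) span of \( \calK_i \), and then equate full span with nonempty interior. The only difference is that you make the reverse direction self-contained by exhibiting an explicit interior point \( u_0 = \sum_j u_j \) built from a basis drawn from the cone, whereas the paper appeals to the standard convex-analysis fact that a convex set whose affine hull is all of \( \R^d \) has nonempty interior.
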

\begin{proof}
    Let \( x \in \text{aff} (\calK_i) \).
    Then \( x = \sum_{j=1}^m \alpha_j y_j \), \( y_j \in \calK_i \).
    Let \( j \in [m] \).
    If \( \alpha_j \geq 0 \), then \( \alpha_j y_j \in \calK_i \) since \( \calK_i \) is a cone.
    Otherwise, \( \alpha_j y_j \in - \calK_i \).
    Either way, we have \( \alpha_j y_j \in \calK_i - \calK_i \).

    Observe that \( \calK_i - \calK_i \) is a convex cone since \( \calK_i \) is a convex cone.
    Thus, \( \alpha_1 y_1 + \alpha_2 y_2 \in \calK_i - \calK_i \).
    Induction on \( j \in [m] \) now implies \( x \in \calK_i - \calK_i \) and thus \( \text{aff} (\calK_i) \subseteq \calK_i - \calK_i \).
    Now suppose \( x \in \calK_i - \calK_i \) so that \( x = y_1 - y_2 \), where \( y_1, y_2 \in \calK_i \).
    It is trivial to deduce \( x \in \text{aff} (\calK_i) \);
    we conclude that \( \text{aff} (\calK_i) = \calK_i - \calK_i \).

    Since \( 0 \in \text{aff} (\calK_i) \), this set is a linear subspace of \( \R^d \).
    If \( \calK_i \) has an interior point, then \( \text{aff} (\calK_i) = \R^d \) and we must have \( \text{aff} (\calK_i) = \calK_i - \calK_i = \R^d \).
    On the other hand, if \( \calK_i \) does not have an interior point, then \( \text{aff} (\calK_i) \subset \R^d \) and \( \text{aff} (\calK_i) = \calK_i - \calK_i \subset \R^d \) must hold;
    we have shown the reverse implication by the contrapositive.
\end{proof}

Now we show that \( \calK_i \) has an interior point when \( X \) is full row-rank.
The proof proceeds by studying a relative interior point of \( \calK_i \).

\nonSingularCones*
\begin{proof}
    Let \( \bar w \in \text{relint} (\calK_i) \), which exists since the relative interior of a non-empty convex set is non-empty \citep{bertsekas2009convex}.
    Assume that the inequality,
    \[
        (2 D_i - I) X \bar w \succeq 0,
    \]
    is tight for at least one index \( j \in [n] \);
    let \( X_{\calI} \) be the submatrix of \( X \) formed by the rows of \( X \) for which the inequality
    is tight.
    Define \( \tilde D =  (2 D_i - I) \).
    Since \( X \) is full row-rank, the rows of \( X_{\calI} \) are linearly independent.
    Let \( x_k \) be an arbitrary row of \( X_{\calI} \) (noting \( x_k \neq 0 \) by linear independence) and define \( z_k \) to be the component of \( x_k \) which is orthogonal to the remaining rows of \( X_{\calI} \).
    Clearly such a vector exists since the rows of \( X_{\calI} \) are linearly independent.
    Define \( w' = \bar w + [\tilde D_i]_{kk} z_k \) to obtain
    \[
        [\tilde D_i]_{kk} x_k^\top w' = [\tilde D_i]_{kk} x_k^\top \bar w + \norm{z_k}_2^2 = \norm{z_k}_2^2 > 0,
    \]
    and, for \( j \neq k \),
    \[
        [\tilde D_i]_{jj} x_j^\top w' = [\tilde D_i]_{jj} x_{j}^\top \bar w + [\tilde D_i]_{jj} [D_i]_{kk} x_{j}^\top z_k \geq 0,
    \]
    since \( x_j \) and \( z_k \) are orthogonal.
    This contradicts \( \bar w \in \text{relint} (\calK_i) \) and we deduce that \( (2 D_i - I) X \bar w \succ 0 \).
    \cref{lemma:affine-characterization} now implies that \( \calK_i - \calK_i = \R^d \).

    Let \( u^* \) be an optimal solution to the C-GReLU problem with \( \tilde \calD = \calD_X \).
    Since the Minkowski difference \( \calK_i - \calK_i \) spans \( \R^d \) for every \( D_i \in \calD_X \), we can find \( v_i, w_i \)  such that \( u_i^* = v_i - u_i \).
    Moreover, we can always reparameterize the optimal solution to the C-ReLU problem as \( u_i = v_i^* - w_i^* \).
    A simple reduction argument now shows the two problems are equivalent.
    Applying theorems~\ref{thm:duality-free} and~\ref{thm:unconstrained-equivalence} extends the equivalence to NC-ReLU and NC-GReLU.
\end{proof}

The main difficulty extending \cref{prop:non-singular-cones} to general, full-rank \( X \) is showing that none of the cone-constraints are tight at \( \bar w \).
Unfortunately, the following shows that these difficulties cannot be resolved.

\begin{proposition}\label{prop:col-rank-counter-example}
    There exists a full-rank data matrix \( X \) and activation pattern \( D_i \in \calD_X \) such that \( \calK_i \) is contained in a linear subspace of \( \R^d \).
\end{proposition}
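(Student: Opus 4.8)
The plan is to exhibit an explicit full-rank matrix $X$ together with a realizable activation pattern whose cone collapses onto a line, and then invoke \cref{lemma:affine-characterization}. The cleanest setting is $d = 2$ with $n > d$: take a pair of antipodal data points, plus one extra point to guarantee full column rank. Concretely, I would set
\[
	X = \begin{pmatrix} 1 & 0 \\ -1 & 0 \\ 0 & 1 \end{pmatrix} \in \R^{3 \times 2},
\]
which has rank $2$ and is therefore full rank (indeed full column rank, since $n = 3 > d = 2$). The guiding observation is that rows $1$ and $2$ point in opposite directions, so any pattern labelling \emph{both} of them ``active'' forces $\abr{x_1, u} \geq 0$ and $-\abr{x_1, u} \geq 0$ to hold at once, pinning $u$ to the hyperplane $\abr{x_1, u} = 0$.

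The first real step is to verify that such a degenerate pattern genuinely belongs to $\calD_X$. Choosing the gate $g = (0,1)$, which is orthogonal to the common direction of the antipodal rows, gives $\abr{x_1, g} = \abr{x_2, g} = 0$ and $\abr{x_3, g} = 1$. Because the activation uses the \emph{closed} half-space convention $\mathbbm{1}(\cdot \geq 0)$, all three indicators evaluate to $1$, so $D_i = \diag(\mathbbm{1}(X g \geq 0)) = I_3 \in \calD_X$. This is the crucial maneuver: placing $g$ exactly on the boundary of the two antipodal constraints and exploiting the tie-breaking rule $0 \geq 0$ is what makes both opposing constraints simultaneously active.

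With $D_i = I_3$ we have $(2 D_i - I) = I_3$, whence
\[
	\calK_i = \cbr{ u \in \R^2 : X u \succeq 0 } = \cbr{ (u_1, u_2) : u_1 \geq 0,\ -u_1 \geq 0,\ u_2 \geq 0 } = \cbr{ (0, u_2) : u_2 \geq 0 }.
\]
This is a ray contained in the proper linear subspace $\cbr{ u \in \R^2 : u_1 = 0 } \subset \R^2$, which establishes the claim. Equivalently, $\calK_i$ has empty interior, so \cref{lemma:affine-characterization} directly certifies that it is singular.

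I expect the only genuine subtlety to be the realizability check, namely confirming that $D_i = I$ really arises from some gate rather than being a spurious pattern. This is exactly where the boundary placement of $g$ and the $\mathbbm{1}(\cdot \geq 0)$ convention are indispensable: under a strict rule $\mathbbm{1}(\cdot > 0)$ no single $g$ could activate two antipodal rows at once, and the construction would collapse to an ordinary half-space. Once realizability is secured, the computation of $\calK_i$ and its containment in a proper subspace are immediate, so no further estimates are needed.
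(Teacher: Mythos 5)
Your proposal is correct and follows essentially the same strategy as the paper's proof: exhibit an explicit full-rank (but not full row-rank) matrix whose all-active pattern yields a cone collapsing to a ray, the only difference being that you use two antipodal rows in $\R^2$ where the paper uses three coplanar rows summing to zero in $\R^3$. Your explicit verification that $D_i = I$ is realizable via a boundary gate $g$ under the $\mathbbm{1}(\cdot \geq 0)$ convention is a detail the paper leaves implicit, and is a worthwhile addition.
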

\begin{proof}
    Let \( d = 3 \), \( n = 4 \) and take
    \begin{equation*}
        X =
        \begin{bmatrix}
            1  & 0  & 0  \\
            0  & 1  & 0  \\
            -1 & -1 & 0  \\
            0  & 0  & 1.
        \end{bmatrix}
    \end{equation*}
    It is easy to see that \( X \) is full-rank, although it does not have full row-rank since \( x_1, x_2, x_3 \) are collinear.
    The cone \( \calK_i = \cbr{ w : X w \succeq 0 } \), which corresponds to positive activations for each example, has the following alternative representation:
    \[
        \calK_i = \cbr{\alpha * e_3 : \alpha \geq 0 }.
    \]
    Clearly \( \calK_i \) is contained in a subspace of dimension one.
    Thus, we cannot hope for \( \calK_i \) to have full affine dimension in this more general setting.
\end{proof}

\subsection{Singular Cones are Contained in Non-Singular Cones}\label{app:singular-cones}

Considering the counter-example in \cref{prop:col-rank-counter-example}, we find the ``bad'' \( \calK_i \) is contained within the subspace \( \calS \) spanned by \( e_3 \).
By construction, every \( w \in \calK_i \) is orthogonal to \( x_1, x_2, \) and \( x_3 \), meaning these examples don't contribute to the constraints on \( \calK_i \) once it is restricted to \( \calS \).
Intuitively, changing the activation associated with \( x_1, x_2 \), or \( x_3 \) can only lead to cones which contain \( \calK_i \).
For example, consider \( \calK_i' = \cbr{w : \abr{x_3, w} \leq 0}, X_{-3} w \succeq 0 \), which is equal to the non-negative orthant, \( \R^3_+ \).
We immediately observe \( \calK_i \subset \calK_i' \) and we may replace the degenerate cone with the alternative, full-dimensional \( \calK_2 \).
The rest of this section formalizes these observations.

\begin{definition}\label{def:minimal-constraints}
    Let \( \tilde X \in \R^{m \times d} \) and consider a cone
    \( \calK = \cbr{ w: \tilde X w \succeq 0} \)
    such that \( \text{aff} (\calK) = \calS \subset \R^d \).
    We call an index set \( \calI \subseteq [m] \) minimal for \( S \) if
    \[ \calK_{\calI} = \cbr{w : \tilde X_{\calI} w \succeq 0} \subseteq \calS \]
    and, for any \( j \in \calI \),
    \[
        \calK_{\calI \setminus j} \not \subseteq \calS.
    \]
    That is, removing any half-space constraint indexed by \( \calI \) ensures \( \calK_{\calI} \) is not contained in \( \calS \).
\end{definition}

Note that there may be many minimal index sets for a singular cone and these sets may have varying cardinalities.
However, each minimal index set shares a key property: every row \( \tilde x_i \) indexed by such \( \calI \) must be orthogonal to \( \calS \).

\begin{lemma}\label{lemma:orthogonality}
    Let \( \tilde X \in \R^{m \times d} \) such that the cone
    \( \calK = \cbr{ w: \tilde X w \succeq 0} \)
    is singular.
    Let \( \calS = \text{aff} (\calK) \) be the smallest containing subspace and \( \calI \) a minimal index set for \( \calS \).
    Then, \( \abr{\tilde x_i, s} = 0 \) for all \( i \in \calI \) and \( s \in \calS \).
\end{lemma}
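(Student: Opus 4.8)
The plan is to show, for each fixed $i \in \calI$, that the row $\tilde x_i$ annihilates the \emph{entire} cone $\calK_\calI$, i.e. $\abr{\tilde x_i, w} = 0$ for every $w \in \calK_\calI$; since $0 \in \calK_\calI$ this forces $\tilde x_i \perp \text{span}(\calK_\calI)$, and the first task is to identify this span with $\calS$. To that end I would note that the defining inclusion in \cref{def:minimal-constraints} gives the chain $\calK \subseteq \calK_\calI \subseteq \calS$. Applying the (monotone) affine hull to this chain, and using both $\text{aff}(\calK) = \calS$ and the fact that $\calS$ is a linear subspace (so $\text{aff}(\calS) = \calS$), yields $\calS \subseteq \text{aff}(\calK_\calI) \subseteq \calS$, hence $\text{aff}(\calK_\calI) = \calS$. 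Because $0 \in \calK_\calI$, this affine hull equals $\text{span}(\calK_\calI)$, so establishing $\abr{\tilde x_i, \cdot} \equiv 0$ on $\calK_\calI$ immediately delivers $\tilde x_i \perp \calS$, which is exactly the conclusion.

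The annihilation property itself I would prove by contradiction, leveraging the \emph{second} (minimality) requirement of \cref{def:minimal-constraints}. Fix $i \in \calI$ and suppose some $w_1 \in \calK_\calI$ has $\abr{\tilde x_i, w_1} > 0$ (the inequality is strict since membership in $\calK_\calI$ already forces $\abr{\tilde x_i, w_1} \geq 0$). It then suffices to deduce $\calK_{\calI \setminus i} \subseteq \calS$, since this contradicts minimality. Take an arbitrary $w' \in \calK_{\calI \setminus i}$ and consider the segment $w_t = (1-t) w_1 + t w'$ for $t \in [0,1]$. For every $j \in \calI \setminus \{i\}$ both $w_1$ and $w'$ satisfy $\abr{\tilde x_j, \cdot} \geq 0$, so $\abr{\tilde x_j, w_t} \geq 0$ by convexity; and since $\abr{\tilde x_i, w_t}$ equals $\abr{\tilde x_i, w_1} > 0$ at $t = 0$, continuity guarantees $\abr{\tilde x_i, w_t} > 0$ for all sufficiently small $t > 0$. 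Thus $w_t \in \calK_\calI \subseteq \calS$ for such $t$, and because $\calS$ is a subspace containing both $w_1$ and $w_t$, the point $w' = w_1 + t^{-1}(w_t - w_1)$ lies in $\calS$ as well. As $w'$ was arbitrary, $\calK_{\calI \setminus i} \subseteq \calS$, the desired contradiction; hence no such $w_1$ exists and $\abr{\tilde x_i, w} = 0$ for all $w \in \calK_\calI$.

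The main obstacle, and the only genuinely delicate point, is the convex-combination/continuity step: I must choose $t$ small enough that constraint $i$ stays strictly satisfied while the remaining constraints remain feasible, so that $w_t$ truly lands in $\calK_\calI$ (and therefore in $\calS$). Everything else is bookkeeping — the reduction $\text{aff}(\calK_\calI) = \calS$ and the structural observation that a constraint which is not ``tight on all of $\calK_\calI$'' can be dropped without shrinking the affine hull below $\calS$ are the conceptual heart, and both follow cleanly from the two clauses of the minimality definition together with $0 \in \calK_\calI$.
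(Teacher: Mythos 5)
Your proof is correct. It shares the paper's overall strategy --- argue by contradiction that if some $i \in \calI$ had $\abr{\tilde x_i, w_1} > 0$ at a point of the cone, then $\calK_{\calI \setminus i} \subseteq \calS$ would already hold, violating minimality, and then extend the vanishing from the cone to all of $\calS$ by linearity --- but the key perturbation step is executed differently. The paper fixes $w \in \calK$ with $\abr{\tilde x_i, w} > 0$, forms $w' = z + \alpha w$ for an arbitrary $z \in \calS^{\perp}$ and large $\alpha$, and concludes that no such $z$ can satisfy the constraints indexed by $\calI \setminus \{i\}$; it then asserts $\calK_{\calI\setminus i} \subseteq \calS$. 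You instead take an \emph{arbitrary} $w' \in \calK_{\calI\setminus i}$, slide a short distance from $w_1$ toward $w'$ so as to remain in $\calK_\calI \subseteq \calS$ (constraint $i$ stays strict by continuity, the others by convexity), and then use the fact that $\calS$ is a linear subspace to recover $w' = w_1 + t^{-1}(w_t - w_1) \in \calS$. Your version is the tighter of the two: it establishes $\calK_{\calI\setminus i} \subseteq \calS$ directly for every element of that cone, whereas the paper's argument only directly excludes elements of $\calS^{\perp}$ and leaves the passage to the full inclusion implicit (a point not in $\calS$ need not lie in $\calS^{\perp}$). Your preliminary identification $\text{aff}(\calK_\calI) = \calS$ via the sandwich $\calK \subseteq \calK_\calI \subseteq \calS$ is also a clean way to organize the final linearity step, though proving the vanishing on $\calK$ alone and extending over $\text{aff}(\calK)$, as the paper does, would have sufficed.
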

\begin{proof}
    Suppose \( \abr{\tilde x_i, w} \neq 0 \) for some \( i \in \calI \) and \( w \in \calK \).
    Since \( w \in \calK \), \( \tilde X w \succeq 0 \) and it must be that \( \abr{\tilde x_i, w} > 0 \).
    Let \( z \in \calS^{\perp} \) be arbitrary and define \( w' = z + \alpha w \), \( \alpha > 0 \).
    By taking \( \alpha \) to be sufficiently large, we obtain
    \[
        \abr{\tilde x_i, w'} = \abr{\tilde x_i, z}  + \alpha \abr{\tilde x_i, w} > 0,
    \]
    Since \( w' \not \in \calS \supseteq \calK \), we must have
    \[ \tilde X_{\calI \setminus i} w' \not \succeq 0 \implies \tilde X_{\calI \setminus i} z \not \succeq 0, \]
    where we have used \( X w \succeq 0 \).
    Moreover, this holds for all \( z \in \calS^{\perp} \), which implies that \( \calK_{\calI \setminus i} \subseteq \calS \) and \( \calI \) cannot be minimal for \( \calS \).
    We conclude \( \abr{\tilde x_i, w} = 0 \) for all \( i \in \calI \) and \( w \in \calK \) by contradiction.

    Since \( \calS \) is the affine hull of \( \calK \), we have for every \( s \in \calS \) and \( i \in \calI \) the following:
    \[
        \abr{\tilde x_i, s} = \abr{\tilde x_i, \sum_{j=1}^k \alpha_j y_j} = \sum_{j=1}^k \alpha_j \abr{ \tilde x_i, y_j} = 0,
    \]
    since \( y_j \in \calK \).
\end{proof}

Similarly, if any constraint is tight at a relative interior point, then that constraint must be orthogonal to the cone.

\begin{lemma}\label{lemma:tight-constraints}
    Let \( \tilde X \in \R^{m \times d} \), \( \calK = \cbr{ w : \tilde X w \succeq 0} \), and \( \bar w \) be a relative interior point of \( \calK \).
    If \( \abr{\tilde x_j, \bar w} = 0 \) for any \( j \in [m] \), then \( \tilde x_j \) is orthogonal to \( \calK \).
\end{lemma}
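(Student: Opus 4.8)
The plan is to exploit the defining feature of a relative interior point: from $\bar w$ we may move a little along \emph{any} feasible direction, including the direction pointing away from an arbitrary element of $\calK$, and still remain inside $\calK$. Fix an arbitrary $w \in \calK$; the goal is to show $\abr{\tilde x_j, w} = 0$. Since $w \in \calK$ means $\tilde X w \succeq 0$, one of these constraints already gives $\abr{\tilde x_j, w} \geq 0$, so it suffices to establish the reverse inequality $\abr{\tilde x_j, w} \leq 0$.

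First I would invoke the prolongation (line-segment) property of relative interiors. Both $\bar w$ and $w$ lie in $\text{aff}(\calK)$, so the displacement $\bar w - w$ lies in the linear subspace parallel to $\text{aff}(\calK)$, and the point $w' = (1+t)\bar w - t w = \bar w + t(\bar w - w)$ is an affine combination of $\bar w$ and $w$, hence also in $\text{aff}(\calK)$. Because $\bar w$ is a relative interior point, there is a relative neighborhood of $\bar w$ contained in $\calK$; as $\norm{w' - \bar w}_2 = t \norm{\bar w - w}_2$, we conclude $w' \in \calK$ for all sufficiently small $t > 0$.

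Next I would simply evaluate the tight constraint at $w'$. Using linearity and the hypothesis $\abr{\tilde x_j, \bar w} = 0$,
\[
	\abr{\tilde x_j, w'} = \abr{\tilde x_j, \bar w} + t\big(\abr{\tilde x_j, \bar w} - \abr{\tilde x_j, w}\big) = - t\, \abr{\tilde x_j, w}.
\]
Feasibility of $w' \in \calK$ forces $\abr{\tilde x_j, w'} \geq 0$, and since $t > 0$ this yields $\abr{\tilde x_j, w} \leq 0$. Combined with $\abr{\tilde x_j, w} \geq 0$, we obtain $\abr{\tilde x_j, w} = 0$. As $w \in \calK$ was arbitrary, $\tilde x_j$ is orthogonal to $\calK$, which is exactly the claim.

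The only delicate point — and the step I would be most careful about — is justifying that $w'$ remains in $\calK$ for small $t > 0$; this is precisely where the \emph{relative interior} hypothesis (rather than mere membership $\bar w \in \calK$) is indispensable, since a boundary point would admit no such prolongation. Everything else is an immediate linear computation. This argument parallels the orthogonality reasoning in \cref{lemma:orthogonality}, with the relative interior point $\bar w$ here playing the role that the minimality of the index set $\calI$ played there.
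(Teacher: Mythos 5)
Your proof is correct and rests on the same underlying idea as the paper's: a relative interior point admits a small prolongation along any direction within $\text{aff}(\calK)$, so a constraint tight at $\bar w$ cannot be slack anywhere else on $\calK$. In fact your write-up is more complete than the paper's, which argues by contradiction that $\bar w + w \in \calK$ with $\abr{\tilde x_j, \bar w + w} > 0$ ``contradicts'' relative interiority without spelling out the line-segment step that you make explicit.
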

\begin{proof}
    Suppose \( \abr{\tilde x_j, \bar w} = 0 \) for some \( j \in [m] \).
    If there exists \( w \in \calK \) such that \( \abr{\tilde x_j, \bar w} > 0 \), then \( \abr{\tilde x_j, \bar w + w} > 0 \) and \( \bar w + w \in \calK \), which contradicts the assumption \( \bar w \) is a relative interior point.
    Since every \( w \in \calK \) satisfies \( \abr{\tilde x_j, w} \geq 0 \), we conclude \( \abr{\tilde x_j, w} = 0 \) for all such \( w \).
\end{proof}

\cref{lemma:orthogonality} is key to our analysis because it implies that the half-space constraints which force \( \calK \) to lie in a subspace don't ``cut into'' that subspace.
In particular, it means that we can choose to enforce membership in \( \calH_{x_i} \) or \( \calH_{- x_i} \) without changing the inclusion.
We show now that there exists a choice of signed half-spaces for which the intersection is non-singular.

\begin{lemma}\label{lemma:sign-assignment}
    Let \( x_1, \ldots x_m \) be a collection of vectors in \( \R^d \) and \( X \in \R^{m \times d} \) the matrix formed by stacking these vectors.
    Then there exists a diagonal matrix \( \tilde D \), where \( \tilde D_{jj} \in \cbr{-1, 1} \), such that
    \[
        \text{aff} (\cbr{ w : \tilde D X w \succeq 0}) = \R^d.
    \]
\end{lemma}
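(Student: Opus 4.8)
The plan is to reduce the claim to finding a single vector $w \in \R^d$ that is not orthogonal to any of the nonzero rows $x_j$, and then to read off the entries of $\tilde D$ from the signs of $\abr{x_j, w}$. First I would dispose of the rows with $x_j = 0$: for any such row the constraint $\tilde D_{jj}\abr{x_j, w'} \geq 0$ reduces to $0 \geq 0$, which holds for every $w'$ independent of the sign chosen, so these rows impose no restriction on the cone and I may set $\tilde D_{jj} = 1$ arbitrarily. It therefore suffices to treat the index set $\calJ = \cbr{j : x_j \neq 0}$.

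The key step is to produce $w$ with $\abr{x_j, w} \neq 0$ for all $j \in \calJ$. For fixed $j$, the set of vectors violating this is the hyperplane $\cbr{w : \abr{x_j, w} = 0}$, a proper linear subspace of $\R^d$. The ``bad'' set is the finite union $\bigcup_{j \in \calJ} \cbr{w : \abr{x_j, w} = 0}$, and since a finite union of proper subspaces cannot cover $\R^d$ (each is Lebesgue-null, so the union is null and its complement is nonempty), there exists $w$ outside all of them. For this $w$ I define $\tilde D_{jj} = \text{sign}(\abr{x_j, w})$ for $j \in \calJ$, so that $\tilde D_{jj}\abr{x_j, w} = \abs{\abr{x_j, w}} > 0$.

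It then remains to argue that this $w$ is an interior point of $\calK = \cbr{w' : \tilde D X w' \succeq 0}$. For each $j \in \calJ$ the quantity $\tilde D_{jj}\abr{x_j, w}$ is strictly positive, so by continuity it stays positive on a small ball $B(w, r)$; since there are finitely many constraints this $r$ can be taken uniform. For $j \notin \calJ$ the corresponding constraint holds on all of $\R^d$. Hence $B(w, r) \subseteq \calK$, so $w \in \text{int}(\calK)$. Invoking \cref{lemma:affine-characterization}, a nonempty interior is equivalent to $\calK - \calK = \text{aff}(\calK) = \R^d$, which is exactly the desired conclusion.

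The only genuine obstacle is the existence of $w$ avoiding the finite union of hyperplanes; everything else is bookkeeping. I would establish it via the measure-zero argument above, which is clean and avoids any appeal to the genericity of a random draw, keeping the construction explicit once $w$ is fixed.
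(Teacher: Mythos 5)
Your proof is correct, but it takes a genuinely different route from the paper's. The paper argues by induction on the rows: it starts from a single half-space, and whenever intersecting with $\calH_{x_{t+1}}$ would collapse the affine hull to a proper subspace $\calS$, it invokes the orthogonality machinery (minimal index sets and \cref{lemma:orthogonality}) to conclude that $x_{t+1}$ is orthogonal to $\calS$ and that flipping the sign leaves the cone unchanged, so full dimension is preserved at every step. You instead make a single generic choice: pick $w$ outside the finite union of hyperplanes $\cbr{w' : \abr{x_j, w'} = 0}$ (nonempty since a finite union of proper subspaces cannot cover $\R^d$), read the signs off $\abr{x_j, w}$, and observe that $w$ is then a strictly feasible point, hence an interior point of the resulting cone; \cref{lemma:affine-characterization} (or simply the fact that a convex set containing a ball has affine hull $\R^d$) finishes the argument. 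Your handling of zero rows is a detail the paper glosses over but is worth making explicit as you do. What each approach buys: yours is shorter, avoids the minimal-index-set apparatus entirely, and hands you an explicit interior point of $\calK$ --- which is precisely what the downstream application in \cref{prop:cone-containment} needs; the paper's induction fits more organically with the surrounding development, since the orthogonality lemmas it leans on are needed there anyway, and it makes transparent which signs get flipped (only those of rows orthogonal to the degenerate subspace), though that refinement is not part of the lemma statement and both proofs deliver the statement as written.
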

\begin{proof}
    We proceed by induction.
    Let \( \tilde \calD_{11} = 1 \) and \( \calK_1 = \calH_{x_1} := \cbr{w : \abr{x_1, w} \geq 0} \).
    Clearly \( \text{aff} (\calK_1) = \R^d \) since it is a half-space.

    Now, let \( t < m \) and assume that \( \text{aff} (\calK_t) = \R^d \).
    Consider
    \[
        A_{t+1} := \calK_t \cap \calH_{x_{t+1}}.
    \]
    If \( \calS := \text{aff} (A_{t+1}) = \R^d \), then the inductive hypothesis holds at \( \calK_{t+1} = A_{t+1} \) and we can choose \( \tilde D_{t+1, t+1} = 1 \).
    Otherwise, \( x_{t+1} \) must be part of a minimal index set \( \calI \subseteq [t+1] \) such that \( \cbr{w : \tilde D_{\calI} X_{\calI} w \succeq 0 } \subseteq \calS \).
    \cref{lemma:orthogonality} now implies that \( x_{t+1} \) is orthogonal to \( \calS \).
    Let \( w \in \calK_{t} \cap \calS^{\perp} \) (which is non-empty by the inductive hypothesis) and observe that
    \[
        \abr{w, x_{t+1}} < 0,
    \]
    must hold, otherwise \( w \in A_{t+1} \).
    We deduce \( \abr{w, x_{t+1}} \leq 0 \) for every \( w \in \calK_{t} \) and thus
    \[
        \calK_t \cap \calH_{- x_{t+1}} = \calK_t,
    \]
    which is full-dimensional by the inductive hypothesis.
    Taking \( \calK_{t+1} = \calK_t \) and \( \calD_{t+1, t+1} = -1 \) completes the case.

    The desired result follows by induction.

\end{proof}

We now use \cref{lemma:sign-assignment} to show that every singular cone is contained in a non-singular cone.

\coneContainment*
\begin{proof}
    For simplicity, we drop the index \( i \) and work with \( \calK = \cbr{w : (2 D - I) X w \succeq 0 } \).
    To ease the notation, we also write \( \tilde D = (2 D - I) \) and \( \tilde X = \tilde D X \).
    Let \( \calS = \text{aff} (\calK) \) be the smallest subspace containing \( \calK \),
    \( \calO \) be the set of all indices \( j \) such that \( x_j \) is orthogonal to \( \calS \),
    and \( \calU = [n] \setminus \calO \).
    \cref{lemma:sign-assignment} implies that there exists an alternative activation pattern \( \tilde D' \) such that,
    \( \tilde D_{\calU}' = \tilde D_{\calU} \), and
    if \( \calK' = \cbr{w : \tilde D' X w \succeq 0 } \), then
    \[
        \calK'_{\calO} := \cbr{w : \tilde D_{\calO}' X_\calO w \succeq 0} \quad \text{satisfies} \quad \text{aff} \rbr{\calK_\calO'} = \R^d.
    \]
    Since the vectors indexed by \( \calO \) are orthogonal to \( \calS \), they are also orthogonal to every \( w \in \calK \), implying \( \calK \subset \calK' \).
    In other words, the change of activation signs preserves inclusion of \( \calK \).

    Let us show that \( \calK' \) contains an interior point.
    Let \( \bar w \) be a relative interior point of \( \calK \) and suppose that \( \abr{\tilde x_j, \bar w} = 0 \) for some \( j \in \calU \).
    \cref{lemma:tight-constraints} implies \( \tilde x_j \) is orthogonal to \( \calK \).
    But, then \( j \in \calO \), which is a contradiction.
    We conclude \( \tilde X_{\calU }\bar w \succ 0\).

    Let \( \bar y \) be an interior point of \( \calK'_{\calO} \), which exists because \( \text{aff} (\calK'_{\calO}) = \R^d \).
    The point \( \bar z = \bar y + \alpha \bar w \), \( \alpha > 0 \) satisfies
    \[
        \tilde D_{\calO}' X_{\calO} \bar z = \tilde D_{\calO}' X_{\calO} \bar y \succ 0,
    \]
    since \( \bar w \) is orthogonal to the rows of \( X_{\calO} \).
    Similarly, by taking \( \alpha \) to be sufficiently large, we have
    \[
        \tilde D_{\calU}' X_{\calU} \bar z \succ 0.
    \]
    We have shown that \( \bar z \) is an interior point of \( \calK' \)
    and \cref{lemma:affine-characterization} now implies \( \calK' - \calK' = \R^d \).
\end{proof}

\coneElimination*
\begin{proof}
    Assume \( \lambda = 0 \).
    The equivalence of the C-ReLU and C-GReLU problems with sub-sampled
    patterns \( \tilde \calD \setminus \calS(\tilde\calD) \) follows
    immediately from cone decompositions.
    Let \( u^* \) be a solution to the C-GReLU problem with
    \( \tilde \calD \setminus \calS(\tilde\calD) \)
    and \( p^* \) the optimal value.
    Let \( d^* \) be the optimal value of the C-ReLU problem with the same
    patterns.
    For every \( D_i \in \tilde \calD \setminus \calS(\tilde\calD) \),
    \( \calK_i - \calK_i = \R^d \) by construction.
    Thus, there exists \( v_i, w_i \in \calK_i \) such that
    \( v_i - w_i = u_i^* \).
    It is now straightforward to conclude that,
    \[
        p^* =
        \calL\rbr{\sum_{D_i \in \tilde \calD \setminus \calS(\tilde\calD)}
            D_i X u_i^*, y}
        =
        \calL\rbr{\sum_{D_i \in \tilde \calD \setminus \calS(\tilde\calD)}
            D_i X (v_i - w_i), y}
        \geq d^*,
    \]
    since \( v_i, w_i \) are feasible for the C-ReLU problem.
    Letting \( v^*, w^* \) be a solution to the C-ReLU problem,
    we obtain the opposite inequality immediately be noting that
    \( u_i = v^*_i - w^*_i \) is feasible for the C-GReLU problem.
    Thus, the two problems are equivalent when singular cones are omitted.
    Now let us show that singular cones do not affect the optimal value
    of the C-ReLU problem.

    Assume \( \lambda \geq 0 \) and let \( (v^*, w^*) \) be a solution to the
    C-ReLU problem with patterns \( \tilde\calD \) and \( p^* \) the optimal
    value.
    For every \( D_i \in \calD_{X} \), we either have \( \calK_i - \calK_i = \R^d \), or not.
    If the former condition holds, take \( r_i^* = v_i^* \) and \( q_i^* = w_i^* \).
    Otherwise, invoke \cref{prop:cone-containment} to obtain \( D_j \in \calD_{X} \)
    such that \( \calK_i \subset \calK_j \) and \( \calK_j - \calK_j = \R^d \).
    By the construction in the proof of \cref{prop:cone-containment},
    if \( \sbr{D_j}_{kk} \neq \sbr{D_i}_{kk} \),
    then \( x_k \) is orthogonal to \( v_i^*, w_i^* \in \calK_i \) and we deduce
    \[
        D_j X (v_i^* - w_i^*)  = D_i X (v_i^* - w_i^*).
    \]
    We may therefore merge the two neurons as \( r^*_j = v^*_j + v^*_i \),
    \( q_j^* = w_j^* + w_i^* \) and update
    \( \tilde \calD' = \tilde \calD \setminus D_i \) without changing the
    (optimal) prediction of the C-ReLU model.
    In this way, we obtain a sub-sampled C-ReLU problem with activation patterns
    \( \tilde \calD \setminus \calS(\tilde \calD) \),
    for which
    \begin{align*}
        p^* & = \calL\rbr{\sum_{D_i \in \tilde \calD} D_i X (v_i^* - w_i^*), y}
        + \lambda \sum_{D_i \in \tilde \calD} \norm{v_i^*}_2 + \norm{v_i^*}_2
        \\
            & \geq \calL\rbr{\sum_{D_j \in \tilde \calD \setminus \calS(\tilde \calD)} D_j X (r_j^* - q_j^*), y}
        + \lambda \sum_{D_j \in \tilde \calD \setminus \calS(\tilde \calD)} \norm{r_j^*}_2 + \norm{q_j^*}_2                              \\
            & \geq \min_{r_j, q_j \in \calK_j} \calL\rbr{\sum_{D_j \in \tilde \calD \setminus \calS(\tilde \calD)} D_j X (r_j - q_j), y}
        + \lambda \sum_{D_j \in \tilde \calD \setminus \calS(\tilde \calD)} \norm{r_j}_2 + \norm{q_j}_2
        := d^*,
    \end{align*}
    where we used triangle inequality to imply \( \norm{r_j^*}_2 \leq
    \norm{v_j^*}_2 + \norm{v_i^*}_2 \) also the fact that \( r_i^*, q_i^* \)
    are feasible because \( \calK_j \) is a convex cone and closed under
    addition.
    It is now straightforward to conclude \( p^* = d^* \) since sub-sampling
    the C-ReLU problem can only increase the optimal objective value.
\end{proof}

\subsection{Approximating ReLU by Cone Decomposition: Proofs}\label{app:cone-decompositions}
\closedFormDecomp*
\begin{proof}
    Let's show that the decomposition is valid.
    Setting \( v = u + w \),
    we obtain \( v - w = u \) by construction.
    For notational ease, let \( \calJ = [n] \setminus \calI \).
    It holds that
    \[
        \tilde X_{\calI} w = - \tilde X_{\calI} \tilde X_{\calI}^\dagger \tilde X_{\calI} u = - \tilde X_{\calI} u > 0.
    \]
    Moreover, since \( X \) is full row-rank and \( \calJ \cap \calI = \emptyset \), we have
    \[
        \tilde X \tilde X^\dagger = \tilde X \tilde X^\top \rbr{\tilde X \tilde X^\top}^{-1} = I,
    \]
    which implies that \( \tilde X_{\calJ} \tilde X_{\calI}^\dagger = 0 \).
    We deduce
    \[
        \tilde X_{\calJ} w = - \tilde X_{\calJ} \tilde X_{\calI}^\dagger \tilde X_{\calI} u = 0.
    \]
    Moreover, we also have
    \[
        \tilde X_{\calI} v = \tilde X_{\calI} u - \tilde X_{\calI} u = 0,
    \]
    and
    \[
        \tilde X_{\calJ} v = \tilde X_{\calJ} u + 0 \geq 0,
    \]
    by definition of \( \calJ \).
    We conclude \( w, v \in \calK \).

    To show the approximation result, start from
    \begin{align*}
        \norm{w}_2
         & = \norm{\tilde X_{\calI}^\dagger \tilde X_{\calI} u}_2             \\
         & \leq \norm{\tilde X_{\calI}^\dagger \tilde X_{\calI}}_2 \norm{u}_2 \\
         & = \norm{u}_2.
    \end{align*}
    Triangle inequality now gives,
    \begin{align*}
        \norm{v}_2
         & = \norm{u - \tilde X_{\calI}^\dagger \tilde X_{\calI} u}_2               \\
         & \leq \norm{(I - \tilde X_{\calI}^\dagger \tilde X_{\calI})}_2 \norm{u}_2 \\
         & = \norm{u}_2,
    \end{align*}
    and summing these two inequalities gives the result.
\end{proof}

\worstCaseNorm*
\begin{proof}
    Consider the data matrix
    \begin{equation}
        X =
        \begin{bmatrix}
            1   & \alpha \\
            - 1 & \alpha \\
        \end{bmatrix}
    \end{equation}
    The cone corresponding to positive activations for both examples is \( \calK = \cbr{x \in \R^d : x_2 \geq - x_1 / \alpha, x_2 \geq x_1 / \alpha  } \).
    Consider decomposing the vector \( u = [2, 0] \) onto \( \calK - \calK \).
    Clearly \( u \not \in \calK \); by inspection, we see that the minimum norm decomposition is given by \( v = [1, 1/\alpha] \), and \( w = [-1, 1/\alpha] \).
    Taking \( \alpha \rightarrow 0 \), we find \( \norm{v}_2 = \norm{w}_2 \rightarrow \infty \).
\end{proof}

\socpDecomp*
\begin{proof}
    First, we re-parameterize the problem: \( u = v - w \) implies \( u + w = v \), giving the equivalent program
    \begin{equation}
        \begin{aligned}
            \min_{w \in \calK} \norm{w}_2 + \norm{u + w}_2^2.
        \end{aligned}
    \end{equation}
    In order to character the solution, we re-write the constraints into a single system of linear inequalities as follows:
    \begin{align*}
        \calF
              & = \cbr{w : \tilde X w \succeq 0, \tilde X w \succeq - \tilde X u} \\
              & = \cbr{w : \tilde X w \succeq 0, \tilde X w \succeq b},
        \intertext{where we have introduced \( b = - \tilde X u \). It is possible to combine these inequalities by taking the element-wise maximum as follows:}
        \calF & = \cbr{w : \tilde X w \succeq \max\rbr{0, b}}                     \\
              & = \cbr{w : \tilde X w \succeq \rbr{b}_+}.
    \end{align*}
    Let \( \bar w \in \calF \) be a optimal point for the reparameterized program.
    Relaxing the objective using triangle inequality gives,
    \begin{align*}
        \norm{\bar w}_2 + \norm{u + \bar w}_2
         & = \min_{w \in \calF} \norm{w}_2 + \norm{u + w}_2  \\
         & \leq \min_{w \in \calF} 2\norm{w}_2 + \norm{u}_2.
    \end{align*}
    Let \( w' \) be a solution to the relaxation.
    The KKT conditions imply there exists a submatrix \( \tilde X_{\calJ} \) for which the inequality constraints are tight:
    \[
        \tilde X_{\calJ} w' = \rbr{b_{\calJ}}_+.
    \]
    The set of vectors satisfying this equality is \( \cbr{\sbr{\tilde X_{\calJ}}^\dagger \rbr{b_{\calJ}}_+ + z : z \in \text{null} (X_{\calJ})} \).
    Choosing \( z \neq 0 \) can only increase the value of the objective, from which we deduce \( w' = \sbr{\tilde X_{\calJ}}^\dagger \rbr{b_{\calJ}}_+ \).
    We obtain
    \begin{align*}
        \norm{\bar w}_2 + \norm{\bar v}_2
         & \leq 2\norm{w'}_2 + \norm{u}_2                                                                                  \\
         & = 2 \norm{\sbr{\tilde X_{\calJ}}^\dagger \rbr{b_{\calJ}}_+} + \norm{u}_2                                        \\
         & \leq \frac{2}{\sigma_{\text{min}}(\tilde X_{\calJ})} \norm{\rbr{b_{\calJ}}_+}_2 + \norm{u}_2                    \\
         & \leq \frac{2}{\sigma_{\text{min}}(\tilde X_{\calJ})} \norm{b_{\calJ}}_2 + \norm{u}_2                            \\
         & =  \frac{2}{\sigma_{\text{min}}(\tilde X_{\calJ})} \norm{\tilde X_{\calJ} u}_2 + \norm{u}_2                     \\
         & \leq \rbr{1 + 2\frac{\sigma_{\text{max}}(\tilde X_{\calJ})}{\sigma_{\text{min}}(\tilde X_{\calJ})}} \norm{u}_2.
    \end{align*}
\end{proof}

\approxResult*
\begin{proof}
    The proof is straightforward given our existing results.
    Let \( u^* \) be the solution to the full (potentially regularized) C-GReLU problem.
    For every \( D_i \in \calD_{X} \), we either have \( \calK_i - \calK_i = \R^d \), or not.
    If the former condition holds, take \( r_i^* = u_i^* \).
    Otherwise, invoke \cref{prop:cone-containment} to obtain \( D_j \in \calD_{X} \) such that \( \calK_i \subset \calK_j \) and \( \calK_j - \calK_j = \R^d \).
    By the construction in the proof of \cref{prop:cone-containment}, if \( \sbr{D_j}_{kk} \neq \sbr{D_i}_{kk} \), then \( x_k \) is orthogonal to \( \calK_i \) and we deduce
    \[
        D_j X u^*_i = D_i X u^*_i.
    \]
    We may therefore merge the two neurons as \( r^*_j = u^*_j + u^*_i \) and update \( \tilde \calD' = \tilde \calD \setminus D_i \) without changing the loss component of the C-GReLU program.
    Furthermore, since \( \norm{r^*_j} \leq \norm{u^*_j}_2 + \norm{u^*_i}_2 \), merging these neurons can only decrease the regularization term.
    \footnote{In fact, we know that one of \( u^*_j, u^*_i \) is zero or they are collinear}
    In this way, we obtain a sub-sampled C-GReLU problem with activation patterns \( \tilde \calD \) and optimal value \( d^* \).

    Let \( \rbr{v^*, u^*} \) be the optimal solution to full C-ReLU problem.
    Applying \cref{prop:socp-decomp} for each \( D_i \in \tilde \calD \) gives decompositions \( u_i^* = v_i' - u_i' \) such that
    \begin{align*}
        d^*
         & = \mathcal{L}\Big(  \sum_{D_i \in \tilde \calD} D_i X u_i^*, y\Big) + \lambda \sum_{D_i \in \tilde \calD} \norm{u_i^*}_2                                                                                                                                               \\
         & \leq \mathcal{L}\Big(  \sum_{D_i \in \tilde \calD} D_i X v^*_i - w_i^*, y\Big) + \lambda \sum_{D_i \in \tilde \calD} \norm{v_i^* - w_i^*}_2                                                                                                                            \\
         & \leq \mathcal{L}\Big(  \sum_{D_i \in \tilde \calD} D_i X v^*_i - w_i^*, y\Big) + \lambda \sum_{D_i \in \tilde \calD} \norm{v_i^*}_2 + \norm{w_i^*}_2                                                                                                                   \\
         & = p^*                                                                                                                                                                                                                                                                  \\
         & \leq \mathcal{L}\Big(  \sum_{D_i \in \tilde \calD} D_i X v'_i - w_i', y\Big) + \lambda \sum_{D_i \in \tilde \calD} \norm{v_i'}_2 + \norm{w_i'}_2                                                                                                                       \\
         & = \mathcal{L}\Big(  \sum_{D_i \in \tilde \calD} D_i X u^*_i, y\Big) + \lambda \sum_{D_i \in \tilde \calD} \norm{v_i'}_2 + \norm{w_i'}_2                                                                                                                                \\
         & \leq \mathcal{L}\Big(  \sum_{D_i \in \tilde \calD} D_i X u^*_i, y\Big) + \lambda \sum_{D_i \in \tilde \calD} \norm{u_i^*}_2 + \lambda \sum_{D_i \in \tilde \calD} 2 \frac{\sigma_{\text{max}}(\tilde X_{\calJ})}{\sigma_{\text{min}}(\tilde X_{\calJ})} \norm{u_i^*}_2 \\
         & = p^* + \lambda \sum_{D_i \in \tilde \calD} 2 \frac{\sigma_{\text{max}}(\tilde X_{\calJ})}{\sigma_{\text{min}}(\tilde X_{\calJ})} \norm{u_i^*}_2.
    \end{align*}
    We have abused notation here and omitted the dependence on \( i \) in \( \tilde X_{\calJ} = \rbr{2 D_i -I} X_{\calJ_i} \).
    However, observe that \( 2D_i - I \) is orthonormal so that \( \sigma_{\text{max}}(\tilde X_{\calI}) = \sigma_{\text{max}}(X_{\calI}) \) and \( \sigma_{\text{min}}(\tilde X_{\calI}) = \sigma_{\text{min}}(X_{\calI}) \) for all \( \calI \).
    Maximizing over \( \calI \subseteq [n] \) now gives a fixed subset \( \calJ \) for which the claimed bound holds.
\end{proof}

\newpage

\section{Efficient Global Optimization:\ Proofs}\label{app:optimization}

\gatedComplexity*
\begin{proof}
	Applying the solution mapping from the proof of \cref{thm:unconstrained-equivalence} (see \cref{app:convex-forms}) we find that taking
	\[
		v_i' = W_{1i} * w_{2i},
	\]
	for each \( D_i \in \calD \) yields a global minimizer of C-ReLU.
	Now we apply the iteration complexity of FISTA~\citep[Theorem 4.4]{beck2009fista} to obtain an \( \epsilon \)-optimal solution in
	\begin{align*}
		T & \leq \frac{\rbr{2 \lambda_{\text{max}}\rbr{M^\top M}\sum_{D_i \in \tilde \calD} \norm{v_i'}_2^2}^{1/2}}{\epsilon^{1/2}}            \\
		  & = \frac{\rbr{2 \lambda_{\text{max}}\rbr{M^\top M}\sum_{D_i \in \tilde \calD} \norm{W_{1i}^* w_{2i}^*}_2^2}^{1/2}}{\epsilon^{1/2}},
	\end{align*}
	iterations.
	Note that we have used the fact that \( \lambda_{\text{max}}(M^\top M) \) is the Lipschitz smoothness constant of the squared-error loss.

\end{proof}

\approxDecompBounds*
\begin{proof}
	First-order optimality conditions for \( \tilde w \) imply
	\begin{align*}
		-\tilde X^\top \rbr{b - \tilde X \tilde w}_+
		 & \in \rho \cdot \partial \norm{\tilde w}_2.
	\end{align*}
	Noting that every vector in \( \partial \norm{\tilde w}_2 \) has norm at most
	\( 1 \), we deduce
	\begin{align*}
		\norm{\tilde X^\top \rbr{b - \tilde X \tilde w}_+}_2
		 & \leq \rho                                        \\
		\implies
		\norm{\rbr{b - \tilde X \tilde w}_+}_2
		 & \leq \frac{\rho}{\sigma_{\text{min}}(\tilde X)}  \\
		\iff \norm{\rbr{\max\cbr{- \tilde X u, 0} - \tilde X \tilde w}_+}_2
		 & \leq \frac{\rho}{\sigma_{\text{min}}(\tilde X)}.
		\intertext{since \( \tilde X \) is full row-rank and by definition of \( b \).
			Using the fact that only positive elements contribute to the norm, we obtain the following two inequalities:}
		\norm{\rbr{\tilde X \tilde w}_-}_2
		 & \leq \frac{\rho}{\sigma_{\text{min}}(\tilde X)}  \\
		\norm{\rbr{\tilde X (u + \tilde w)}_-}_2
		 & \leq \frac{\rho}{\sigma_{\text{min}}(\tilde X)},
	\end{align*}
	Recalling \( \tilde v = u + \tilde w \) and summing gives the first result.

	If \( \rho > 0 \), then it is easy to observe
	(i.e. by arguing via contradiction) that \( \tilde w \) must
	have smaller norm than any \( w' \) in a feasible decomposition \( (v', w') \).
	Thus, it must also have smaller norm than \( \bar w \) from the SOCP cone
	decomposition:
	\[ \norm{\tilde w}_2 \leq \norm{\bar w}_2. \]
	Since the proof of \cref{prop:socp-decomp} relies on controlling
	only the norm of \( \bar w \), the conclusion of that theorem also applies
	to \( (\tilde v, \tilde w) \).

	Finally, suppose \( X \) is not full row-rank. For \( \rho > 0 \),
	\cref{eq:cd-approx} is equivalent to solving
	\begin{equation}\label{eq:cd-approx-constr}
		\textbf{CD-A}: \, \min_{w} g(w, \rho) : =
		\half
		\|
		(b - \tilde X w)_+
		\|_2^2
		+ \rho P(w) \quad \text{s.t.} \norm{w}_2 \leq \norm{\bar w}_2,
	\end{equation}
	where \( \bar w \) is the norm of the minimum-norm (with respect to \( w \)
	only) solution to the cone decomposition problem.
	This is a minimization problem with compact constraint set;
	since \( g \) is continuous in both \( w \) and \( \rho \),
	we may apply Berge's maximum theorem~\citep{ausubel1993generalized}
	to obtain find
	\[
		g^*(\rho) = \min_{w}
		\cbr{ g(w, \rho) : \norm{w}_2 \leq \norm{\bar w}_2},
	\]
	is continuous.
	Since the cone decomposition is realizable at \( \rho = 0 \),
	\( g^*(0) = 0 \) and any sequence \( \rho_k \) converging to \( 0 \)
	satisfies
	\( \lim_k g^*(\lambda_k) = 0 \).

	Let \( \tilde w_k \) be the sequence of minimizers associated with \( \rho_k \).
	Since \( \tilde w_k \) is bounded, it has at least one convergent subsequence.
	Let \( \tilde w_0 \) be the associated limit point.
	Since \( g \) is continuous in \( w \) and \( \rho \), we find
	\[ g(w_0, 0) = \lim_k g(\tilde w_k, \rho_k) = \lim_k g^*(\rho_k) = 0, \]
	which shows that \( \tilde w_0 \) is a feasible decomposition.
	This completes the proof.
\end{proof}

\begin{restatable}{proposition}{approxDecompSubmatrices}\label{prop:approx-decomp-submatrices}
	Suppose \( \tilde w \) is a minimizer of \eqref{eq:cd-approx} and let \( \tilde v = u + \tilde w \).
	There exists \( \calJ \subseteq [n] \) such that
	\[
		\|(\tilde X \tilde w)_-\|_2 + \|(\tilde X \tilde v)_-\|_2
		\leq \frac{2\rho}{\sigma_{\text{min}}(\tilde X_{\calJ})},
	\]
	where \( \sigma_{\text{min}}(\tilde X_{calJ}) \) is the minimum
	(possibly zero) singular value of the sub-matrix \( \tilde X_{\calJ} \).
\end{restatable}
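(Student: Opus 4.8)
The plan is to run the proof of \cref{prop:approx-decomp-bounds} as far as it is valid and then replace the single step that used full row-rank of \( X \) by an argument localized to a sub-block of \( \tilde X \). First I would invoke first-order optimality of \eqref{eq:cd-approx}: writing \( z = \rbr{b - \tilde X \tilde w}_+ \succeq 0 \), the condition \( -\tilde X^\top z \in \rho \cdot \partial \norm{\tilde w}_2 \) together with the fact that every element of \( \partial \norm{\tilde w}_2 \) has Euclidean norm at most \( 1 \) gives \( \norm{\tilde X^\top z}_2 \le \rho \), exactly as before. The elementwise comparison used in that proof does not depend on the rank of \( X \), so it still yields \( \norm{\rbr{\tilde X \tilde w}_-}_2 \le \norm{z}_2 \) and \( \norm{\rbr{\tilde X \tilde v}_-}_2 \le \norm{z}_2 \) for \( \tilde v = u + \tilde w \). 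Hence it suffices to exhibit \( \calJ \subseteq [n] \) with \( \norm{z}_2 \le \rho / \sigma_{\text{min}}(\tilde X_{\calJ}) \); summing then produces the factor of two.

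The crux is bounding \( \norm{z}_2 \), and here the full-rank step \( \norm{\tilde X^\top z}_2 \ge \sigma_{\text{min}}(\tilde X)\norm{z}_2 \) fails because \( \tilde X^\top \) has a nontrivial kernel. My plan is instead to control the restricted modulus \( m := \min\cbr{ \norm{\tilde X^\top z}_2 / \norm{z}_2 : z \succeq 0,\, z \ne 0 } \). This quantity is strictly positive precisely because \( \calK \) is non-singular, the standing assumption of this section: a nonzero \( z \succeq 0 \) in \( \ker \tilde X^\top \) would give \( \sum_i z_i \tilde x_i = 0 \) with nonnegative coefficients, which is incompatible with the existence of an interior point \( \bar w \) satisfying \( \tilde X \bar w \succ 0 \), guaranteed by \cref{lemma:affine-characterization}. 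Since our particular \( z \) is feasible for this minimization, \( \norm{z}_2 \le \norm{\tilde X^\top z}_2 / m \le \rho / m \).

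It remains to bound \( m \) below by \( \sigma_{\text{min}}(\tilde X_{\calJ}) \) for a suitable \( \calJ \). I would take \( z^* \) to be a minimizer of the restricted modulus, which exists by compactness of \( \cbr{ z \succeq 0 : \norm{z}_2 = 1 } \), and set \( \calJ = \text{supp}(z^*) \). The key observation is that \( \tilde X_{\calJ}^\top \) is then automatically injective: if some \( 0 \ne c \in \ker \tilde X_{\calJ}^\top \) existed, extending \( c \) by zeros outside \( \calJ \) and perturbing \( z^* \mapsto z^* + t c \) would leave \( \tilde X^\top z^* \) unchanged while strictly increasing \( \norm{z^* + tc}_2 \) for a suitable small \( t \) (feasible since \( z^*_{\calJ} \succ 0 \)), thereby pushing the quotient below \( m \) and contradicting minimality. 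Thus the rows of \( \tilde X_{\calJ} \) are linearly independent, \( \sigma_{\text{min}}(\tilde X_{\calJ}) > 0 \), and \( m = \norm{\tilde X_{\calJ}^\top z^*_{\calJ}}_2 / \norm{z^*_{\calJ}}_2 \ge \sigma_{\text{min}}(\tilde X_{\calJ}) \). Chaining gives \( \norm{z}_2 \le \rho/m \le \rho/\sigma_{\text{min}}(\tilde X_{\calJ}) \), which is the desired sub-matrix analogue of the step in \cref{prop:socp-decomp}.

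The main obstacle I anticipate is making the injectivity argument fully rigorous — checking that the perturbation along a kernel direction can always be taken feasibility-preserving and strictly norm-increasing (handling both \( \abr{z^*, c} = 0 \) and \( \abr{z^*, c} \ne 0 \)), and that \( m > 0 \) really does follow from non-singularity rather than merely full rank. The degenerate situation in which no positive \( m \) exists corresponds to a singular \( \calK \), where some \( \calJ \) forces \( \sigma_{\text{min}}(\tilde X_{\calJ}) = 0 \); this is exactly what the ``(possibly zero)'' convention absorbs, since the right-hand side is then \( +\infty \) and the bound holds vacuously.
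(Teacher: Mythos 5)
Your proof is correct, and its first half coincides with the paper's: first-order optimality gives \( \norm{\tilde X^\top z}_2 \leq \rho \) for \( z = (b - \tilde X \tilde w)_+ \), and the elementwise domination of \( (\tilde X \tilde w)_- \) and \( (\tilde X \tilde v)_- \) by \( z \) is rank-free, so everything reduces to bounding \( \norm{z}_2 \). Where you genuinely diverge is the choice of \( \calJ \). The paper simply takes \( \calJ = \cbr{i : b_i - \abr{\tilde x_i, \tilde w} > 0} \), i.e.\ the support of \( z \) itself, so that \( \tilde X^\top z = \tilde X_{\calJ}^\top z_{\calJ} \) and \( \norm{z}_2 \leq \rho / \sigma_{\text{min}}(\tilde X_{\calJ}) \) drops out in one line --- vacuously so when \( \sigma_{\text{min}}(\tilde X_{\calJ}) = 0 \), as the paper's accompanying remark concedes. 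You instead bound \( \norm{z}_2 \) by the minimum modulus \( m \) of \( \tilde X^\top \) over the nonnegative orthant and take \( \calJ \) to be the support of a minimizer \( z^* \). Your perturbation argument showing \( \ker \tilde X_{\calJ}^\top = \cbr{0} \) goes through: expanding \( \norm{z^* + t c}_2^2 \) handles both the \( \abr{z^*, c} = 0 \) and \( \abr{z^*, c} \neq 0 \) cases with an appropriate sign of \( t \), the strict positivity \( z^*_{\calJ} \succ 0 \) keeps the perturbation feasible, and the contradiction needs \( m > 0 \), which your interior-point argument (essentially the easy direction of Gordan's theorem, via the non-singularity standing assumption and \cref{lemma:affine-characterization}) supplies. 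The trade-off is clear: your route is longer and your \( \calJ \) is non-constructive, depending on a worst-case \( z^* \) rather than on the computed \( \tilde w \), but it buys a strictly stronger conclusion --- when \( \calK \) is non-singular your \( \tilde X_{\calJ} \) is guaranteed full row-rank, so the bound is never vacuous, whereas the paper's choice of \( \calJ \) can produce \( \sigma_{\text{min}}(\tilde X_{\calJ}) = 0 \).
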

\begin{proof}
	First-order optimality conditions for \( \tilde w \) imply
	\begin{align*}
		-\tilde X^\top \rbr{b - \tilde X \tilde w}_+
		 & \in \rho \cdot \partial \norm{\tilde w}_2.
	\end{align*}
	Let \( \calJ = \cbr{i \in [n] : b_i - \abr{\tilde x_i, \tilde w} > 0 } \)
	and define \( B \) be a diagonal matrix such that \( B_{jj} = 1 \) if
	\( j \in \calJ \) and \( 0 \) otherwise.
	In this notation, the optimality conditions can be written as
	\begin{align*}
		-\tilde X^\top B\rbr{b - \tilde X \tilde w}
		 & \in \rho \cdot \partial \norm{\tilde w}_2  \\
		\iff
		-(B \tilde X^\top)\rbr{b - \tilde X \tilde w}
		 & \in \rho \cdot \partial \norm{\tilde w}_2.
	\end{align*}
	Noting that every vector in \( \partial \norm{\tilde w}_2 \) has norm at most
	\( 1 \), we deduce
	\begin{align*}
		\norm{(B \tilde X)^\top \rbr{b - \tilde X \tilde w}}_2
		 & \leq \rho                                                \\
		\iff
		\norm{\tilde X_{\calJ}^\top \rbr{b - \tilde X \tilde w}_{\calJ}}_2
		 & \leq \rho                                                \\
		\implies
		\norm{\rbr{b - \tilde X \tilde w}_{\calJ}}_2
		 & \leq \frac{\rho}{\sigma_{\text{min}}(\tilde X_{\calJ})}.
	\end{align*}
	Recalling the definition of \( \calJ \) and proceeding as in the proof of
	\cref{prop:approx-decomp-bounds} gives the desired result.
\end{proof}

\begin{remark}
	The bound given in \cref{prop:approx-decomp-submatrices} may be
	vacuous when \( \tilde X_{\calJ} \) is not full row-rank since it
	concerns the minimum singular value, rather than the minimum \emph{non-zero}
	singular value.
	In this respect, it is unlike the other results given so which have relied
	on the minimum non-zero singular value.
	However, it is worth reporting since this bound may be non-vacuous even when
	\( \tilde X \) is not full row-rank and only the asymptotic result in
	\cref{prop:approx-decomp-bounds} applies.
\end{remark}

\begin{restatable}{theorem}{alConvergenceRate}\label{thm:al-convergence-rate}
	Let \( \rbr{\gamma^*, \zeta^*} \) be the minimum-norm maximizer of the Lagrange dual of Problem~\ref{convex-forms:eq:convex-relu-mlp}.
	Assume \( \delta > 0 \) is fixed and at each iteration \cref{eq:al-subroutine} is carried out exactly.
	Then, the AL method computes an \( \epsilon \)-optimal estimate \( \rbr{\gamma_\epsilon, \zeta_\epsilon} \) in
	\begin{equation*}
		T \leq \frac{\norm{\gamma^*}_2^2 + \norm{\zeta^*}_2^2}{\delta \epsilon},
	\end{equation*}
\end{restatable}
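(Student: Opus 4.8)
The plan is to recognize that, under the stated assumptions, the AL method is \emph{exactly} the proximal-point algorithm applied to the negative Lagrange dual of C-ReLU, and then to invoke the classical convergence rate of proximal point \citep{rockafellar1976monotone, rockafellar1976augmented}. First I would form the Lagrangian of Problem~\ref{convex-forms:eq:convex-relu-mlp} by dualizing the cone constraints $\tilde X_i v_i \succeq 0$ and $\tilde X_i w_i \succeq 0$ with multipliers $\gamma_i, \zeta_i \succeq 0$, obtaining
\[
	L(v,w,\gamma,\zeta) = F(v,w) - \sum_{D_i \in \tilde\calD}\abr{\gamma_i, \tilde X_i v_i} + \abr{\zeta_i, \tilde X_i w_i},
\]
and define the concave dual $D(\gamma,\zeta) = \min_{v,w} L(v,w,\gamma,\zeta)$. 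Because each non-singular cone has an interior point (\cref{lemma:affine-characterization}), the sub-sampled problem is strictly feasible, Slater's condition holds, and strong duality applies; this is what will license the min--max interchange below.

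The key step is an identity linking the AL subproblem to a resolvent (proximal) step on the dual. For fixed $(\gamma_k,\zeta_k)$, completing the square shows that maximizing $-\abr{\gamma_i, \tilde X_i v_i} - \tfrac{1}{2\delta}\norm{\gamma_i - \gamma_{k,i}}_2^2$ over $\gamma_i \succeq 0$ is solved in closed form by $\gamma_i = (\gamma_{k,i} - \delta\tilde X_i v_i)_+$, and substituting this optimum reproduces precisely the one-sided quadratic penalty of \cref{eq:augmented-lagrangian}. Consequently,
\[
	\min_{v,w}\calL_\delta(v,w,\gamma_k,\zeta_k) = \min_{v,w}\max_{\gamma,\zeta \succeq 0}\left\{L(v,w,\gamma,\zeta) - \tfrac{1}{2\delta}\norm{\gamma - \gamma_k}_2^2 - \tfrac{1}{2\delta}\norm{\zeta - \zeta_k}_2^2\right\}.
\]
Swapping the inner $\min$ and $\max$ by strong duality and carrying out the minimization over $(v,w)$ gives $\max_{\gamma,\zeta \succeq 0}\{D(\gamma,\zeta) - \tfrac{1}{2\delta}\norm{\gamma - \gamma_k}_2^2 - \tfrac{1}{2\delta}\norm{\zeta - \zeta_k}_2^2\}$, whose maximizer is exactly the multiplier update of \cref{eq:al-subroutine}. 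Hence the iterates $\mu_k = (\gamma_k,\zeta_k)$ are generated by the proximal-point iteration on the convex objective $-D$ (restricted to the nonnegative orthant) with step-size $\delta$, and the $(\cdot)_+$ updates keep every $\mu_k$ feasible.

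With the equivalence in hand, the rate follows from the standard proximal-point guarantee: initializing at $\mu_0 = 0$,
\[
	D^* - D(\gamma_k, \zeta_k) \leq \frac{\norm{\mu_0 - \mu^*}_2^2}{2\delta k} = \frac{\norm{\gamma^*}_2^2 + \norm{\zeta^*}_2^2}{2\delta k},
\]
where $\mu^* = (\gamma^*,\zeta^*)$ is the minimum-norm maximizer — taking the minimum-norm dual solution makes $\norm{\mu_0 - \mu^*}_2$ smallest when the maximizer is non-unique. Requiring the right-hand side to be at most $\epsilon$ and solving for $k$ yields the claimed iteration count, since $\tfrac{\norm{\gamma^*}_2^2 + \norm{\zeta^*}_2^2}{2\delta\epsilon} \leq \tfrac{\norm{\gamma^*}_2^2 + \norm{\zeta^*}_2^2}{\delta\epsilon}$.

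The main obstacle — and essentially the only place care is required — is justifying the min--max interchange and thereby the claim that the AL subproblem solve plus multiplier update coincides with a genuine proximal step on the dual; this rests on verifying strict feasibility of the reduced problem (so that a saddle point exists via Slater) and on the exactness assumption for \cref{eq:al-subroutine}. Once the reformulation is established, the proximal-point rate is classical and needs no new argument.
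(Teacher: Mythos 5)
Your proposal is correct and follows essentially the same route as the paper: identify the AL iteration as the proximal-point algorithm on the Lagrange dual, then apply the classical $O\!\left(\norm{\mu_0 - \mu^*}_2^2 / (\delta k)\right)$ rate with $\mu_0 = 0$. The only difference is that you derive the AL/prox-point equivalence explicitly via completing the square and a min--max interchange, whereas the paper simply cites this standard fact (Bertsekas, Section 5.4.6) and then invokes G\"uler's Theorem 2.1 for the rate; your tighter $1/(2\delta k)$ constant still implies the stated bound.
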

\begin{proof}
	Let \( d \) be the Lagrange dual function associated the Problem~\ref{convex-forms:eq:convex-relu-mlp}.
	We will show that the desired iteration complexity follows from standard results in the optimization literature.

	Firstly, it is well-known that if the primal objective is a proper, closed, convex function, then one iteration of the AL method with penalty strength \( \delta > 0 \) is equivalent to the following proximal-point step on the dual problem:
	\begin{align*}
		\rbr{\gamma_{k+1}, \zeta_{k+1}} & = \argmax_{\gamma \geq 0, \zeta \geq 0} \cbr{d(\gamma, \zeta) - \frac{1}{2 \delta} \sbr{\norm{\gamma - \gamma_k}^2 \norm{\zeta - \zeta_k}^2} }.
	\end{align*}
	See \citet[Section 5.4.6]{bertsekas1997nonlinear} for a proof of this fact.

	Invoking \citet[Theorem 2.1]{guler1991convergence} implies that the AL method attains the following convergence rate for the dual parameters:
	\begin{equation}
		d(\gamma^*, \zeta^*) - d(\gamma_k, \zeta_k) \leq \frac{\norm{\gamma^* - \gamma_0}_2^2 + \norm{\zeta^* - \gamma_0}_2^2}{\delta k}.
	\end{equation}
	Choosing \( \gamma_0 = \zeta_0 = 0 \) and re-arranging this equation gives the desired iteration complexity.
\end{proof}

\subsection{Data Normalization}\label{app:data-normalization}

Recall that the proximal gradient update has the form,
\begin{align*}
	\xkk & = \argmin_{x} \cbr{f(\xk) + \abr{\grad(\xk), x - \xk} + \frac{1}{2 \etak}\norm{x - \xk}_2^2 + g(x)}                         \\
	     & = \argmin_{x} \cbr{\frac{1}{2\etak} \norm{x - \rbr{\xk - \etak \grad(\xk)}}_2^2 + g(x)}.
	\intertext{Taking \( g(x) \) to be the group \( \ell_1 \) penalty, we have}
	\xkk & = \argmin_{x} \cbr{\frac{1}{2\etak} \norm{x - \rbr{\xk - \etak \grad(\xk)}}_2^2 + \lambda \sum_{g \in \calG} \norm{x_g}_2},
\end{align*}
where \( \calG \) is the set of group indices.
Letting \( \x^{+} = \xk - \etak \grad(\xk) \), the update takes the form (see, e.g. \citet[Section 2.3]{sra2012optimization}),
\begin{align*}
	\sbr{\xkk}_g & = \rbr{1 - \frac{\lambda}{\norm{\x^{+}_g}_2}}_+ \x^{+}_g,
\end{align*}
which establishes our claim that the proximal step~\eqref{unconstrained:eq:line-search-cond} is a thresholding operator.

Thresholding operators are sensitive to rounding and other forms of numerical error.
Indeed, it is not hard to see that using a perturbed gradient \( \hat \grad(\xk) = \grad(\xk) + \epsilon \) can lead to groups dropping out of the model (or staying in the model) when they should remain non-zero.
Thus, it is important to reduce numerical error as much as possible by improving the condition of other operations, like computing \( \grad(\xk) \).
We can use data normalization to partially achieve this goal.

In the remainder of this section, we restrict ourselves to the C-GReLU problem with squared loss,
\[
	\min_{v} \; \half \norm{\sum_{D_i \in \tilde \calD} D_i X v_i - y}_2^2 + \lambda \sum_{D_i \in \tilde \calD} \norm{v_i}_2.
\]
The Hessian of the smooth component of this problem is \( \nabla^2 f(v) = M^\top M \), where \( M \) is the ``expanded'' data matrix \( M = \sbr{D_1 X \; D_2 X \; \ldots D_{|\tilde \calD|} X} \).
Let \( h \in \R^{d}, h_i = \norm{X_{\cdot,i}}_2 \) and \( H = \text{diag}(h) \).
That is, \( H \) is a diagonal matrix with the column-norms of \( X \) along the diagonal.
Finally, define the column-normalized version of \( X \) to be \( N = H^{-1} X \).

It is not hard to see that the diagonal elements of \( N^\top N \) are \( 1 \) by construction.
Applying a trace bound, we have
\begin{equation}
	\lambda_{\text{max}}(N^\top N) \leq \text{trace}\rbr{N^\top N} = d.
\end{equation}
Now, consider the normalized version of the expanded data matrix, \( \tilde N = \sbr{D_1 N \; D_2 N \ldots} \).
Recalling each \( D_i \) is a diagonal matrix whose elements are either \( 0 \) or \( 1 \), we have

\[ N^\top D_i^\top D_i N = N^\top D_i N^\top \preceq N^\top N, \]
for each \( D_i \in \tilde \calD \) and the diagonal elements of this matrix are bounded by \( 1 \).
We conclude that
\[
	\lambda_{\text{max}}(\tilde N^\top \tilde N) \leq \text{trace}\rbr{\tilde N^\top \tilde N} \leq d \cdot |\tilde \calD|.
\]
This establishes the claim in Section~\ref{sec:efficient-fista} that data normalization can be used to upper-bound the maximum eigenvalues of the Hessian.

Moving on to computation of the gradient, note that the Hessian \( \tilde N^\top \tilde N \) will be low-rank as long as \( |\tilde \calD| * d > n \).
In fact, this is nearly always the case since we typically choose \( \tilde \calD \) to be as large as possible.
Thus, although the condition number of \( \nabla^2 f(v) \) is not well-defined, it is possible to reduce the maximum expansion entailed by the Hessian via column normalization.
Observing \( \nabla f(v) = \nabla^2 f(v) v - \tilde N^\top y \), we may expect conditioning of the gradient computation to improve.
Finally, since the C-GReLU is a linear model, transforming the weights as \( v_i' = H^{-1} v_i \) after optimization can be used to project the model back into the original data space, ensuring that data normalization has no effects outside of optimization.

\newpage

\section{Additional Experiments and Experimental Details}\label{app:additional-experiments}

Now we provide additional experimental results which were omitted from the main paper due to space constraints.
We also give all necessary details to replicate our experiments.


\subsection{Synthetic Classification}\label{app:synthetic-classification}

\begin{figure}[t]
	\centering
    \ifdefined\smallPDF
	\includegraphics[width=0.98\linewidth]{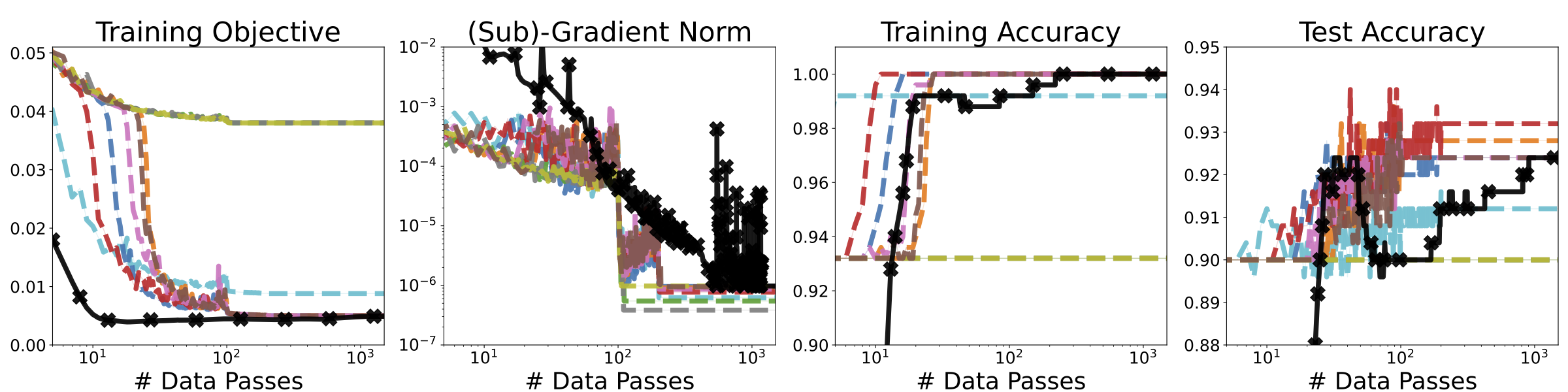}
    \else
	\includegraphics[width=0.98\linewidth]{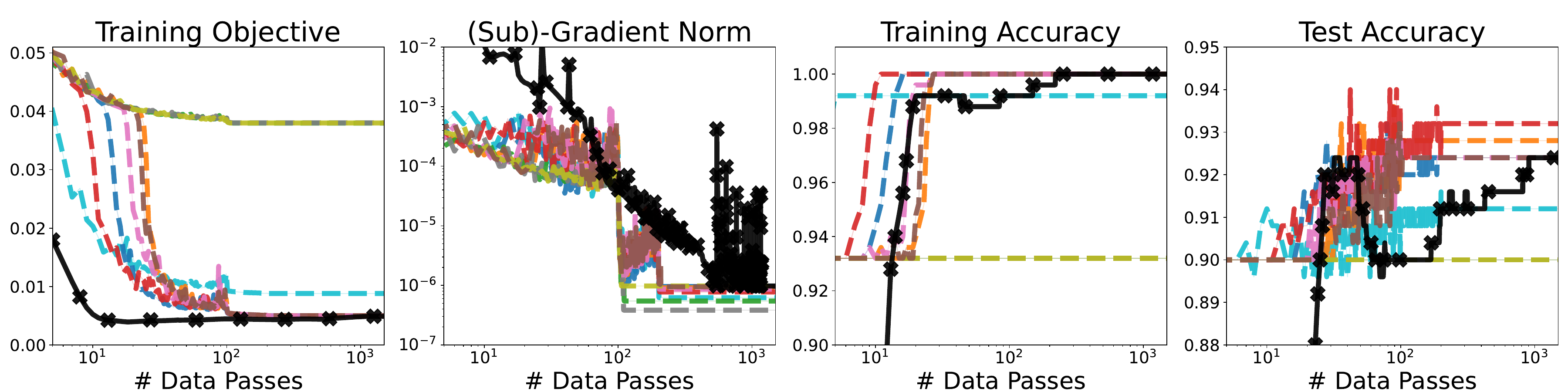}
    \fi
	\caption{Expanded version of \cref{fig:synthetic-classification} showing the square of the training gradient norm and test accuracy in addition to training objective and training accuracy, as reported in the main paper.
		Every run of SGD clearly converges to an approximate stationary point before terminating.
		The ``bumps'' in gradient norm for the AL method are caused by the dual updates, which increase the norm of the minim-norm subgradient of the augmented Lagrangian (\( \calL_\delta \)) with respect to the primal parameters.
	}%
	\label{fig:synthetic-classification-expanded}
\end{figure}

In this section, we provide additional details and results for the synthetic classification problem shown in \cref{fig:synthetic-classification}.

\textbf{Experimental Details}:
As mentioned in the main text, we generate the dataset by sampling \( X \sim \calN\rbr{0, \Sigma} \) and then taking \( y = \text{sign}(h_{W_1,w_2}(X)) \), where \( h_{W_1, w_2} \) is a two-layer ReLU network with \( m = 100 \) and random Gaussian weights.
We create \( 250 \) training examples and \( 250 \) test examples with \( d = 50 \) in this fashion.
The covariance matrix \( \Sigma \) is generated by sampling a random orthonormal matrix of eigenvectors and \( d - 2 \) eigenvalues from the interval \( [1, 10] \).
We then append \( 10 \) and \( 1 \) to this list and form \( \Sigma \) from the diagonalization;
this guarantees that the condition number of \( \Sigma \) is exactly \( 10 \).
Before optimization, we unitize the columns of the feature matrix (see Appendix~\ref{app:data-normalization}) to be consistent without our other experiments.

For the non-convex optimization problem, we use the standard PyTorch initialization and a step-size of \( 10 \).
This step-size gave the fastest convergence out of a grid of \( \{20, 10, 5, 1, 0.5, 0.1, 0.01\} \).
The mini-batch size is \( 25 \) examples (\( 10 \% \) of the dataset) and the maximum number of epochs is \( 1000 \).
We consider SGD to have converged when the gradient norm (as computed by PyTorch) is less than \( 10^{-3} \).
We change the global seed at each of the ten different runs, which ensures that both the initialization and mini-batch order/composition are different.
For our AL method, we randomly sample \( 100 \) ``diversity'' arrangements, which we augmented will activation patterns generated by SGD while optimizing the non-convex model.
Unlike SGD, the randomness in 10 runs of the AL method is due only to (i) sampling of the diversity set and (ii) the sign patterns from the SGD run.
Note that we are careful to use exactly the same runs of SGD as described above when using the active set method to compute \( \tilde \calD \).
We use the standard parameters as given in Appendix~\ref{app:default-parameters} for the remainder of the AL method's settings.

\textbf{Additional Results}:
\cref{fig:synthetic-classification-expanded} shows the convergence behavior of our AL and SGD.\@
As in the main paper, we omit all runs of the AL method but one because they are nearly identical.
One run of SGD diverges, while nine runs converge to stationary points as measured by the convergence criterion.
Of these, four converge to local minima with sub-optimal objective values; these models also do not have \( 100 \% \) accuracy on the training set despite the problem being realizable.
These sub-optimal local minimal also give worse test accuracy than the model found by the AL method.


\subsection{Large-Scale Comparison}\label{app:performance-profiles}

This section gives concrete experimental details for the large-scale comparison of optimization performance presented in \cref{fig:performance-profiles}.
We also present the \emph{same} experimental results with different thresholds for success.

\subsubsection{Experimental Details}

We first provide details required to reproduce~\cref{fig:performance-profiles}.

\textbf{Data}: We generate the performance profile using 73 datasets taken from the UCI machine learning repository and six individual regularization parameters for each dataset.
See Appendix~\ref{app:uci-datasets} for details.
This created a set of 438 optimization problems on which we tested the optimization algorithms.
Post-optimization, we omit all problems for which a degenerate solution (ie. all weights are zero) is optimal.

\textbf{Models}: We generated \( \tilde \calD \) for the C-ReLU and C-GReLU problems by sampling \( 5000 \) and \( 2500 \) generating vectors from \( z_i \sim \calN(0, \bfI) \), respectively, and then computing \( D_i = \text{diag}(X z_i > 0) \).
The zero matrix was removed and duplicate patterns were filtered out.
The convex formulations were extended to multi-class classification problems as described in Appendix~\ref{app:multi-class-extension}.
To ensure the convex and non-convex problems have the same optimal values, we use the vector-out formulation of the NC-ReLU problem given in \cref{convex-forms:eq:vector-non-convex-relu-mlp}.

We use the exact same activation patterns for NC-GReLU and C-ReLU.
We approximately match the NC-GReLU and C-ReLU model spaces by choosing
\[ m = \sum_{D_i \in \tilde \calD} \abs{\cbr{v^*_i : v^*_i \neq 0} \cup \cbr{w^*_i : w^*_i \neq 0} }. \]
Recall from \cref{thm:duality-free} that this choice ensures the model space for C-ReLU is a strict subset of that for NC-ReLU.
Thus, our results can only favor the non-convex formulations.

\textbf{Optimizers}: For models with gated ReLU activations, we compare R-FISTA with default parameters (see Appendix~\ref{app:default-parameters}) to Adam, SGD, and MOSEK.\@

We use a mini-batch size of \( \frac{n}{10} \) for Adam and SGD and perform a grid-search over the following set of step-sizes:
\[
	\eta \in \cbr{10, 5, 1, 0.5, 0.1, 0.01, 0.001}.
\]
For each optimization problem, we choose the step-size which gives the smallest final training objective.
We also use a decay schedule that halves the step-size every 100 epochs; experimentally, this ``step'' schedule worked much better than classical schedules of the form \( \eta_k = \eta_0 * t^{-r} \), \( r > 0.5 \).
To control for stochasticity, Adam and SGD are run with three independent random seeds and only the best execution is reported.
MOSEK is run with the default configuration using CVXPY as an interface~\citep{diamond2016cvxpy, agrawal2018rewriting};
we only use MOSEK on the convex reformulation.

The same experimental procedure is used for Adam and SGD on models with ReLU activations,
We use our AL method with standard parameters (Appendix~\ref{app:default-parameters}) to solve the convex reformulation and MOSEK is again used with standard parameters to solve the convex reformulation.

\textbf{Hardware and Timing}:
R-FISTA, our AL method, Adam, and SGD are run on GPU compute nodes with one GeForce RTX 2080Ti graphics card, two AMD 7502P CPUs, and 8 GB of RAM.
Note that the GPUs themselves have 11 GB of GPU RAM.\@
MOSEK cannot be run on GPUs, so instead these experiments are executed on CPU nodes with 32 GB of RAM and 4 AMD EPYC 7502 CPUs, each of which have 32 cores.
In practice, we observed extremely small variance when timing identical runs.
As such, we do not average times over multiple runs.

\textbf{Determining Successes}:
We use the (sub)-optimality gap \( F(x_k) - F(x^*) \) to determine if optimization is successful.
In particular, the relative optimality gap can be checked as
\[
	\Delta_k := \frac{F(x_k) - F(x^*)}{F(x^*)} \leq r_{\text{gap}},
\]
for some threshold \( r_{\text{gap}} \).
\cref{fig:performance-profiles} reports results for \( r_{\text{gap}} = 1 \).
For fairness, we provide figures generated from the same experimental data with different choices of \( r_{\text{gap}}\) in the next sub-section.
Finally, runs which exceed their available memory and crash are considered failures, as are problems which take more than 15 minutes.
In practice, this is only applicable for MOSEK, which scales poorly in both memory and time.

\subsubsection{Additional Results}

\begin{figure}
	\centering
	\includegraphics[width=0.96\linewidth]{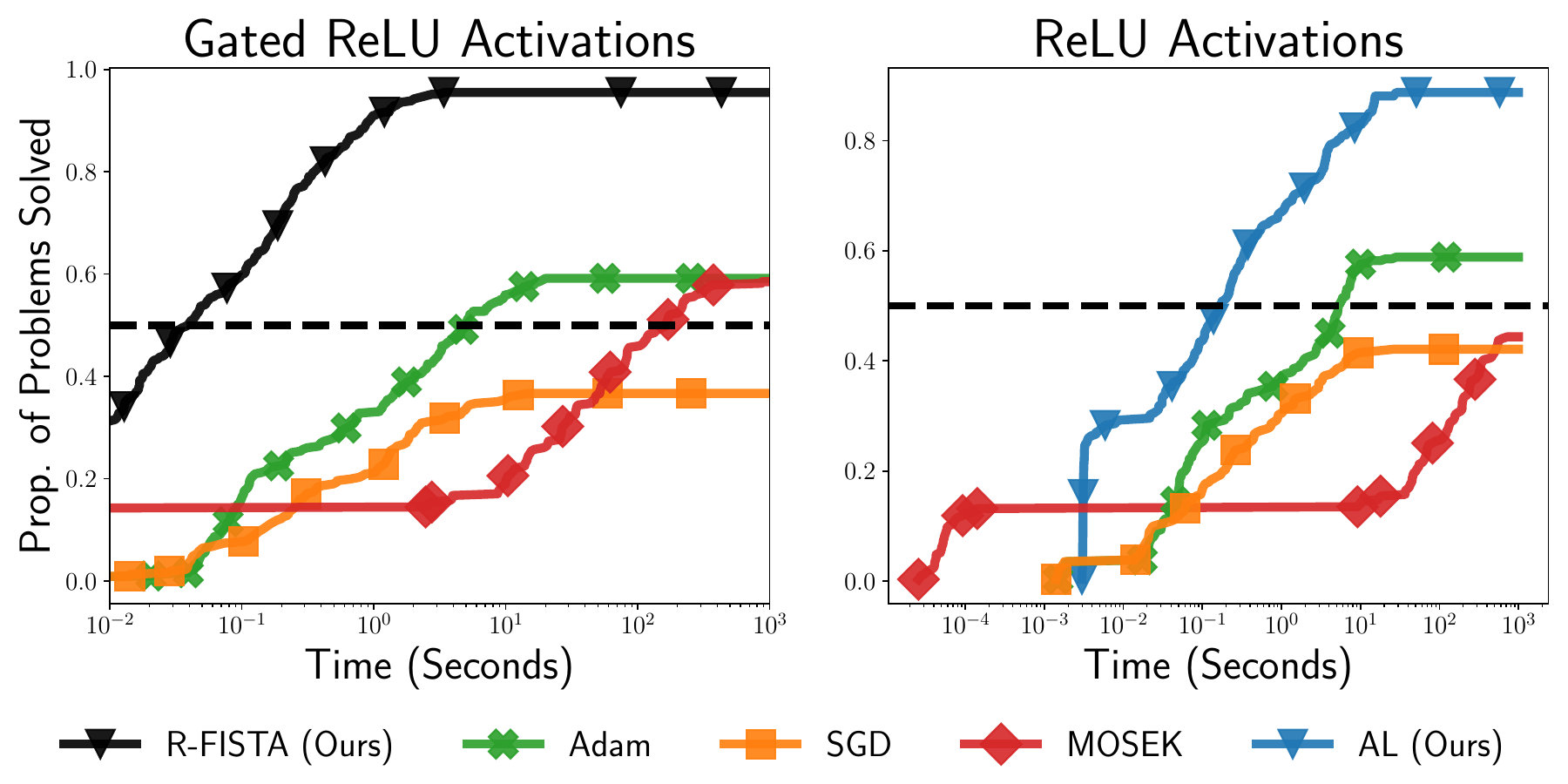}
	\caption{
		Alternative version of \cref{fig:performance-profiles} with success threshold set to be \( r_{\text{gap}} = 0.5 \).
		Recall that a problem is considered solved when \( \rbr{F(x_k) - F(x^*)}/F(x^*) \leq r_{\text{gap}} \), where \( F(x^*) \) is the smallest objective value found by any method.
		We find that tightening the success threshold (as compared to \cref{fig:performance-profiles}) only improves the performance of our convex solvers relative to Adam and SGD.
	}%
	\label{fig:pp-main-05}
\end{figure}

\begin{figure}
	\centering
	\includegraphics[width=0.96\linewidth]{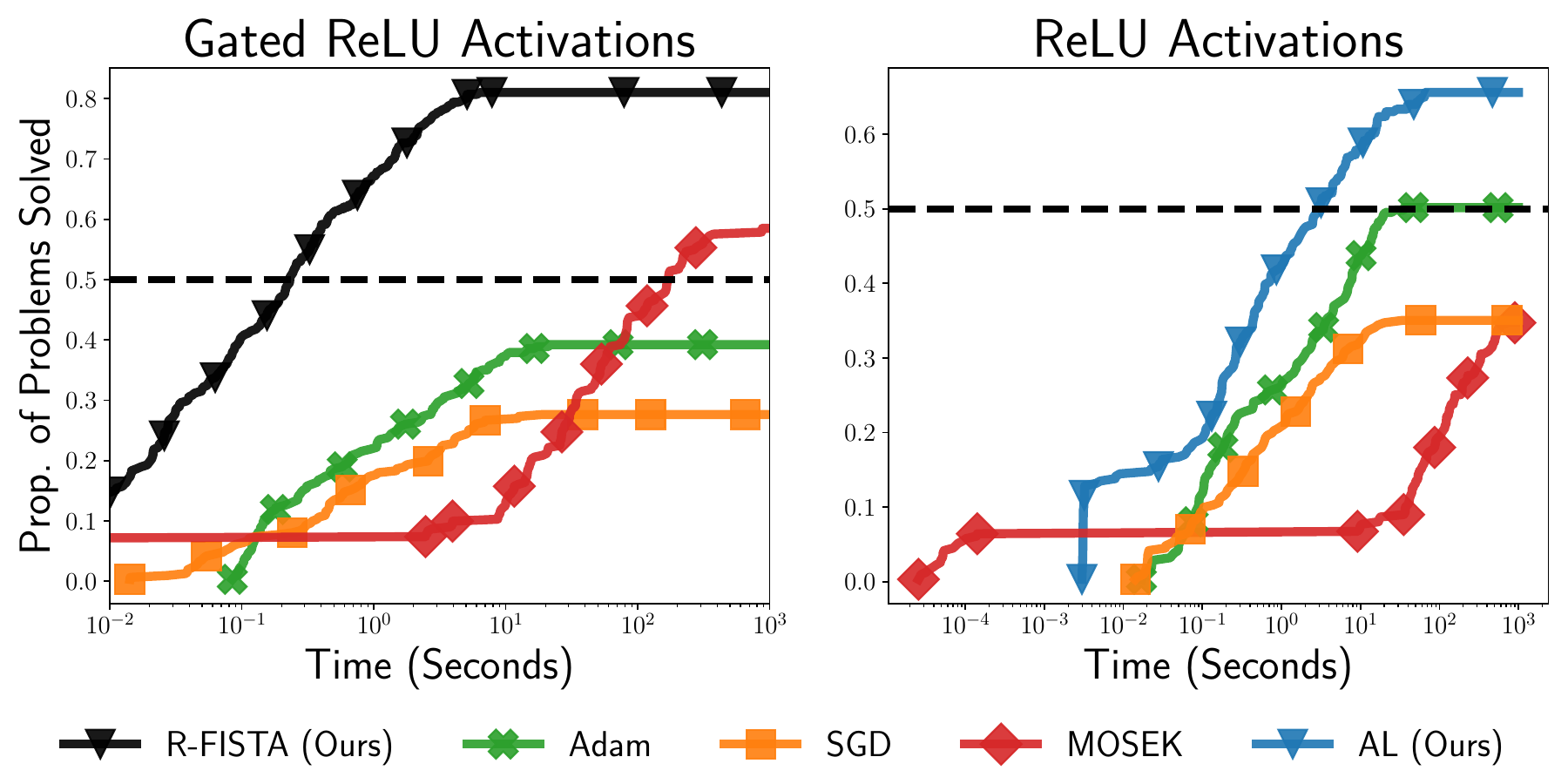}
	\caption{
		Alternative version of \cref{fig:performance-profiles} with success threshold set to be \( r_{\text{gap}} = 0.1 \).
		Performance of all optimization methods decreases at this threshold, with the notable exception of MOSEK for the C-GReLU problem.
		However, the relative ordering of R-FISTA, AL, Adam, and SGD remains unchanged.
		In other words, our optimizers applied to the convex reformulations still out-perform the non-convex baselines.
	}%
	\label{fig:pp-main-01}
\end{figure}

\textbf{Alternative Success Thresholds}:
Choosing the threshold for the relative optimality gap \( \Delta_k \) is subjective and can potentially favor some methods over others.
In this section, we show that alternative values of \( r_{\text{gap}} \) preserve the ordering of methods from \cref{fig:performance-profiles}.
In particular, tightening the threshold shows that our convex solvers are not only faster than the non-convex baselines, but also solve the optimization problems to greater accuracy.

Figures~\ref{fig:pp-main-05} and~\ref{fig:pp-main-01} present the same experimental results as in Figure~\ref{fig:performance-profiles} with \( r_{\text{gap}} = 0.5 \) and \( r_{\text{gap}} = 0.1 \), respectively.
They should be compared against the threshold value of \( r_{\text{gap}} = 1 \) used in the main paper.
These figures show that the relative performance of each optimization method remains unchanged as the threshold is decreased, with the notable exception of MOSEK.
This is because MOSEK uses a highly accurate, but slow, interior point method.


\subsection{Cone Decompositions}\label{app:cone-decomps}

\setlength{\tabcolsep}{4pt} 

\begin{table*}[t]
	\centering
	\caption{Approximating the C-ReLU problem with cone decompositions.
		We compare the solution to C-GReLU (FISTA) with cone-decomposition by solving the min-norm program (CD-SOCP),
		the approximate cone decomposition (CD-A), and directly solving C-ReLU using the AL method.
		We report the norm of each model to demonstrate the ``blow-up'' effect of the cone decomposition.
	}
	\label{table:cone-decomp-full}
	\vspace{0.1in}

	\begin{small}
		\begin{tabular}{lcccccccccccc}
			               & \multicolumn{3}{c}{\textbf{R-FISTA}} & \multicolumn{3}{c}{\textbf{CD-SOCP}} & \multicolumn{3}{c}{\textbf{CD-A}} & \multicolumn{3}{c}{\textbf{AL}}                                                                                                                                        \\ \cmidrule(lr){2-4}  \cmidrule(lr){5-7}  \cmidrule(lr){8-10} \cmidrule(lr){11-13}
			Dataset        & Acc.                                 & Time                                 & Norm                              & Acc.                            & Time   & Norm                          & Acc. & Time & Norm                           & Acc. & Time  & Norm                          \\ \midrule
			breast-cancer  & 68.4                                 & 0.84                                 & 1.06{\tiny \(\times 10^{2}\)}     & 68.4                            & 12.14  & 3.73{\tiny \(\times 10^{3}\)} & 68.4 & 3.18 & 4.76{\tiny \(\times 10^{3}\)}  & 66.7 & 6.17  & 5.84{\tiny \(\times 10^{1}\)} \\
			congressional  & 64.4                                 & 0.89                                 & 4.20{\tiny \(\times 10^{1}\)}     & 64.4                            & 80.69  & 1.90{\tiny \(\times 10^{3}\)} & 97.3 & 5.26 & 3.5{\tiny \(\times 10^{3}\)}   & 69.0 & 18.32 & 3.48{\tiny \(\times 10^{1}\)} \\
			sonar          & 87.8                                 & 0.45                                 & 2.25{\tiny \(\times 10^{1}\)}     & 90.2                            & 23.13  & 1.28{\tiny \(\times 10^{2}\)} & 64.4 & 1.09 & 1.83{\tiny \(\times 10^{2}\)}  & 87.8 & 4.34  & 2.33{\tiny \(\times 10^{1}\)} \\
			credit         & 82.6                                 & 0.44                                 & 7.32{\tiny \(\times 10^{1}\)}     & 83.3                            & 63.46  & 3.97{\tiny \(\times 10^{3}\)} & 84.1 & 5.1  & 4.76{\tiny \(\times 10^{3 }\)} & 84.1 & 6.46  & 5.11{\tiny \(\times 10^{1}\)} \\
			cylinder       & 75.5                                 & 1.18                                 & 9.64{\tiny \(\times 10^{1}\)}     & 77.5                            & 79.78  & 2.09{\tiny \(\times 10^{3}\)} & 75.5 & 2.68 & 2.61{\tiny \(\times 10^{3}\)}  & 75.5 & 15.18 & 1.11{\tiny \(\times 10^{2}\)} \\
			ecoli          & 71.6                                 & 0.07                                 & 1.73{\tiny \(\times 10^{1}\)}     & 71.6                            & 149.7  & 5.82{\tiny \(\times 10^{2}\)} & 70.1 & 0.36 & 1.13{\tiny \(\times 10^{2}\)}  & 70.1 & 3.38  & 1.68{\tiny \(\times 10^{1}\)} \\
			energy-y1      & 86.3                                 & 0.12                                 & 2.90{\tiny \(\times 10^{1}\)}     & 86.3                            & 134.55 & 2.34{\tiny \(\times 10^{3}\)} & 86.3 & 2.01 & 1.34{\tiny \(\times 10^{3}\)}  & 83.7 & 5.05  & 2.89{\tiny \(\times 10^{1}\)} \\
			glass          & 64.3                                 & 0.13                                 & 2.00{\tiny \(\times 10^{1}\)}     & 64.3                            & 68.76  & 4.05{\tiny \(\times 10^{2}\)} & 64.3 & 0.69 & 2.31{\tiny \(\times 10^{2}\)}  & 61.9 & 3.0   & 1.72{\tiny \(\times 10^{1}\)} \\
			cleveland      & 51.7                                 & 0.13                                 & 1.97{\tiny \(\times 10^{1}\)}     & 51.7                            & 109.4  & 2.52{\tiny \(\times 10^{2}\)} & 51.7 & 0.6  & 2.18{\tiny \(\times 10^{2 }\)} & 50.0 & 1.82  & 1.77{\tiny \(\times 10^{1}\)} \\
			hungarian      & 86.2                                 & 0.88                                 & 8.01{\tiny \(\times 10^{1}\)}     & 86.2                            & 17.15  & 2.75{\tiny \(\times 10^{3}\)} & 86.2 & 4.09 & 3.59{\tiny \(\times 10^{3}\)}  & 84.5 & 7.01  & 5.09{\tiny \(\times 10^{1}\)} \\
			heart-va       & 35.0                                 & 0.16                                 & 2.44{\tiny \(\times 10^{1}\)}     & 37.5                            & 67.11  & 4.79{\tiny \(\times 10^{2}\)} & 37.5 & 0.9  & 4.73{\tiny \(\times 10^{2 }\)} & 37.5 & 1.86  & 1.58{\tiny \(\times 10^{1}\)} \\
			hepatitis      & 80.6                                 & 1.06                                 & 2.96{\tiny \(\times 10^{1}\)}     & 80.6                            & 10.57  & 2.63{\tiny \(\times 10^{2}\)} & 80.6 & 1.97 & 3.4{\tiny \(\times 10^{2}\)}   & 74.2 & 8.56  & 5.53{\tiny \(\times 10^{1}\)} \\
			horse-colic    & 85.0                                 & 1.0                                  & 5.79{\tiny \(\times 10^{1}\)}     & 86.7                            & 29.42  & 9.45{\tiny \(\times 10^{2}\)} & 86.7 & 4.54 & 1.36{\tiny \(\times 10^{3}\)}  & 90.0 & 9.75  & 8.92{\tiny \(\times 10^{1}\)} \\
			ionosphere     & 90.0                                 & 1.15                                 & 4.74{\tiny \(\times 10^{1}\)}     & 90.0                            & 44.12  & 1.51{\tiny \(\times 10^{3}\)} & 90.0 & 5.76 & 2.05{\tiny \(\times 10^{3}\)}  & 90.0 & 15.91 & 6.42{\tiny \(\times 10^{1}\)} \\
			mammograph     & 78.1                                 & 0.24                                 & 4.13{\tiny \(\times 10^{1}\)}     & 78.1                            & 33.17  & 8.11{\tiny \(\times 10^{3}\)} & 78.1 & 5.18 & 3.02{\tiny \(\times 10^{3}\)}  & 79.2 & 5.14  & 3.70{\tiny \(\times 10^{1}\)} \\
			monks-2        & 60.6                                 & 1.95                                 & 1.02{\tiny \(\times 10^{2}\)}     & 60.6                            & 5.81   & 7.36{\tiny \(\times 10^{3}\)} & 57.6 & 6.61 & 9.43{\tiny \(\times 10^{3}\)}  & 45.5 & 18.31 & 8.13{\tiny \(\times 10^{1}\)} \\
			monks-3        & 87.5                                 & 1.6                                  & 5.59{\tiny \(\times 10^{1}\)}     & 87.5                            & 3.89   & 3.44{\tiny \(\times 10^{3}\)} & 87.5 & 6.49 & 4.49{\tiny \(\times 10^{3}\)}  & 95.8 & 31.84 & 8.60{\tiny \(\times 10^{1}\)} \\
			oocytes        & 78.6                                 & 0.98                                 & 1.14{\tiny \(\times 10^{2}\)}     & 79.1                            & 136.3  & 1.19{\tiny \(\times 10^{4}\)} & 78.0 & 5.67 & 1.3{\tiny \(\times 10^{4}\)}   & 74.2 & 81.68 & 8.54{\tiny \(\times 10^{1}\)} \\
			parkinsons     & 92.3                                 & 1.9                                  & 4.70{\tiny \(\times 10^{1}\)}     & 92.3                            & 16.03  & 1.06{\tiny \(\times 10^{3}\)} & 92.3 & 4.84 & 1.45{\tiny \(\times 10^{3}\)}  & 89.7 & 19.04 & 6.69{\tiny \(\times 10^{1}\)} \\
			pima           & 73.2                                 & 0.36                                 & 7.88{\tiny \(\times 10^{1}\)}     & 73.2                            & 37.68  & 8.09{\tiny \(\times 10^{3}\)} & 73.2 & 5.07 & 7.45{\tiny \(\times 10^{3}\)}  & 75.8 & 4.72  & 4.03{\tiny \(\times 10^{1}\)} \\
			planning       & 63.9                                 & 1.33                                 & 8.94{\tiny \(\times 10^{1}\)}     & 63.9                            & 10.54  & 2.16{\tiny \(\times 10^{3}\)} & 63.9 & 3.07 & 3.05{\tiny \(\times 10^{3}\)}  & 58.3 & 9.74  & 1.09{\tiny \(\times 10^{2}\)} \\
			seeds          & 95.2                                 & 0.25                                 & 1.91{\tiny \(\times 10^{1}\)}     & 95.2                            & 26.71  & 4.83{\tiny \(\times 10^{2}\)} & 95.2 & 1.99 & 4.17{\tiny \(\times 10^{2}\)}  & 95.2 & 5.16  & 1.82{\tiny \(\times 10^{1}\)} \\
			australian     & 64.5                                 & 0.69                                 & 1.58{\tiny \(\times 10^{2}\)}     & 63.8                            & 55.59  & 1.06{\tiny \(\times 10^{4}\)} & 64.5 & 5.46 & 1.21{\tiny \(\times 10^{4}\)}  & 65.2 & 4.8   & 2.74{\tiny \(\times 10^{1}\)} \\
			statlog-heart  & 81.5                                 & 0.81                                 & 6.76{\tiny \(\times 10^{1}\)}     & 81.5                            & 15.57  & 1.48{\tiny \(\times 10^{3}\)} & 81.5 & 2.5  & 2.02{\tiny \(\times 10^{3 }\)} & 85.2 & 7.36  & 6.11{\tiny \(\times 10^{1}\)} \\
			teaching       & 40.0                                 & 0.24                                 & 3.71{\tiny \(\times 10^{1}\)}     & 40.0                            & 12.18  & 1.10{\tiny \(\times 10^{3}\)} & 40.0 & 1.75 & 1.14{\tiny \(\times 10^{3}\)}  & 33.3 & 2.05  & 2.21{\tiny \(\times 10^{1}\)} \\
			tic-tac-toe    & 97.9                                 & 0.45                                 & 1.70{\tiny \(\times 10^{2}\)}     & 97.9                            & 60.36  & 6.86{\tiny \(\times 10^{3}\)} & 97.9 & 4.38 & 7.6{\tiny \(\times 10^{3}\)}   & 93.7 & 14.25 & 1.57{\tiny \(\times 10^{2}\)} \\
			vertebral-col. & 88.7                                 & 0.68                                 & 7.41{\tiny \(\times 10^{1}\)}     & 88.7                            & 9.64   & 7.76{\tiny \(\times 10^{3}\)} & 88.7 & 5.54 & 6.73{\tiny \(\times 10^{3}\)}  & 90.3 & 8.23  & 4.74{\tiny \(\times 10^{1}\)} \\
			wine           & 100                                  & 0.25                                 & 1.87{\tiny \(\times 10^{1}\)}     & 100                             & 32.05  & 1.97{\tiny \(\times 10^{2}\)} & 100  & 0.95 & 2.42{\tiny \(\times 10^{2}\)}  & 100  & 3.67  & 1.78{\tiny \(\times 10^{1}\)} \\ \bottomrule
		\end{tabular}
	\end{small}
\end{table*}

\setlength{\tabcolsep}{6pt}

Now we provide details and additional results for the cone-decomposition experiments given in \cref{table:cone-decomp}.

\textbf{Experimental Details}:
We selected 23 datasets from the UCI repository and fixed the regularization parameter at \( \lambda = 0.01 \).
Note that this parameter is not necessary optimal for each dataset;
the purpose of these experiments is to study the effects of using cone-decompositions to approximate the C-ReLU solution with a G-ReLU solution, rather than to obtain optimal test accuracies.
We randomly sampled 1000 activation patterns for the C-ReLU and C-GReLU models, removing duplicates and the zero pattern as necessary.
Note that we report the median results from five individual runs with re-sampled activation patterns to control for variance in the procedure.

For multi-class datasets, the convex formulations were extended as described in Appendix~\ref{app:multi-class-extension}.
We used the standard parameters for R-FISTA and the AL method as given in Appendix~\ref{app:default-parameters},
while the min-norm decomposition programs (CD-SOCP) was solved with MOSEK using the default parameters.
For CD-A, we set \( \lambda = 10^{-10} \) and \cref{eq:cd-approx} with R-FISTA
using the default parameters.
We terminate the optimization procedure when the min-norm subgradient has squared-norm
less than or equal to \( 10^{-10} \).
R-FISTA and the AL method were run on GPU compute nodes with one GeForce RTX 2080Ti graphics card, and four AMD 7502P CPUs, and 32 GB of RAM.
The cone decompositions were solved on identical nodes with four AMD 7502P CPUs with 32 GB of RAM.

\textbf{Additional Results}:
\cref{table:cone-decomp-full} provides the full set of results on all 23 datasets.
It also includes the final group norms of the models, calculated as
\[
	\sum_{D_i \in \tilde \calD} \norm{u_i^*}_2,
\]
for the C-GReLU model and
\[
	\sum_{D_i \in \tilde \calD} \norm{v_i^*}_2 + \norm{w_i^*}_2,
\]
for the C-ReLU models.
This allows us to quantify the ``blow-up'' in the model norm from decomposing \( u_i^* \) onto \( \calK_i - \calK_i \).
In practice, we find that CD-SOCP leads to very large increases in the model norm compared to the FISTA/AL solutions, while CD-A has a less severe effect.
However, the increased norms do not appear to affect the test accuracy of the final models.
Indeed, CD-SOCP and CD-A perform as well as the solution to the C-GReLU problem given by R-FISTA and are comparable to the AL method's solution.
The major downside of exact the cone-decomposition method is the huge increase in time necessary to solve for the decomposition.
This is largely because MOSEK is restricted to running on CPU.


\subsection{The Role of Acceleration and other Algorithmic Components}\label{app:acceleration-exps}

\begin{figure}
	\centering
	\includegraphics[width=0.95\linewidth]{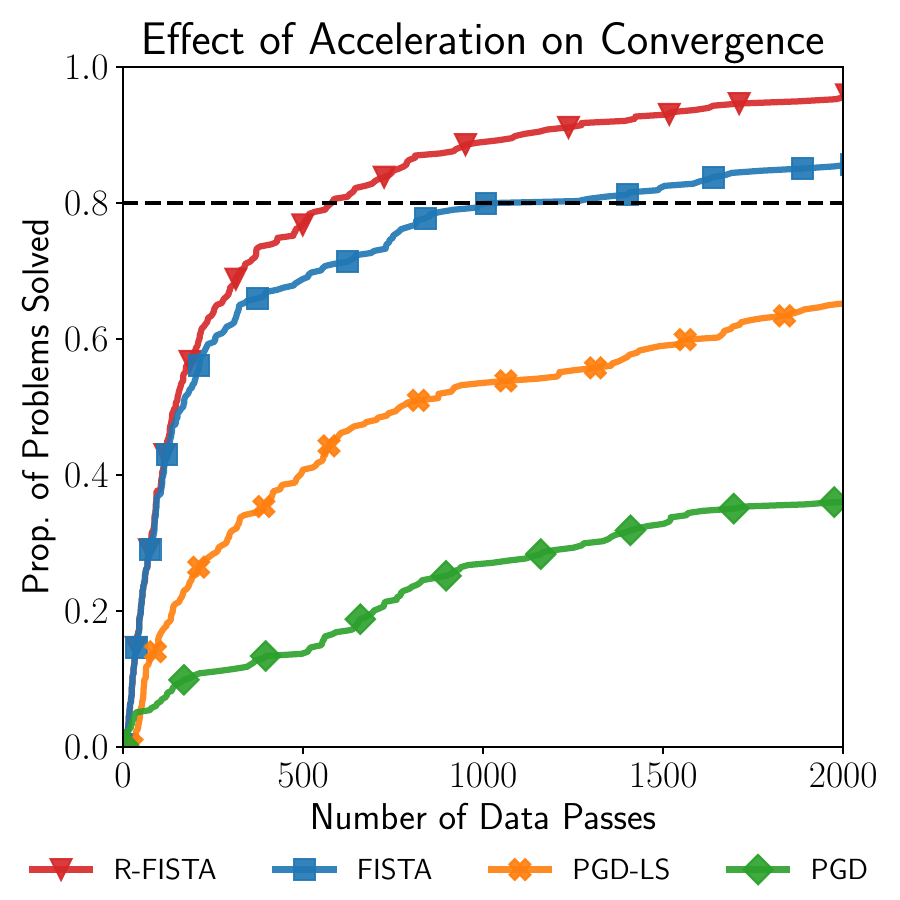}
	\caption{Performance profile comparing R-FISTA, FISTA without restarts, proximal gradient descent with line-search (PGD-LS) and proximal gradient descent with a fixed step-size (PGD) for solving C-GReLU on 73 datasets from the UCI repository.
		For PGD, we report results for the best step-size chosen by grid-search individually for each problem.
		R-FISTA solves a higher proportion of problems in fewer passes through the dataset.
	}%
	\label{fig:acceleration-pp}
\end{figure}

This section studies the effects of different algorithmic components on the optimization performance of R-FISTA for the C-GReLU problem.
By systematically removing restarts, acceleration, and line-search, we illustrate the importance of these enhancements to the speed and robustness of the optimization procedure.

\cref{fig:acceleration-pp} shows a performance profile comparing R-FISTA, the FISTA algorithm without restarts (FISTA), proximal gradient descent with the line-search described in Section~\ref{sec:efficient-fista} (PGD-LS), and proximal gradient descent (PGD) with a fixed step-size.
We use the same problem set as for \cref{fig:performance-profiles}: \( 438 \) individual training problems generated by considering six regularization parameters for \( 73 \) datasets taken from the UCI dataset repository.
See \cref{app:uci-datasets} for more details.
Note that we do not include problems for which the regularization parameter is overly large and a degenerate model (ie.\ all zeros) is optimal.
A problem is considered solved the minimum norm subgradient has norm less than or equal to \( 10^{-3} \); in practice, we check an identical condition on the gradient norm squared.
The C-GReLU model is formed by sampling \( 5000 \) activation patterns.

The x-axis shows the number of passes through that dataset that each method performs.
This quantity is equivalent to the iteration counter for PGD; for the remaining methods it also includes the number of function evaluations due to back-tracking on the line-search condition.
For R-FISTA, FISTA, and PGD-LS, we use the step-size initialization strategy described in the main paper (see Appendix~\ref{app:step-size-initialization} for experiments studying this rule) with the standard parameters given in Appendix~\ref{app:default-parameters}.
For each problem, we use the best fixed step-size for PGD out of the grid \( \cbr{10, 1, 0.1, 0.01} \).

We make the following observations:
(i) R-FISTA requires about three-fourths as many data passes as FISTA to solve \( 80\% \) of problems, which suggests restarts allow greater adaptivity to problem structure;
(ii) acceleration is critical to solving problems quickly and PGD-LS performs poorly compared to both R-FISTA and FISTA;
(iii) PGD is very slow despite using about \( 4 \times \) more compute than the other methods.

\begin{figure}
	\centering
	\ifdefined\smallPDF
		\includegraphics[width=0.95\linewidth]{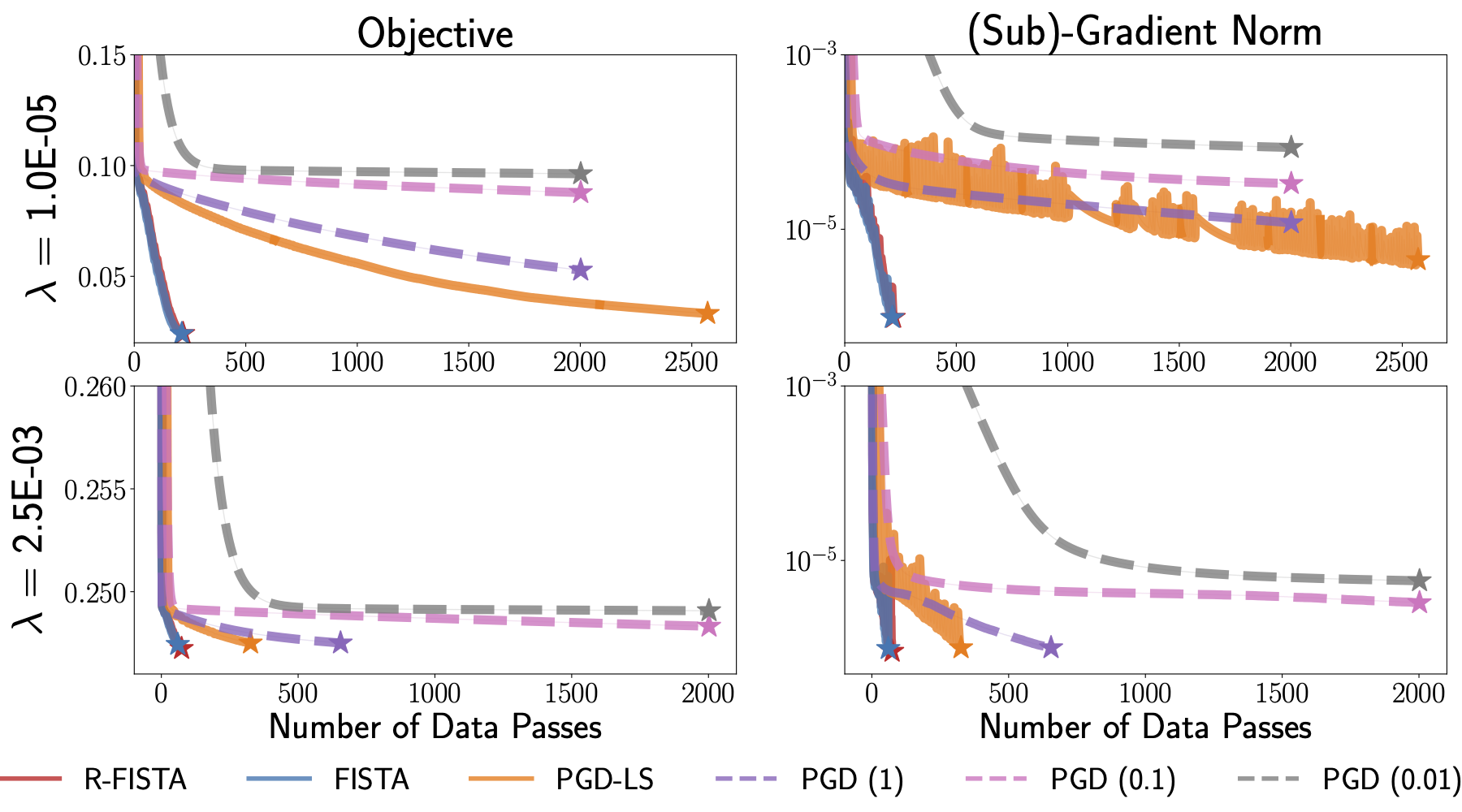}%
	\else
		\includegraphics[width=0.95\linewidth]{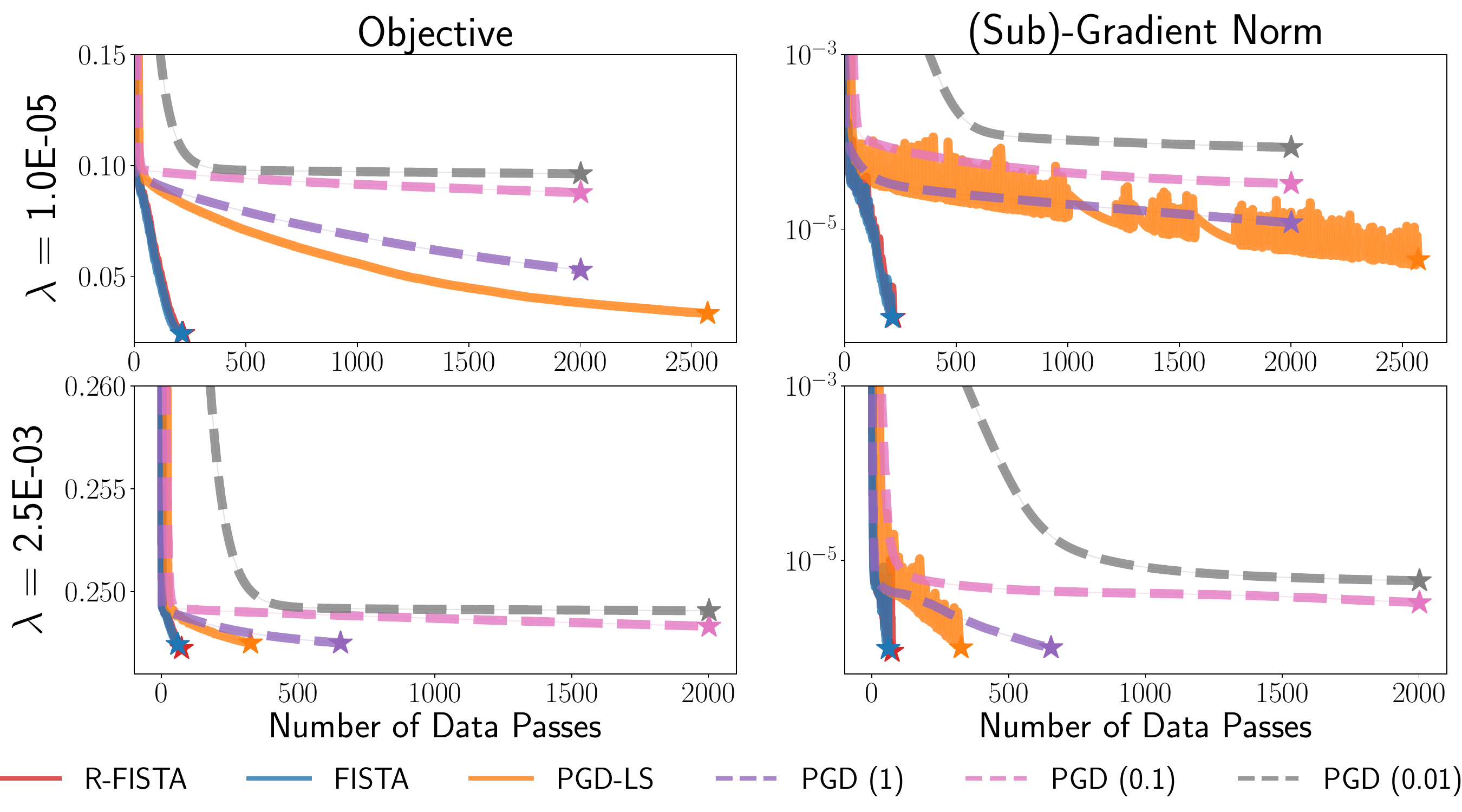}%
	\fi
	\caption{Convergence comparison for R-FISTA, FISTA without restarts (FISTA), proximal gradient descent with line-search (PGD-LS) and proximal gradient descent (PGD) with several fixed step-sizes (reported in parenthesis) on the \texttt{twonorm} dataset.
		PGD stalls while the accelerated methods converge very quickly to an approximate stationary point.
	}%
	\label{fig:acceleration-twonorm}
\end{figure}

\begin{figure}
	\centering
	\ifdefined\smallPDF
		\includegraphics[width=0.95\linewidth]{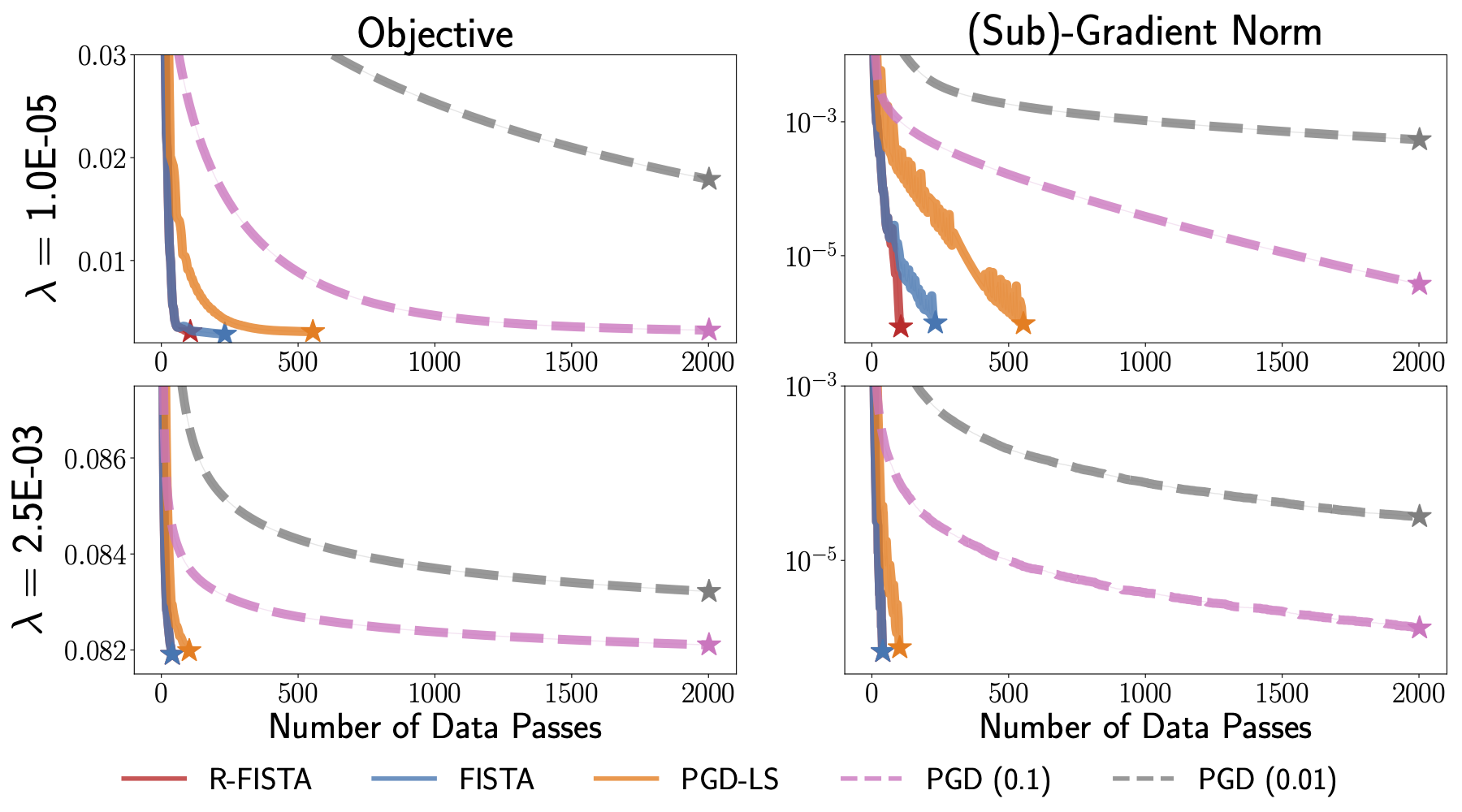}%
	\else
		\includegraphics[width=0.95\linewidth]{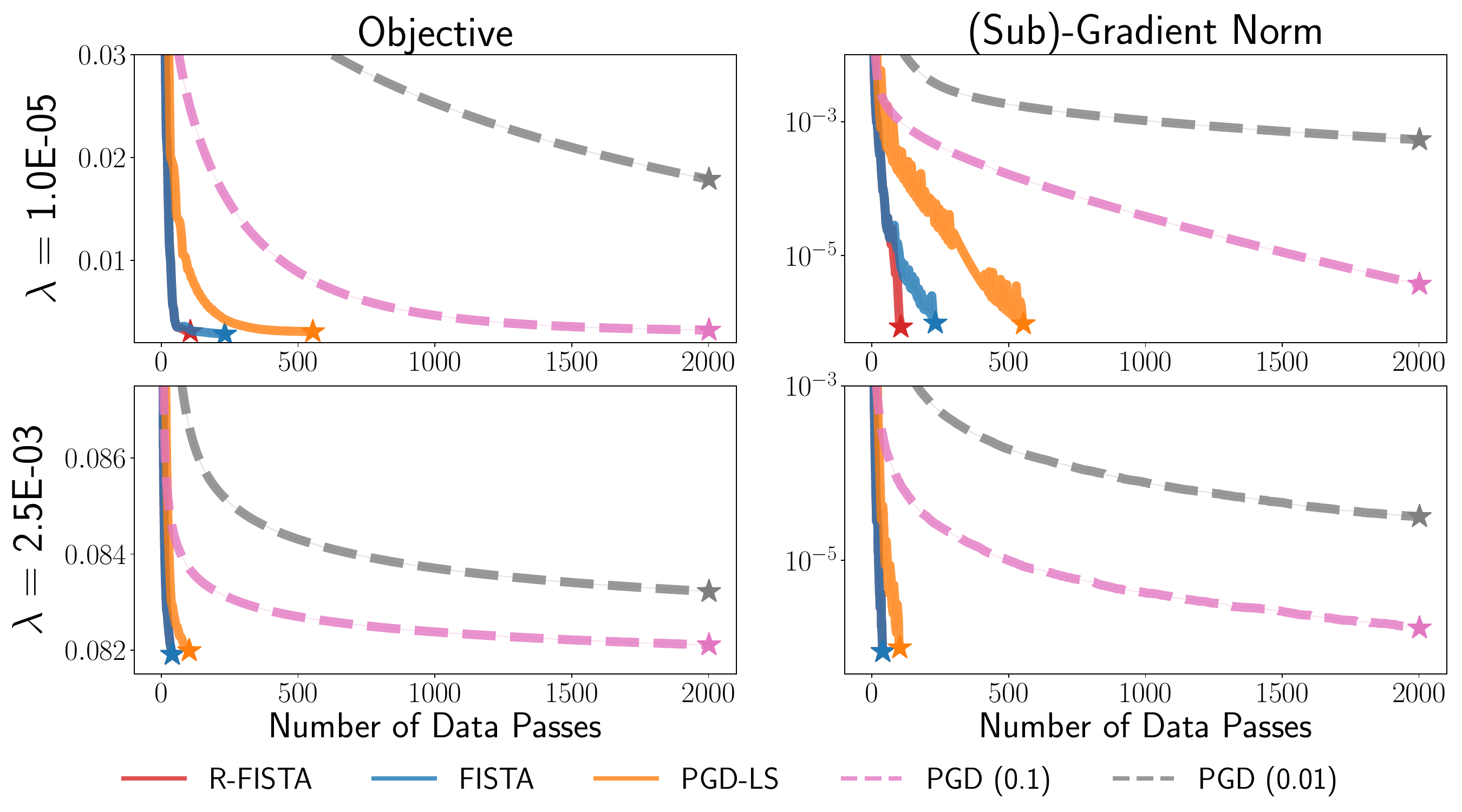}%
	\fi
	\caption{Convergence comparison for R-FISTA, FISTA without restarts (FISTA), proximal gradient descent with line-search (PGD-LS) and proximal gradient descent (PGD) with several fixed step-sizes (reported in parenthesis) on the \texttt{heart-cleveland} dataset.
		The performance of R-FISTA and FISTA is identical when \( \lambda = 2.5 \times 10^{-3} \).
		In contrast, restarting allows R-FISTA to converge in around half as many iterations as FISTA for the smoother problem with \( \lambda = 1 \times 10^{-5} \).
	}%
	\label{fig:acceleration-heart-cleveland}
\end{figure}

We also report convergence behavior on two randomly selected datasets to illustrate the fine-grained performance of each method.
Figures~\ref{fig:acceleration-twonorm} and~\ref{fig:acceleration-heart-cleveland} show the convergence of R-FISTA, FISTA, PGD-LS, and PGD with respect to objective value and subgradient norm (squared) for the \texttt{twonorm} and \texttt{heart-cleveland} datasets.
Results for are shown for the smallest regularization parameter considered and the largest for which the model was not degenerate.
We omit step-sizes for which PGD diverged.

\subsection{Step-size Update Rules}\label{app:step-size-initialization}

\begin{figure}[t]
	\centering
	\includegraphics[width=0.8\linewidth]{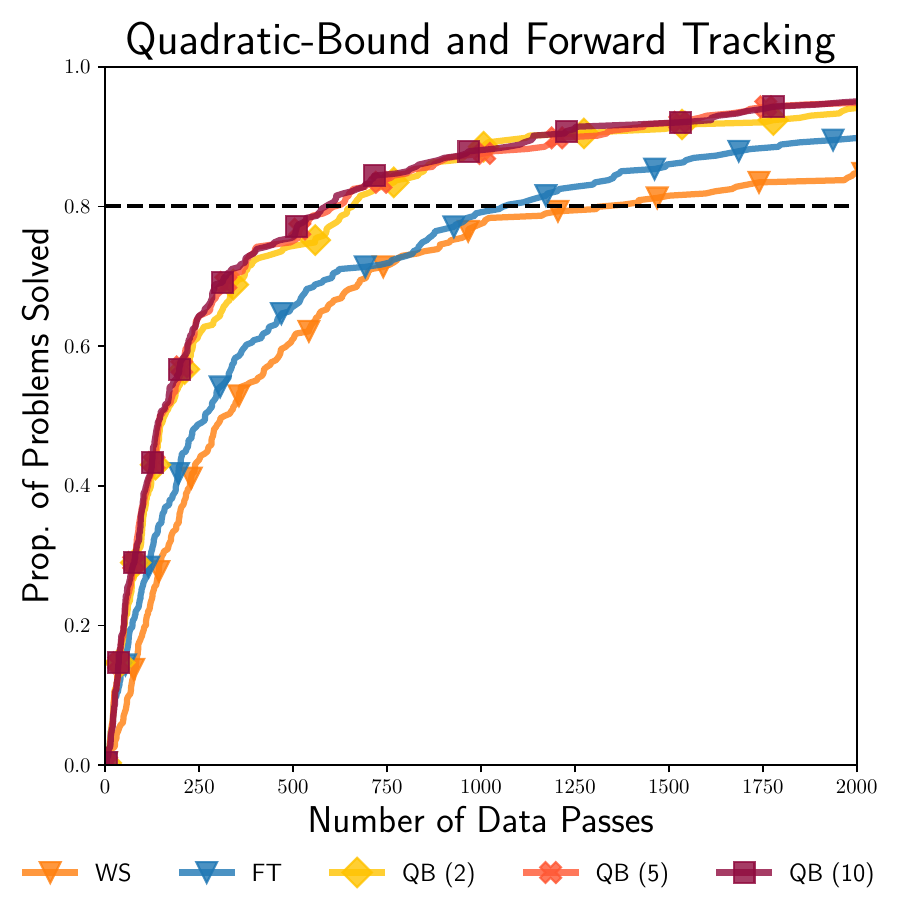}
	\caption{Performance profile comparing R-FISTA, FISTA with different step-size initialization rules.
		We compare checking the quadratic bound for tightness (QB) with a variety of choices for the threshold parameter \( c \) against warm starting as \( \etak = \eta_{k-1} \) (WS), and forward tracking (FT).
		As before, we generate the profile by solving the C-GReLU problem on 73 datasets from the UCI repository.
		QB is robust to the choice of \( c \) and outperforms both WS and FT.
		WS and FT have similar performance despite very different behavior.
	}%
	\label{fig:lassplore-pp}
\end{figure}

Now we perform an ablation study on the step-size initialization rule proposed by \citet{liu2009lassplore} and discussed in Section~\ref{sec:efficient-fista}.
Throughout this section, we refer to this initialization strategy as quadratic-bound (QB).
We compare QB against warm starting as \( \etak = \eta_{k-1} \) (WS), and forward tracking (FT).
As in the previous section, we use a performance profile to summarize results for solving the C-GReLU problem on the same \( 438 \) problems as in \cref{fig:performance-profiles}.
We use the same backtracking parameter \( \beta = 0.8 \) for QB, WS, and FT, while we use a forward-tracking parameter of \( \alpha = 1.25 \) for QB and FT.
Note that these are the standard parameters discussed in Appendix~\ref{app:default-parameters}.
We use the standard settings for all other parameters of R-FISTA.
We sample \( 5000 \) random activation patterns just as in the previous section.

Empirically, we find (see Figure~\ref{fig:lassplore-pp}) that the QB initialization strategy is surprisingly resilient to the choice of threshold parameter, \( c \).
Indeed, QB with any \( c \in \cbr{10, 5, 2} \) is more efficient than FT or WS.
Surprisingly, FT and WS have similar performance despite their substantially different convergence behavior (see Figures~\ref{fig:lassplore-glass} and~\ref{fig:lassplore-flags}).
This is primarily because we measure progress in total data passes, which includes the unnecessary backtracking performed by R-FISTA with the FT update.

\begin{figure}
	\centering
	\includegraphics[width=0.95\linewidth]{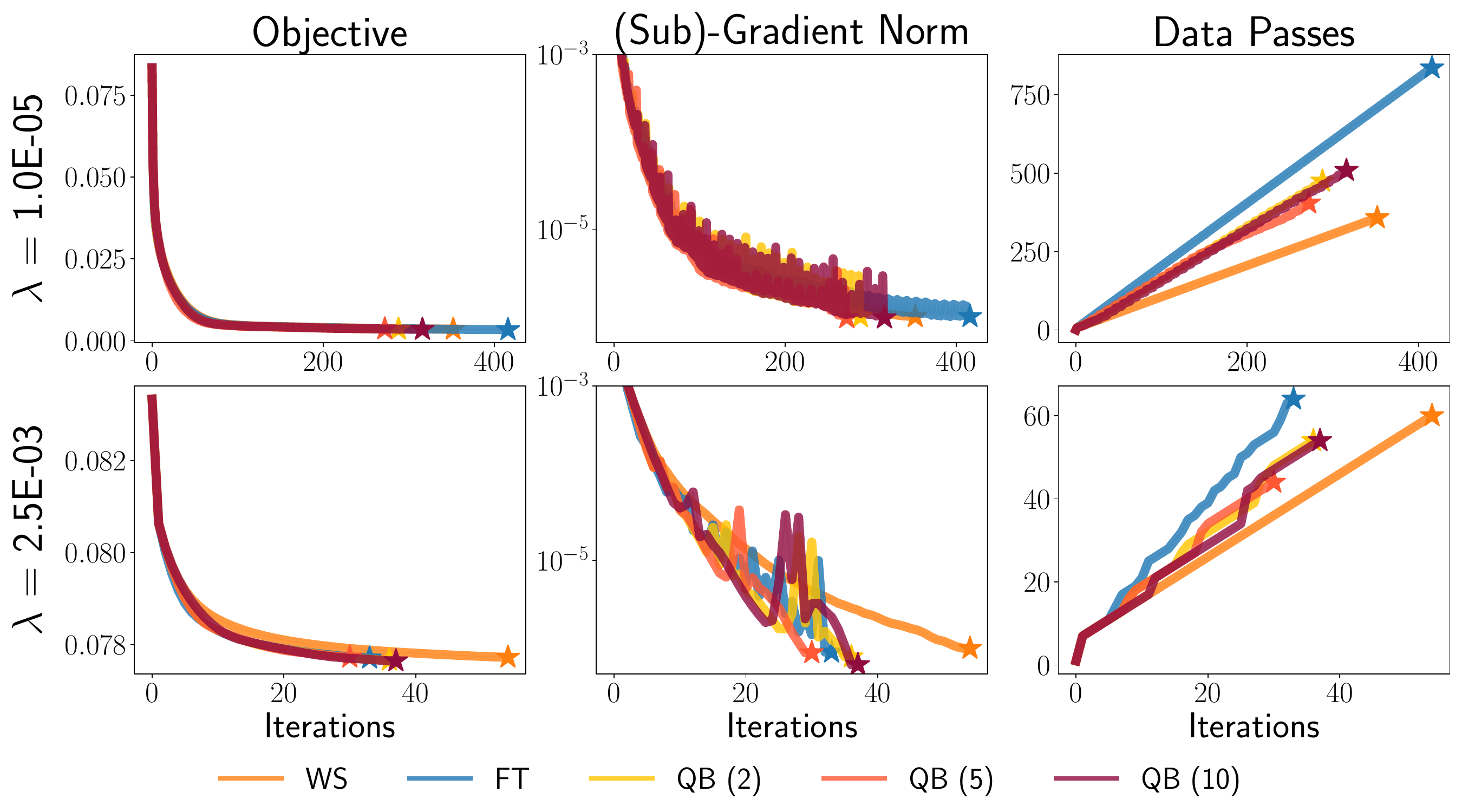}
	\caption{Convergence comparison for R-FISTA with different step-size initialization rules on the \texttt{glass} dataset.
		We compare warm-starting (WS) and forward-tracking (FT) against the initialization proposed by \citet{liu2009lassplore} (QB) for several fixed thresholds (reported in parentheses).
		QB has similar convergence performance to FT without requiring as many passes through the training set and is resilient to the choice of threshold.
	}

	\label{fig:lassplore-glass}
\end{figure}

\begin{figure}
	\centering
	\includegraphics[width=0.95\linewidth]{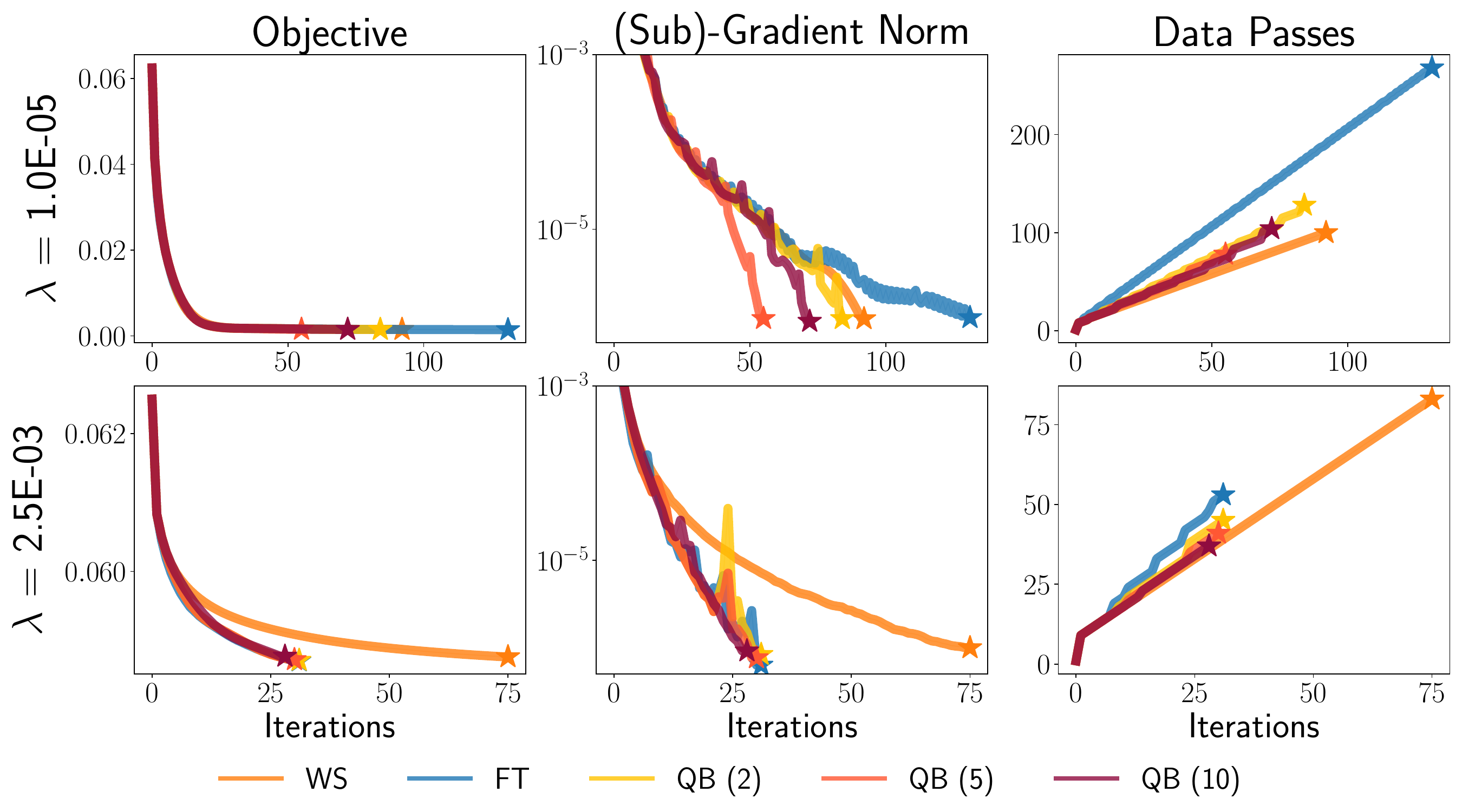}
	\caption{Convergence comparison for R-FISTA with different step-size initialization rules on the \texttt{flags} dataset.
		See \cref{fig:lassplore-glass} for additional details.}%
	\label{fig:lassplore-flags}
\end{figure}


\subsection{The Windowing Heuristic}\label{app:delta-heuristic-exps}

\begin{figure}
	\centering
	\includegraphics[width=0.95\linewidth]{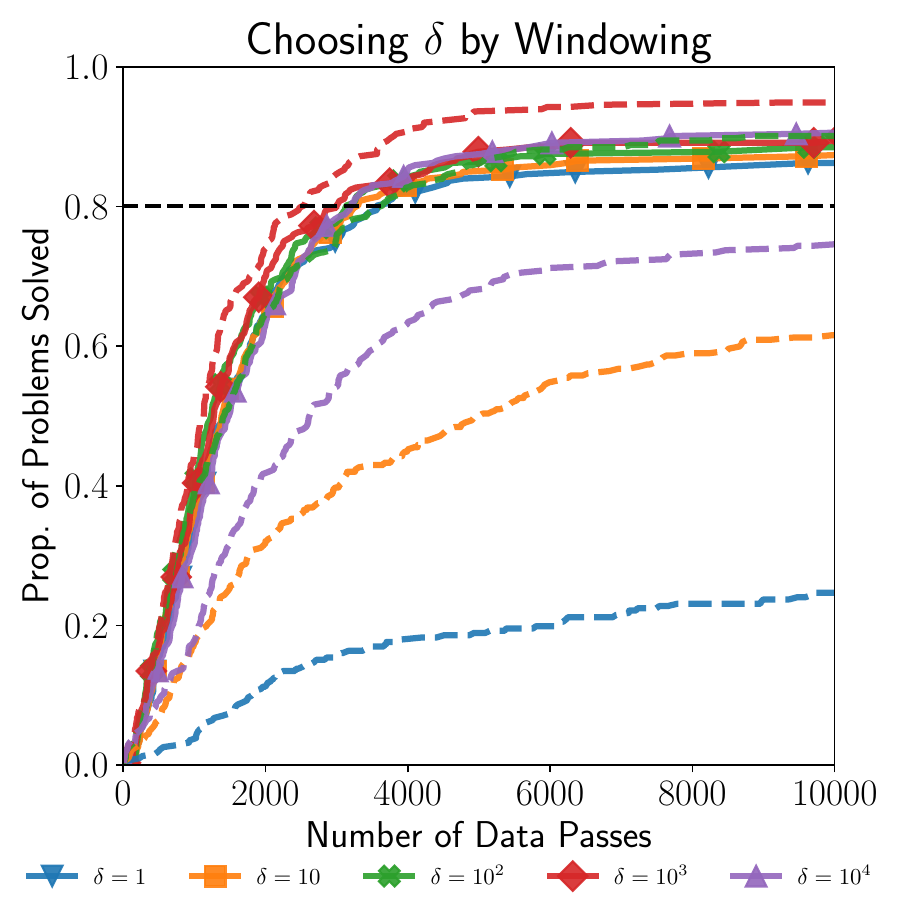}
	\caption{Performance profile comparing our AL method with (solid lines with markers) and without (dashed lines) the windowing heuristic for setting the penalty strength.
		We consider a wide range of initial \( \delta \) values and generate the profile by solving the C-ReLU problem on 73 datasets from the UCI repository with \( 6 \) different regularization parameters for each dataset.
		The windowing heuristic performs nearly as well as the best fixed \( \delta \) and without a noticeable computational overhead.
		In contrast, extreme values of \( \delta \) can cause the ``fixed'' approach to fail on approximately \( 40\% \) of problems.
	}%
	\label{fig:delta-pp}
\end{figure}

Recall that the key hyper-parameter for our AL method is the penalty strength, denoted \( \delta \).
Here we verify the effectiveness of the windowing heuristic for selecting \( \delta \) as proposed in Section~\ref{sec:reliable-co}.
Experimentally, the rule performs nearly as well as the best fixed value of \( \delta \) across a wide range of datasets and avoids the catastrophic failures which can occur when \( \delta \) is miss-specified.

We initialize our AL method with \( \delta_0 \in \cbr{1, 10, 10^2, 10^3, 10^4} \) and compare tuning \( \delta \) using the windowing heuristic against keeping \( \delta \) fixed throughout optimization.
All other parameters are identical and constant for the two approaches (see Appendix~\ref{app:default-parameters} for specifics).
To evaluate speed and robustness, we use another performance profile on the 438 problems generated from the UCI datasets as detailed in Appendix~\ref{app:uci-datasets}.
In this case, a problem is considered ``solved'' when the minimum-norm subgradient of the augmented Lagrangian is smaller than \( 10^{-3} \) and the norm of the constraint gaps is also less than \( 10^{-3} \).
This isn't equivalent to terminating when the Lagrangian function is approximately stationary, but we found the rule to work well in practice.
We use \( 500 \) randomly sampled activation patterns for the C-ReLU model.

\cref{fig:delta-pp} plots the result, with dotted lines for the AL method with fixed \( \delta \) and solid lines with markers for methods using the windowing heuristic.
Empirically, the windowing heuristic is nearly effective as the best fixed \( \delta \) and avoids the complete failure of AL methods with fixed, poorly specified penalty parameters (e.g. \( \delta = 1  \) or \( \delta = 10^4 \)).
Moreover, this is achieved at almost no overhead in terms of total data passes required for convergence.
Finally, we observe that fixing \( \delta = 10^3 \) works very well across all problems;
this is likely because the problems are carefully normalized before optimization to ensure they are on the same scale.
Specifically, the columns of the data matrix for each problem are unitized (Appendix~\ref{app:data-normalization}), and the augmented Lagrangian \( \calL_\delta \) is normalized by \( n * k \), where \( k \) is the number of classes.

\begin{figure}
	\centering
	\ifdefined\smallPDF
		\includegraphics[width=0.95\linewidth]{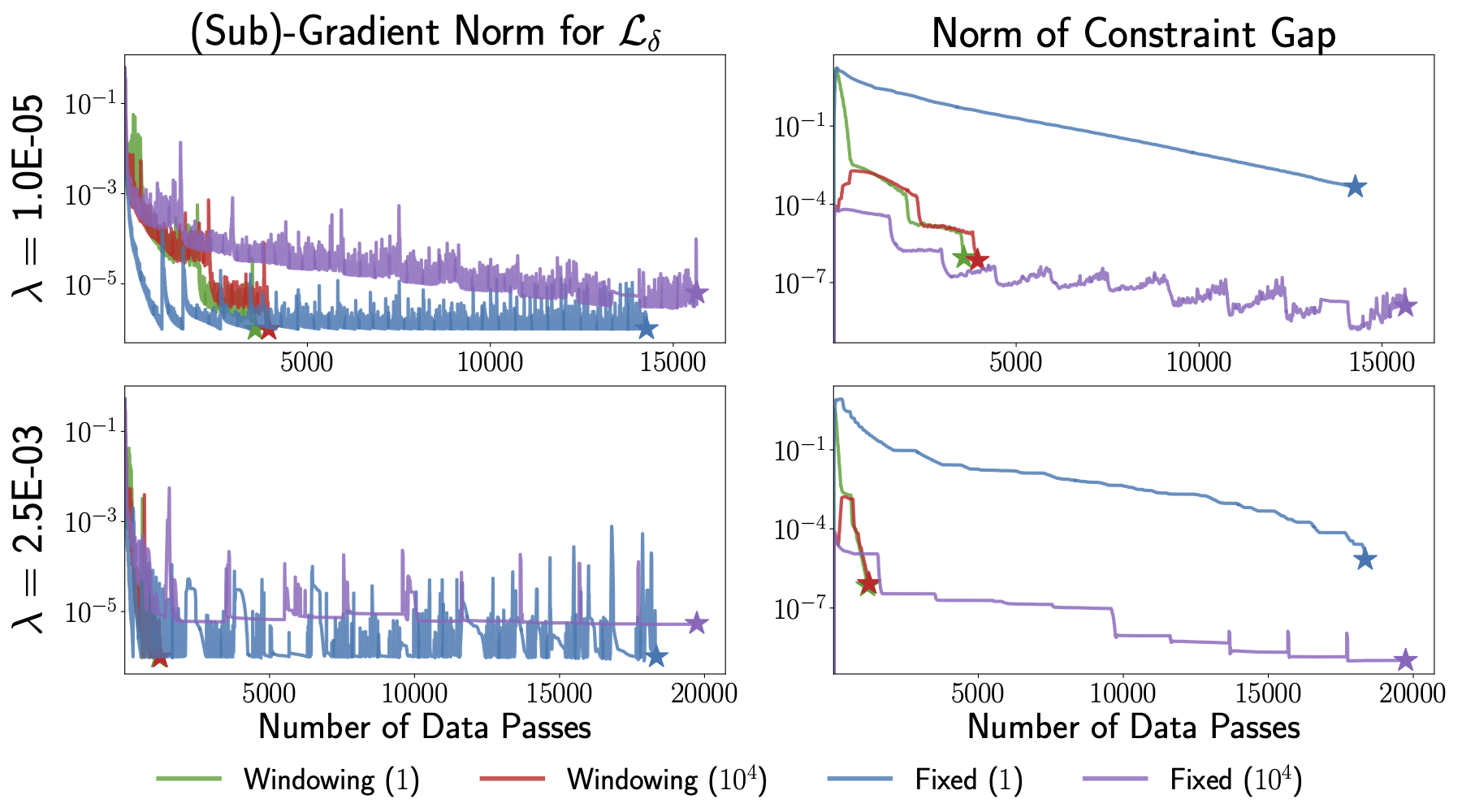}
	\else
		\includegraphics[width=0.95\linewidth]{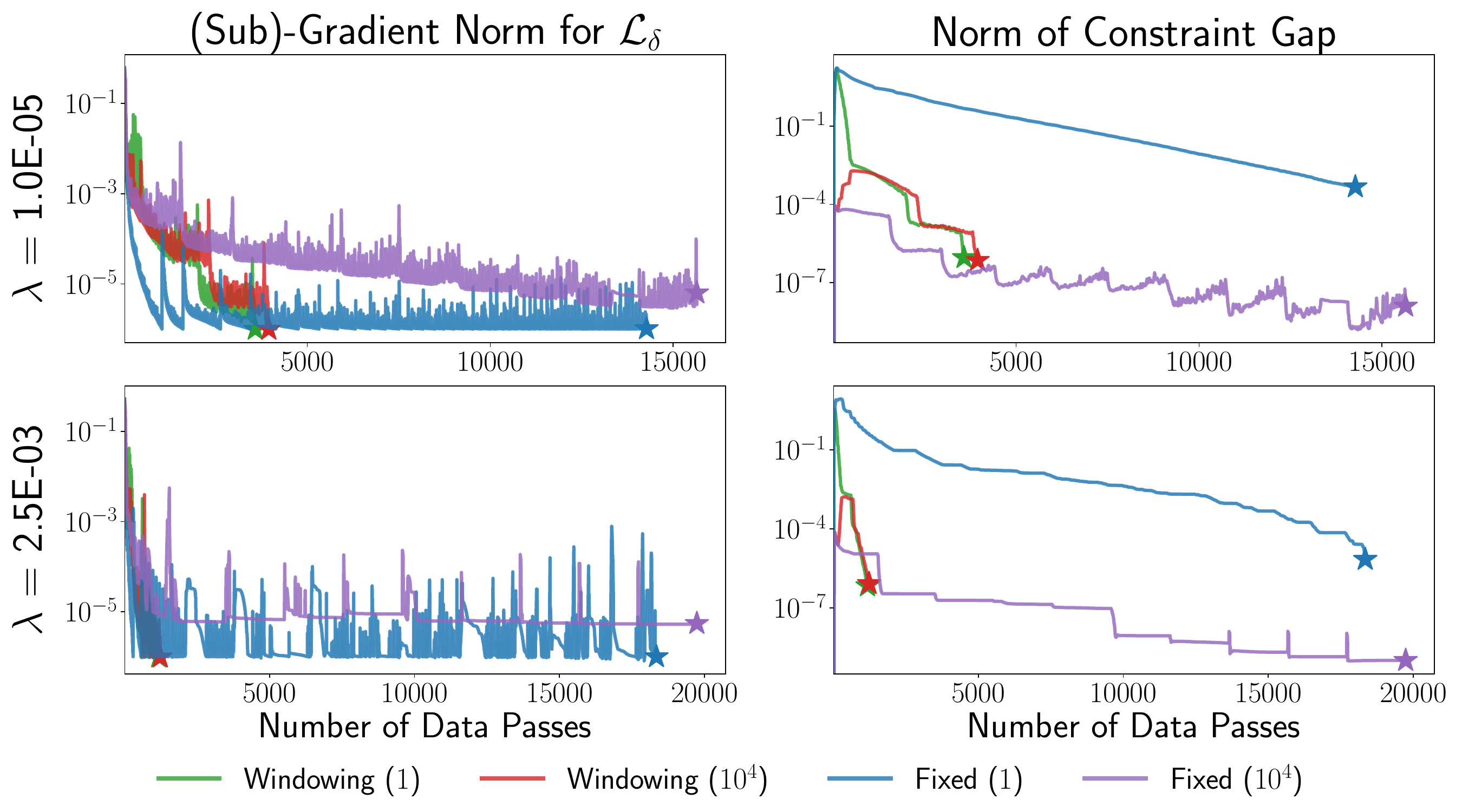}
	\fi
	\caption{Convergence comparison for our AL method with with and without the windowing heuristic on the \texttt{monks-2} dataset.
		We show two extreme values of \( \delta \) (shown in parentheses) to illustrate failure models the AL method without our heuristic.
		Roughly, each ``bump'' in the subgradient norm (squared) of \( \calL_\delta \) corresponds to one prox-point iteration on the dual parameters (i.e.\ an AL update).
		Observe that when \( \delta \) is small, R-FISTA solves sub-problem~\eqref{eq:al-subroutine} quickly, but the AL method cannot find a feasible solution without an extreme number of dual updates.
		In contrast, R-FISTA never solves~\eqref{eq:al-subroutine} to the first-order tolerance (\( 10^{-6} \))  when \( \delta \) is very large.
		In this case, dual updates are triggered only by a limit on the number of R-FISTA iterations for solving the sub-problem. See Appendix~\ref{app:default-parameters}.
		The windowing heuristic corrects both failure modes.
	}%
	\label{fig:delta-heuristic-monks}
\end{figure}

\begin{figure}
	\centering
	\ifdefined\smallPDF
		\includegraphics[width=0.95\linewidth]{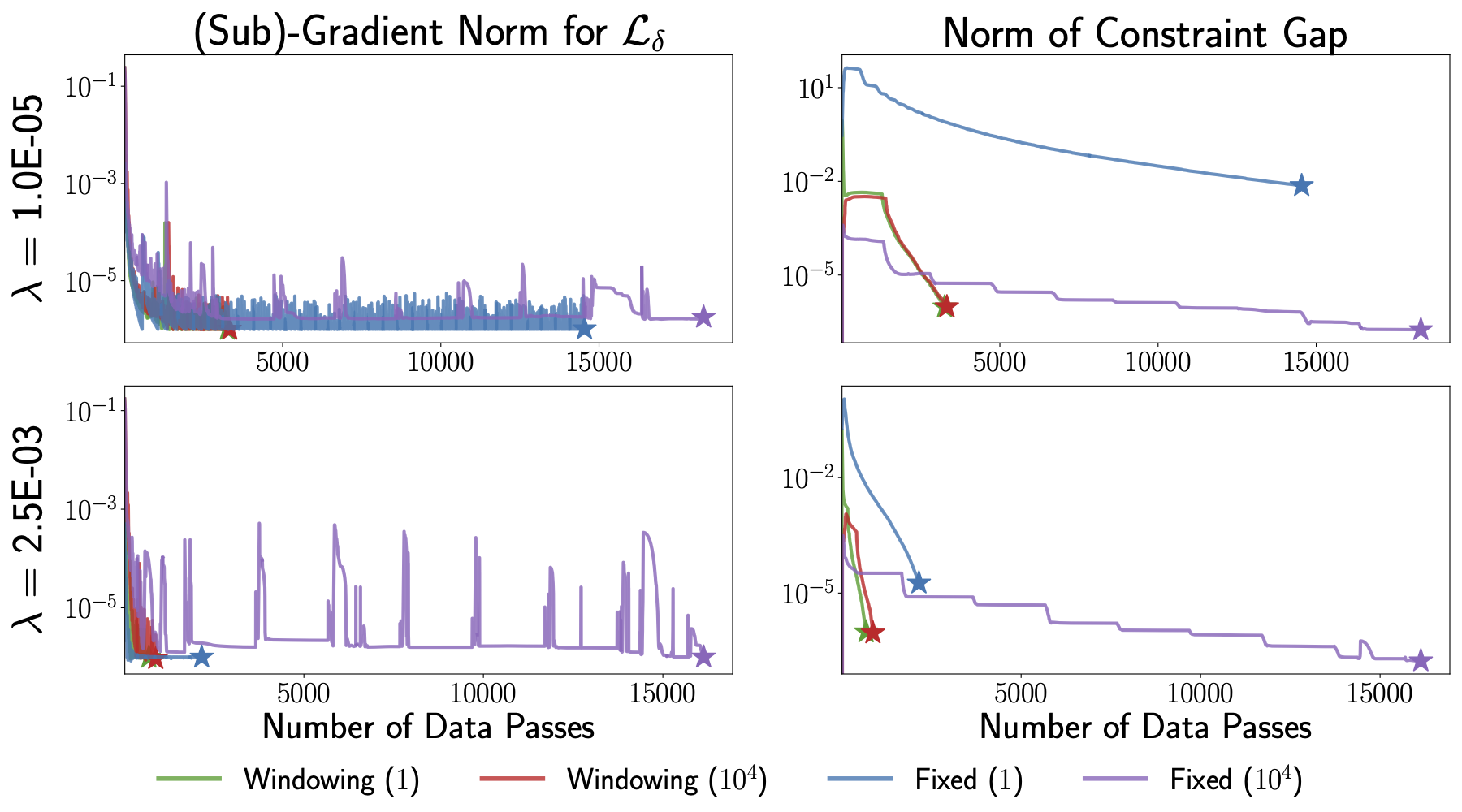}
	\else
		\includegraphics[width=0.95\linewidth]{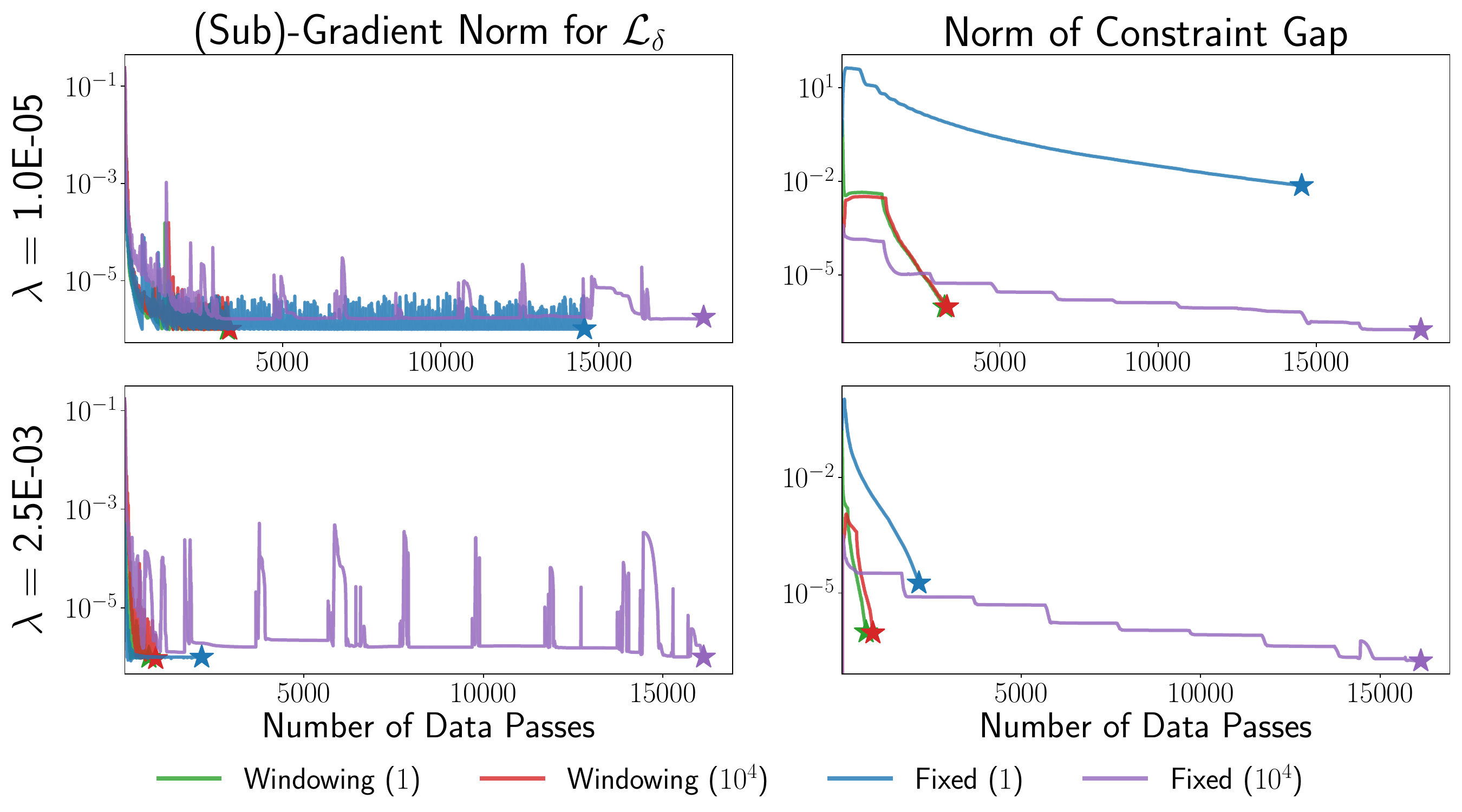}
	\fi
	\caption{Convergence comparison for our AL method with with and without the windowing heuristic on the \texttt{ilpd-indian-liver} dataset.
		Penalty parameters \( \delta \) are reported in parenthesis. }%
	\label{fig:delta-heuristic-ilpd-indian-liver}
\end{figure}

We also provide convergence plots on two randomly selected datasets to better illustrate the failures modes of the AL method with miss-specified penalty strength.
Figures~\ref{fig:delta-heuristic-monks} and~\ref{fig:delta-heuristic-ilpd-indian-liver} and show detailed results for the \texttt{monks-2} and \texttt{ilpd-indian-liver} datasets.
When \( \delta \) is too small, the AL method easily solves subproblem~\eqref{eq:al-subroutine}, but struggles to make progress on the constraint gaps.
Intuitively, the step-size for the dual proximal-point algorithm is too small and a very large number of iterations is required to make progress on the dual problem.
Conversely, the augmented Lagrangian \( \calL_\delta \) is poorly conditioned when \( \delta \) is overly large and R-FISTA struggles to solve the primal sub-problem to the necessary tolerance.
The windowing heuristic corrects for both pathologies by ensuring the initial constraint gap is in a ``normal'' regime that balances penalizing constraint violations and conditioning of the subproblem.
This behavior is particularly noticeable for \texttt{monks-2}, where the windowing heuristic adjusts \( \delta \) to shrink the constraint gap (\( \delta = 1 \)) or relax the optimization problem (\( \delta = 10^4 \)).


\section{Sensitivity and Regularization}\label{app:activation-pattern-ablations}

This section presents additional ablations studying the sensitivity of the C-ReLU and C-GReLU problems to the selection of the sub-sampled activation patterns, \( \tilde \calD \), and the regularization strength, \( \lambda \).

\textbf{Experimental Details}:
We randomly select 10 datasets from our set of 73 filtered UCI datasets (see \cref{app:uci-datasets}).
For each dataset, we considered thirty individual regularization parameters on log-scale grid over the interval \( [1 \times 10^{-6}, 1] \).
To form the convex formulations, we computed \( \tilde \calD \) by sampling 10, 100, or 1000 generating vectors from \( \calN(0, \bfI) \).
We repeated the sampling procedure with \( 10 \) different random seeds, giving a final total of 60 (30 C-ReLU and 30 C-GReLU) optimization problems for each dataset.
These problems were then solved using R-FISTA and our AL method with the default parameters (see \cref{app:default-parameters}).

\textbf{Additional Results}:
Figures~\ref{fig:hp-gated} and~\ref{fig:hp-relu} present results for the C-GReLU and C-ReLU problems, respectively.
Similar to \cref{fig:reg-plot}, a U-shaped bias-variance trade-off is visible as the regularization strength is increased.
This trend is especially noticeable for the \texttt{monks-3} and \texttt{statlog-heart} datasets.
Variance introduced by sampling \( \tilde \calD \) is only significant for \texttt{heart-va}.

\begin{figure}[htpb]
	\centering
	\includegraphics[width=0.85\linewidth]{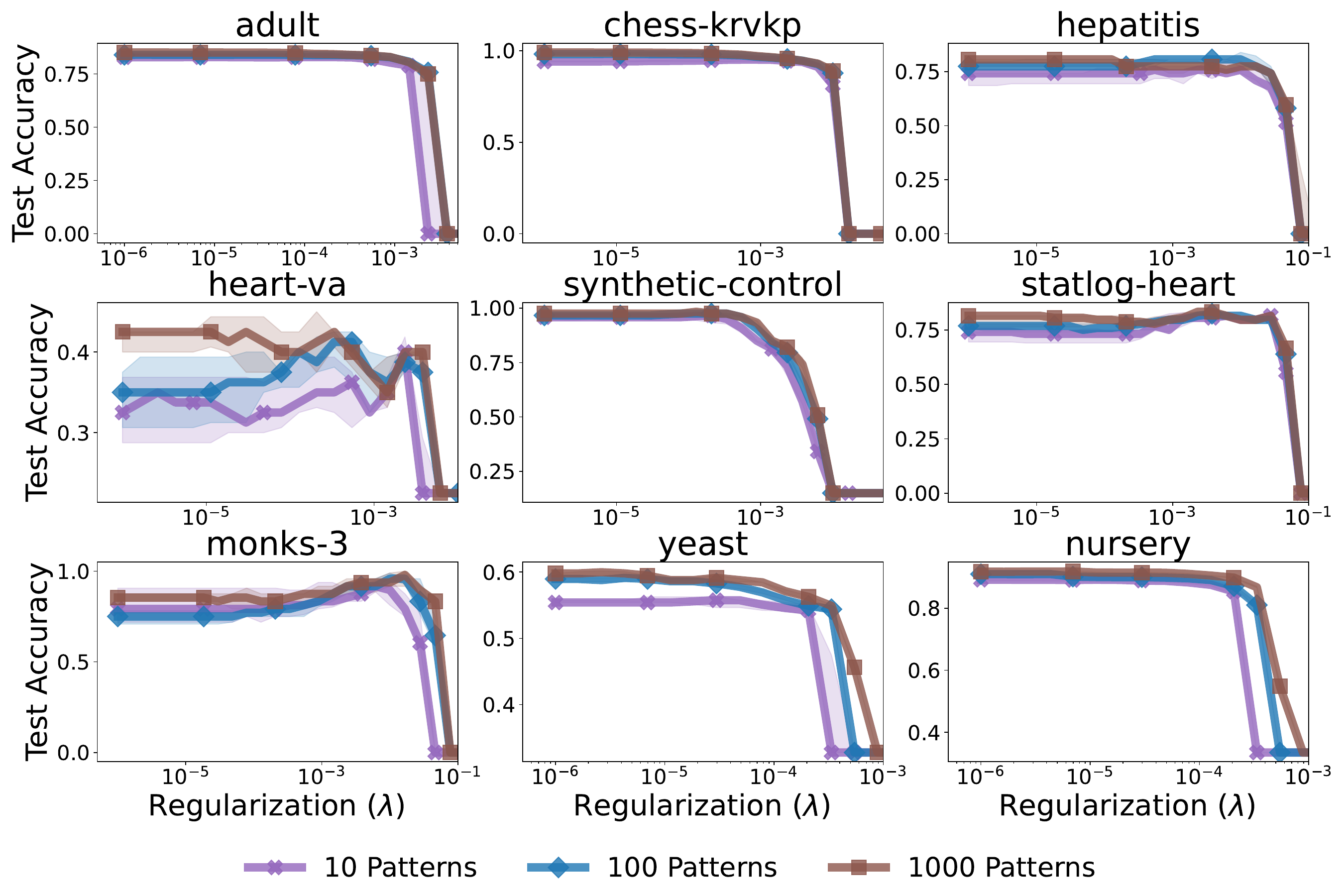}
	\caption{
		Effect of sampling activation patterns on test accuracy for neural networks trained using the \textbf{C-GReLU} problem on nine different UCI datasets.
		We consider a grid of regularization parameters and plot median (solid line) and first and third quartiles (shaded region) over 10 random samplings of \( \tilde \calD \), where \( |\tilde \calD| \) is limited to 10, 100, or 1000 patterns.
	}%
	\label{fig:hp-gated}
\end{figure}

\begin{figure}[htpb]
	\centering
	\includegraphics[width=0.85\linewidth]{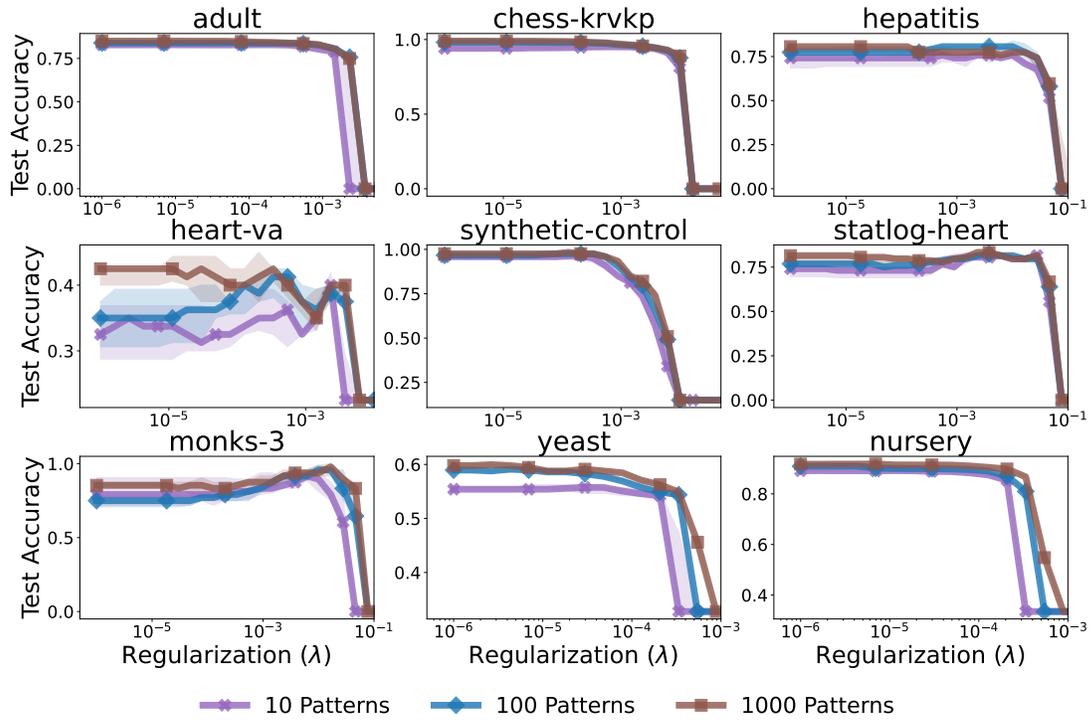}
	\caption{
		Effect of sampling activation patterns on test accuracy for neural networks trained using the \textbf{C-ReLU} problem on nine different UCI datasets. See \cref{fig:hp-gated} for details.
	}%
	\label{fig:hp-relu}
\end{figure}

\subsection{UCI Classification}\label{app:uci-accuracies}

This section gives experimental details and additional results for the experiments evaluating generalization performance of the convex reformulations.

\textbf{Experimental Details}:
We selected 37 binary classification datasets from our filtered collection of 73 datasets;
see Appendix~\ref{app:uci-datasets} for details how the 73 datasets were obtained.

We used the default parameters for each of the convex solvers as described in Appendix~\ref{app:default-parameters},
except that a tighter convergence tolerance of \( 10^{-7} \) was used for terminating our methods.
R-FISTA was limited to \( 2000 \) iterations.
For the gated ReLU problems (both C-GReLU and NC-GReLU) we sampled the same set of \( 5000 \) activation patterns for both the convex reformulation and the original non-convex model.
We used \( 2500 \) activation patterns for the C-ReLU problem.

For each dataset-method pair, we performed five-fold cross validation on the training set
to select hyper-parameters.
We considered two hyper-parameters for our methods: regularization strength,
and the proportion of examples active in each local model (ie. the number of
non-zeros in each \( D_i \) matrix).
For regularization strength, we optimized over a logarithmic grid with values
	{ \small \(
		\cbr{1\times 10^{-8},
			3.59\times 10^{-8},
			1.29\times 10^{-7},
			4.64\times 10^{-7},
			1.67\times 10^{-6},
			5.99\times 10^{-6},
			2.15\times 10^{-5},
			7.74\times 10^{-5},
			2.78\times 10^{-4},
			1.0\times 10^{-3}}
		\)
	}.
For the proportion of active examples, we considered (1) setting the bias term
for each neuron to enforce \( 50 \% \) of examples to be active or (2)
setting the bias to \( 0 \) and allowing the proportion to be random.

For the baselines, we used the implementations available from
the \texttt{scikit-learn} package.
We optimized each random forest classifier with respect to the depth of
the random trees in the ensemble (\( \cbr{2, 4, 10, 25, 50}\))
and over the number of trees in the ensemble (\( \cbr{5, 10, 100, 1000} \)).
We used the standard soft-margin SVM with and chose regularization parameter from
the range
\( \cbr{1.\times 10^{-5}, 1.\times 10^{-4}, 1.\times 10^{-3}, 1.\times 10^{-2}, 1.\times 10^{-1}, 1} \)
for linear SVMs and
\( \cbr{1\times 10^{-5},
	1.78\times 10^{-4},
	3.16\times 10^{-3},
	5.62\times 10^{-2},
	1} \)
for SVMs with an RBF kernel.
For RBF SVMs, the RBF bandwidth was optimized over the grid \( \cbr{1\times 10^{-4},
	1.58\times 10^{-3},
	2.51\times 10^{-2},
	3.98\times 10^{-1},
	6.31,
	100.0} \).
To obtain final test accuracies, we re-trained each method on the full training
set.
For our methods, we report the best test accuracy out of five random restarts.

\begin{table}[t]
	\centering
	\caption{Additional results comparing our convex solvers against random forests (RF), SVMs with a linear kernel (Linear)
		and SVMs with an RBF kernel (RBF).
		We report test accuracies on a further 19 UCI datasets.
		Combined, C-GReLU and C-ReLU obtain the best test accuracy on 10 datasets.
		Out of the baselines considered, RBF SVMs are the most competitive with our approach,
		attaining or tying for best accuracy on 9 datasets.
	}%
	\label{table:binary-uci-accuracies-alt}
	\vspace{0.1in}
	\begin{small}
		\begin{tabular}{lccccc} \toprule
			\textbf{Dataset}  & \textbf{C-GReLU} & \textbf{C-ReLU} & \textbf{RF}   & \textbf{SVM}  & \textbf{RBF}  \\ \midrule
			breast-cancer     & 73.7             & 70.2            & \textbf{75.4} & 68.4          & 68.4          \\
			congressional     & 66.7             & 65.5            & 65.5          & \textbf{67.8} & \textbf{67.8} \\
			credit-approval   & 84.1             & 84.1            & \textbf{85.5} & \textbf{85.5} & 84.8          \\
			echocardiogram    & 80.8             & 76.9            & \textbf{88.5} & 84.6          & 84.6          \\
			haberman-survival & 67.2             & \textbf{75.4}   & 70.5          & 70.5          & 70.5          \\
			hepatitis         & 80.6             & 80.6            & \textbf{83.9} & 77.4          & 77.4          \\
			horse-colic       & 88.3             & 86.7            & \textbf{93.3} & 90.0          & \textbf{93.3} \\
			ionosphere        & 90.0             & 91.4            & 91.4          & 85.7          & \textbf{97.1} \\
			molec-biol        & 76.2             & \textbf{81.0}   & 76.2          & \textbf{81.0} & 66.7          \\
			monks-2           & \textbf{69.7}    & \textbf{69.7}   & 54.5          & 57.6          & \textbf{69.7} \\
			monks-3           & \textbf{95.8}    & \textbf{95.8}   & \textbf{95.8} & 87.5          & 91.7          \\
			musk-2            & 99.7             & \textbf{99.8}   & 97.3          & 95.1          & 99.6          \\
			parkinsons        & 97.4             & 97.4            & 84.6          & 89.7          & \textbf{100}  \\
			pittsburg         & \textbf{80.0}    & 75.0            & 75.0          & \textbf{80.0} & \textbf{80.0} \\
			ringnorm          & 97.4             & 83.0            & 95.1          & 76.9          & \textbf{98.2} \\
			spect             & 46.7             & 40.0            & 53.3          & \textbf{66.7} & \textbf{66.7} \\
			statlog-austr.    & 65.9             & \textbf{66.7}   & 62.3          & 65.2          & 65.2          \\
			statlog-heart     & 81.5             & \textbf{85.2}   & 83.3          & 81.5          & 81.5          \\
			twonorm           & 97.6             & \textbf{97.7}   & 97.2          & 97.4          & 97.4          \\
			vertebral-col.    & \textbf{91.9}    & \textbf{91.9}   & 87.1          & \textbf{91.9} & 90.3          \\ \bottomrule
		\end{tabular}
	\end{small}
\end{table}

\textbf{Additional Results}:
Table~\ref{table:binary-uci-accuracies-alt} reports test results for
19 of the 37 datasets, while Table~\ref{table:binary-uci-accuracies} in the
main paper presents results for the remaining 18 datasets.
Overall, we find that two-layer neural networks trained using our convex
solvers generally perform better than the baseline methods.

\subsection{Non-Convex Solvers}\label{app:non-convex-solvers}

\begin{table*}
	\centering
	\caption{Test accuracies for convex and non-convex formulations of the Gated ReLU training problem on a subset of 20 datasets selected from the UCI dataset repository
		Results are shown as \texttt{median (first-quartile/third-quartile)} for each method.}%
	\label{table:uci-gated-accuracies}
	\vspace{0.1in}
	\begin{small}
		\begin{tabular}{lccc} \toprule
			\textbf{Dataset}     & \textbf{C-GReLU} & \textbf{NC-GReLU (Adam)} & \textbf{NC-GReLU (SGD)} \\ \midrule
			magic                & 86.9 (86.8/87.0) & 82.9 (82.9/83.1)         & 82.1 (82.1/82.2)        \\
			statlog-heart        & 79.6 (79.6/79.6) & 85.2 (83.3/85.2)         & 83.3 (83.3/83.3)        \\
			mushroom             & 100.0 (100/100)  & 97.6 (97.6/97.9)         & 96.9 (96.9/96.9)        \\
			vertebral-column     & 87.1 (83.9/87.1) & 90.3 (90.3/91.9)         & 90.3 (90.3/90.3)        \\
			cardiotocography     & 90.1 (89.9/90.4) & 85.6 (85.6/85.9)         & 85.2 (85.2/85.4)        \\
			abalone              & 63.8 (63.7/64.1) & 58.7 (58.6/58.7)         & 58.1 (58.1/58.1)        \\
			annealing            & 90.6 (90.6/91.2) & 86.2 (86.2/86.8)         & 86.2 (85.5/86.2)        \\
			car                  & 89.9 (89.9/90.1) & 83.8 (83.8/84.1)         & 83.2 (82.9/83.2)        \\
			bank                 & 89.8 (89.7/89.9) & 89.9 (89.9/90.0)         & 89.8 (89.8/90.0)        \\
			breast-cancer        & 68.4 (68.4/68.4) & 68.4 (68.4/70.2)         & 70.2 (70.2/70.2)        \\
			page-blocks          & 96.8 (96.8/96.9) & 92.1 (92.0/92.1)         & 92.4 (92.3/92.4)        \\
			contrac              & 45.9 (45.6/46.3) & 53.1 (53.1/53.1)         & 53.4 (53.1/53.7)        \\
			congressional-voting & 63.2 (63.2/63.2) & 64.4 (64.4/64.4)         & 66.7 (66.7/66.7)        \\
			spambase             & 93.4 (93.2/93.4) & 91.6 (91.6/91.6)         & 91.2 (91.2/91.3)        \\
			synthetic-control    & 97.5 (97.5/97.5) & 98.3 (98.3/98.3)         & 97.5 (97.5/98.3)        \\
			musk-1               & 93.7 (91.6/93.7) & 93.7 (93.7/93.7)         & 94.7 (92.6/94.7)        \\
			ringnorm             & 69.8 (69.5/69.9) & 77.0 (77.0/77.0)         & 77.2 (77.1/77.2)        \\
			ecoli                & 82.1 (82.1/82.1) & 79.1 (79.1/80.6)         & 4.5 (3.0/43.3)          \\
			monks-2              & 69.7 (66.7/69.7) & 66.7 (66.7/66.7)         & 60.6 (57.6/63.6)        \\
			hill-valley          & 62.0 (59.5/66.1) & 57.0 (55.4/57.9)         & 58.7 (58.7/59.5)        \\ \bottomrule
		\end{tabular}
	\end{small}
\end{table*}

This section gives experimental details and additional results for experiments comparing the generalization performance of our convex reformulations to neural networks trained by optimizing the non-convex loss with stochastic gradient methods.

\textbf{Experimental Details}:
We selected 20 datasets randomly from our filtered collection of 73 datasets; see Appendix~\ref{app:uci-datasets} for details how the 73 datasets were obtained.

We used the default parameters for each of the convex solvers as described in Appendix~\ref{app:default-parameters}.
R-FISTA was limited to \( 2000 \) iterations, while SGD and Adam were limited to \( 2000 \) epochs.
For SGD and Adam, considered step-sizes from the following grid: \( \cbr{10, 5, 1, 0.5, 0.1, 0.01, 0.001} \).
We used a ``step'' decrease schedule for the step-sizes, dividing them by \( 2 \) every 100 epochs, which we found to work much better than the classical Robbins-Monro schedule~\citep{robbins1951sgd}.
We considered the following grid of ten regularization parameters:
\( \{
1 \times 10^{-6},
2.78 \times 10^{-6},
7.74 \times 10^{-6},
2.15 \times 10^{-5},
5.99 \times 10^{-5},
1.67 \times 10^{-4},
4.64 \times 10^{-4},
1.29 \times 10^{-3}
3.59 \times 10^{-2},
1.0 \times 10^{-2} \}
\).
For each method-dataset pair, we performed five-fold cross validation on the training set and selected the best step-size and regularization parameter according to the cross-validated test accuracy.

For the gated ReLU problems (both C-GReLU and NC-GReLU) we sampled the same set of \( 5000 \) activation patterns for both the convex reformulation and the original non-convex model.
We used \( 2500 \) activation patterns for the C-ReLU problem.
To ensure a similar model space, we computed at the number of active neurons (e.g. \( v_i \neq 0 \) or \( w_i \neq 0 \)) at convergence for C-ReLU and then used this as the number of hidden units for \( NC-ReLU \) problems.
Note that we extend our convex reformulations to multi-class problems using the results in Appendix~\ref{app:multi-class-extension}.
Similarly, we use the vector-output variant of the NC-ReLU problem (Eq.~\ref{convex-forms:eq:vector-non-convex-relu-mlp}) for multi-class problems.

After selecting hyper-parameters, we obtain the final test accuracies by re-training on the full training set and testing on a held-out test set.
To control for noise in the sampling of gate vectors in the Gated ReLU problems and C-ReLU, we repeat this final testing procedure five times with different random seeds.

\begin{table*}
	\centering
	\caption{Test accuracies for convex and non-convex formulations of the ReLU training problem on a subset of 20 datasets selected from the UCI dataset repository
		Results are shown as \texttt{median (first-quartile/third-quartile)} for each method.}%
	\label{table:uci-relu-accuracies}
	\vspace{0.1in}
	\begin{small}
		\begin{tabular}{lccc} \toprule
			\textbf{Dataset}     & \textbf{C-ReLU}  & \textbf{NC-ReLU (Adam)} & \textbf{NC-ReLU (SGD)} \\ \midrule
			magic                & 85.9 (85.8/85.9) & 86.9 (86.9/86.9)        & 86.4 (86.3/86.4)       \\
			statlog-heart        & 83.3 (81.5/83.3) & 83.3 (83.3/83.3)        & 79.6 (79.6/79.6)       \\
			mushroom             & 100.0 (100/100)  & 100.0 (100/100)         & 99.9 (99.9/99.9)       \\
			vertebral-column     & 90.3 (88.7/90.3) & 90.3 (90.3/90.3)        & 88.7 (88.7/88.7)       \\
			cardiotocography     & 89.9 (89.9/89.9) & 36.5 (22.8/36.5)        & 88.9 (88.9/88.9)       \\
			abalone              & 66.2 (66.1/66.3) & 65.3 (64.9/65.4)        & 66.1 (66.1/66.1)       \\
			annealing            & 90.6 (89.9/90.6) & 93.7 (93.7/93.7)        & 88.7 (88.1/88.7)       \\
			car                  & 87.8 (87.8/87.8) & 94.8 (94.8/94.8)        & 90.1 (90.1/90.1)       \\
			bank                 & 89.8 (89.7/89.9) & 90.8 (90.8/90.9)        & 90.5 (90.5/90.5)       \\
			breast-cancer        & 68.4 (66.7/68.4) & 64.9 (64.9/64.9)        & 68.4 (68.4/68.4)       \\
			page-blocks          & 94.0 (94.0/94.0) & 97.1 (97.1/97.1)        & 96.9 (96.9/96.9)       \\
			contrac              & 55.1 (54.1/55.4) & 54.4 (54.1/54.4)        & 53.7 (53.7/53.7)       \\
			congressional-voting & 63.2 (63.2/65.5) & 62.1 (62.1/62.1)        & 67.8 (67.8/67.8)       \\
			spambase             & 93.3 (93.2/93.4) & 93.5 (93.5/93.5)        & 93.2 (93.2/93.2)       \\
			synthetic-control    & 98.3 (97.5/98.3) & 96.7 (96.7/96.7)        & 96.7 (96.7/96.7)       \\
			musk-1               & 93.7 (93.7/93.7) & 96.8 (96.8/96.8)        & 95.8 (95.8/95.8)       \\
			ringnorm             & 77.0 (76.8/77.0) & 77.3 (77.3/77.4)        & 77.4 (77.3/77.5)       \\
			ecoli                & 80.6 (80.6/80.6) & 82.1 (82.1/82.1)        & 80.6 (80.6/80.6)       \\
			monks-2              & 69.7 (69.7/72.7) & 69.7 (66.7/69.7)        & 72.7 (72.7/75.8)       \\
			hill-valley          & 65.3 (64.5/65.3) & 62.8 (62.8/62.8)        & 55.4 (55.4/55.4)       \\ \bottomrule
		\end{tabular}
	\end{small}
\end{table*}

\textbf{Additional Results}:
Tables~\ref{table:uci-gated-accuracies} and~\ref{table:uci-relu-accuracies} report median test accuracies as well as first and third quartiles for the convex and non-convex formulations with gated ReLU and ReLU activations, respectively.
Note that these results are identical to those the provided in the main paper (\cref{table:non-convex-solvers}) but for the inclusion of variance/distribution information in the form of quartiles.


\section{Image Classification}\label{app:image-datasets}

\textbf{Experimental Details}:
The MNIST and CIFAR-10 datasets are high-dimensional, with \((n, d) = (60000, 784)\) and \((50000,3072)\), respectively.
As such, we require a large number of neurons for both problems, for which we use \(m=5000\) and \(m=4000\) neurons respectively.
Both datasets are normalized column-wise, and squared loss is used as the objective.
Activation patterns are generated by sampling \( u_i \) from a distribution that samples a \(3 \times 3\) patch uniformly from the image, then sampling values for that patch from a standard Gaussian distribution, with all other values set to zero.
This technique is used for both convex and non-convex architectures.
We use the extensions of the C-GReLU and NC-ReLU to multi-class problems as given in \cref{app:multi-class-extension}.

For the NC-GReLU experiments, for all optimizers, we consider a learning rate of \(1.0, 0.1, 0.01\).
We use a momentum parameter of 0.9 for SGD.
To improve convergence, the step size was decayed by a factor of 2 every 200 epochs, and the networks were trained for a maximum of 1000 epochs.
We use a batch size of 10\% of the training data.
For the C-GReLU experiments, no R-FISTA optimizer parameters are tuned--we fix the initial step size to 0.1, with quadratic backtracking with \(\beta = 0.8\), and forward-tracking with \(\alpha = 1.2\) and \(c=5\).
For all methods, we consider regularization parameters \(\lambda \in [10^{-3}, 10^{-4}, 10^{-5}, 10^{-6}, 10^{-7}]\), and choose the one with the best accuracy on the validation set, which is chosen to be a random subset of 20\% of the training data.
All models are trained with an NVIDIA Titan X GPU with 12GB RAM.
\\\\
For G-ReLU, a value of \(\lambda=10^{-7}\) was chosen for MNIST and \(\lambda=10^{-6}\) for CIFAR-10.
For SGD, values of \((\eta, \lambda) = (1.0, 10^{-7})\) were chosen for MNIST and \((\eta, \lambda) = (1.0, 10^{-5})\) for CIFAR-10.
For Adam, values of \((\eta, \lambda) = (0.01, 10^{-6})\) were chosen for MNIST and \((\eta, \lambda) = (0.01, 10^{-4})\) for CIFAR-10.
For Adagrad, values of \((\eta, \lambda) = (0.01, 10^{-7})\) were chosen for MNIST and \((\eta, \lambda) = (0.01, 10^{-5})\)  for CIFAR-10.



\section{Default Optimization Parameters}\label{app:default-parameters}

In this section, we report the standard parameter settings for our optimizers.
We use these parameters in all experiments unless explicitly stated otherwise.
Note that data normalization (Appendix~\ref{app:data-normalization}) is applied in all experiments for both the convex and non-convex training problems.

\subsection{R-FISTA}

We set the backtracking parameter to \( \beta = 0.8 \) and the forward-tracking parameter to \( \alpha = 1.25 \).
For the step-size initialization strategy, we set the threshold to be \( c = 5.0 \).
We set the first step-size to be \( \eta_0 = 1.0 \).
The restart strategy detailed in Section~\ref{sec:efficient-fista} is always used unless it is explicitly stated otherwise.
Finally, we consider the optimizer to have (approximately) converged when the minimum-norm subgradient has \( \ell_2 \)-norm less than or equal to \( 10^{-3} \).
In practice we check the equivalent condition on the squared gradient norm with the threshold \( 10^{-6} \).
We always initialize the model weights as \( v_i = 0 \) for each \( D_i \in \tilde \calD \).

\subsection{AL Method}

We set the initial penalty parameter to be \( \delta = 100 \) and use the windowing heuristic with \( r_u = 10^{-2} \) and \( r_l = 10^{-3} \).
The dual parameters are initialization at \( 0 \), as are the primal parameters.
Note that we always warm-start the optimization of the augmented Lagrangian at the solution to the previous iteration's optimization problem.
The convergence tolerance when checking for satisfaction of the windowing heuristic is set to be \( \text{tol} = 10^{-3/2} \).
If the constraint gap is larger than \( r_u \), we increase \( \delta \) as \( \delta \gets 2 * \delta \) and repeat the procedure.
If \( c_{\text{gap}} < r_l \), we set \( \delta \gets \delta / 2 \) and also change the convergence to be \( \text{tol} \gets \text{tol} / 2 \).

The convergence tolerance for minimization of the augmented Lagrangian once the window heuristic is satisfied is \( \text{tol} = 10^{-3} \).
We consider the AL method to have approximately converged when \( c_{\text{gap}} \leq 10^{-3} \) and the minimum norm subgradient of the augmented Lagrangian (with respect to the primal parameters) is also less than \( 10^{-3} \).

To solve Equation~\eqref{eq:al-subroutine}, we use R-FISTA with the standard configuration as outlined in the previous section.
We enforce a maximum of \( 1000 \) iterations for the sub-solver, meaning that we execute a step of the AL method after at most \( 1000 \) iterations of R-FISTA regardless of the termination tolerances.
In general, we permit as many ``outer'' iterations of the AL method as needed since these do not require gradient computations, but limit the overall optimization procedure to \( 10000 \) iterations of R-FISTA.


\section{UCI Datasets}\label{app:uci-datasets}

We use the binary and multi-class classification datasets from the UCI machine learning repository~\citep{dua2019uci} as pre-processed by \citet{delgado2014hundreds}.
Note that we do not use the same training/validation/test procedure as \citet{delgado2014hundreds}, since this is known to have test-set leakage.
We applied the following selection rules to decide which datasets to retain for our experiments:
\begin{itemize}
	\item at least 150 examples and 5 features;
	\item no more than 50000 examples and 10 classes;
	\item no duplicated datasets with different targets or features.
\end{itemize}
This left the following 73 datasets from the original collection of 121:
\texttt{abalone},
\texttt{adult},
\texttt{annealing},
\texttt{bank},
\texttt{breast-cancer},
\texttt{breast-cancer-wisc-diag},
\texttt{car},
\texttt{cardiotocography-3clases},
\texttt{chess-krvkp},
\texttt{congressional-voting},
\texttt{conn-bench-sonar-mines-rocks},
\texttt{contrac},
\texttt{credit-approval},
\texttt{cylinder-bands},
\texttt{dermatology},
\texttt{ecoli},
\texttt{energy-y1},
\texttt{flags},
\texttt{glass},
\texttt{heart-cleveland},
\texttt{heart-hungarian},
\texttt{heart-va},
\texttt{hepatitis},
\texttt{hill-valley},
\texttt{horse-colic},
\texttt{ilpd-indian-liver},
\texttt{image-segmentation},
\texttt{ionosphere},
\texttt{led-display},
\texttt{low-res-spect},
\texttt{magic},
\texttt{mammographic},
\texttt{molec-biol-splice},
\texttt{monks-2},
\texttt{monks-3},
\texttt{mushroom},
\texttt{musk-1},
\texttt{musk-2},
\texttt{nursery},
\texttt{oocytes\_merluccius\_nucleus\_4d},
\texttt{oocytes\_trisopterus\_nucleus\_2f},
\texttt{optical},
\texttt{ozone},
\texttt{page-blocks},
\texttt{parkinsons},
\texttt{pendigits},
\texttt{pima},
\texttt{planning},
\texttt{primary-tumor},
\texttt{ringnorm},
\texttt{seeds},
\texttt{semeion},
\texttt{spambase},
\texttt{statlog-australian-credit},
\texttt{statlog-german-credit},
\texttt{statlog-heart},
\texttt{statlog-image},
\texttt{statlog-landsat},
\texttt{statlog-vehicle},
\texttt{steel-plates},
\texttt{synthetic-control},
\texttt{teaching},
\texttt{thyroid},
\texttt{tic-tac-toe},
\texttt{twonorm},
\texttt{vertebral-column-2clases},
\texttt{wall-following},
\texttt{waveform},
\texttt{waveform-noise},
\texttt{wine},
\texttt{wine-quality-red},
\texttt{wine-quality-white},
\texttt{yeast}

\textbf{Optimization Performance}:
For our experiments evaluating optimization performance, we considered all 73 datasets and generated \( 6 * 73 = 438 \) optimization problems by considering the following grid of regularization parameters:
\[
	\lambda \in \cbr{1 \times 10^{-5},
		6.31 \times 10^{-5},
		3.98 \times 10^{-4},
		2.51 \times 10^{-3},
		1.58 \times 10^{-2},
		1.0 \times 10^{-1}}
\]
We did a single train/test split for each dataset and report optimization metrics on the training set only.
The test was used for heuristic ``sanity checks'' of the final models.

\textbf{Model Performance}:
For our experiments evaluating generalization or test performance of different models, we randomly selected a subset of the filtered UCI datasets.
We report the regularization parameters considered for each experiment in the appropriate section.

\end{document}